\newcommand{\diag}{\operatorname{diag}}
\newcommand{\argmax}{\operatornamewithlimits{argmax}}
\newcommand{\argmin}{\operatornamewithlimits{argmin}}
\newtheorem{prop}{Proposition}
\begin{document}
%
\title{Free Energy Node Embedding via Generalized Skip-gram with Negative Sampling}
%
%
%
%

\author{Yu Zhu,~\IEEEmembership{Student Member,~IEEE,}
        Ananthram Swami,~\IEEEmembership{Fellow,~IEEE,}
        and Santiago Segarra,~\IEEEmembership{Senior Member,~IEEE}
\IEEEcompsocitemizethanks{\IEEEcompsocthanksitem Y. Zhu and S. Segarra are with the Dept. of ECE, Rice University.
\protect\\
Email: \{yz126, segarra\}@rice.edu.
\IEEEcompsocthanksitem A. Swami is with the U.S. Army’s DEVCOM Army Research Laboratory.
\protect\\
Email: ananthram.swami.civ@army.mil.
}
}

\IEEEtitleabstractindextext{%
\begin{abstract}
A widely established set of unsupervised node embedding methods can be interpreted as consisting of two distinctive steps: i) the definition of a similarity matrix based on the graph of interest followed by ii) an explicit or implicit factorization of such matrix.
Inspired by this viewpoint, we propose improvements in both steps of the framework. 
On the one hand, we propose to encode node similarities based on the free energy distance, which interpolates between the shortest path and the commute time distances, thus, providing an additional degree of flexibility. 
On the other hand, we propose a matrix factorization method based on a loss function that generalizes that of the skip-gram model with negative sampling to arbitrary similarity matrices.
Compared with factorizations based on the widely used $\ell_2$ loss, the proposed method can better preserve node pairs associated with higher similarity scores. 
Moreover, it can be easily implemented using advanced automatic differentiation toolkits and computed efficiently by leveraging GPU resources. 
Node clustering, node classification, and link prediction experiments on real-world datasets demonstrate the effectiveness of incorporating free-energy-based similarities as well as the proposed matrix factorization compared with state-of-the-art alternatives.
\end{abstract}

\begin{IEEEkeywords}
Node embedding, network representation learning, free energy distance, matrix factorization.
\end{IEEEkeywords}}

\maketitle

\IEEEdisplaynontitleabstractindextext

%
\IEEEpeerreviewmaketitle

\IEEEraisesectionheading{\section{Introduction}}

%
%
%
%

\IEEEPARstart{T}{he} field of node embedding (also called network representation learning) has attracted a lot of interest and achieved significant progress in recent years. 
The goal of node embedding is to encode nodes in a low-dimensional space so that similarity in the embedding space (e.g., the inner product of embedding vectors) approximates similarity in the original graph (e.g., homophily or structural equivalence)~\cite{survey1}. 
The produced low-dimensional embeddings can be used in several network analysis tasks such as node classification, link prediction, community detection, and visualization. 
They can also be used as the input to downstream graph neural networks with applications in varied fields including wireless communications~\cite{chowdhury2020unfolding, zhao2020distributed}, traffic prediction~\cite{jia2019graph, roddenberry2019hodgenet}, and neuroscience~\cite{ma2017multi, yue2019graph}. 
In this paper, we focus on the popular subclass of unsupervised node embedding, where only the graph structure is given and no extra information about the nodes is available. 

Motivated by the empirical success of some of the earlier methods~\cite{deepwalk, node2vec}, a number of approaches have been proposed for this problem over the last five years along with several survey papers that provide a taxonomy of these approaches under unified frameworks~\cite{survey1, survey2, survey3, survey4}. 
For example, the paper \cite{survey1} develops an encoder-decoder framework consisting of four methodological components (a pairwise proximity function, an encoder function, a decoder function, and a loss function), where different existing methods boil down to making distinct choices for these components.      

In this paper, we provide an alternative unified framework to encompass existing methods. 
The proposed framework consists of two steps: i) Encoding pairwise node similarities into a matrix $\mathbf{S}$, and ii) Factorizing $\mathbf{S}$ to obtain node embeddings. 
For the first step, the similarity matrix $\mathbf{S}$ can be directly computed from the adjacency matrix as in  the case of the personalized PageRank~\cite{hope}, or it can also be implicitly built through sampling procedures such as those based on random walks~\cite{deepwalk, node2vec}. 
For the second step, the matrix factorization can be implemented explicitly using variations of the singular value decomposition (SVD)~\cite{hope, grarep, netmf, AROPE, prone} or implicitly (as in \cite{deepwalk, node2vec}) by leveraging the framework of word2vec~\cite{word2vec_1, word2vec_2} or its derivatives, namely the skip-gram model together with negative sampling (NS) or noise contrastive estimation~\cite{nce1, nce2}. 
The proposed unified framework motivates two directions for designing better node embeddings by finding: i) More meaningful and expressive similarity matrices, and ii) Efficient matrix factorizations that preserve key aspects of those matrices.

In order to find better similarity measures, we rely on the free energy (FE) distance~\cite{fe_dist}. 
Using a single parameter, the FE distance interpolates between the shortest path (SP) and the commute time (CT) distances, which represent two extreme notions of distances on graphs. 
Indeed, while the SP distance only considers the path of minimum length between every pair of nodes, the CT distance takes into account all the (hitting) paths between nodes and has been used in various network analysis tasks including node embeddings~\cite{simnet}. 
However, it has been found that the CT distance might be misleading for large graphs~\cite{ct_problem, ct_problem2, ct_problem3}. 
This motivates us to consider the FE distance which is more flexible and can be tailored for graphs of different types and sizes. 
The FE distance has several beneficial properties -- it is a graph geodetic and can be computed in closed form --, and it has shown superior performance in capturing information useful for node clustering and classification compared to other measures that interpolate between the SP and the CT distances~\cite{fe_dist}. 

To improve the matrix factorization step, we modify the loss function in the framework of skip-gram with NS to accommodate arbitrary similarity matrices, including those with unbounded negative entries such as the similarity matrix associated with DeepWalk~\cite{deepwalk, netmf}.
There is a series of embedding methods including but not limited to~\cite{deepwalk,line,node2vec,walklets,app,metapath2vec,mines,verse} that leverage this framework while considering different similarity measures such as node2vec~\cite{node2vec} (considers higher-order proximities) and LINE~\cite{line} (considers the first-order and second-order proximities). 
An important work in this thread is VERSE~\cite{verse}, which generalizes the framework to arbitrary similarity matrices satisfying the condition that each row is amenable to being interpreted as a probability distribution. Our proposed matrix factorization method gets rid of this condition, thus further extending VERSE.
Moreover, we demonstrate that the proposed factorization can better preserve node pairs having higher similarity scores compared to SVD-related methods based on the $\ell_2$ loss function.
Finally, relying on the use of TensorFlow \cite{tf}, we provide an efficient implementation of the proposed factorization by leveraging GPU resources.

We propose a new node embedding method that combines these two improvements, namely the FE distance based similarities and the generalized matrix factorization. 
We evaluate the proposed method considering three downstream tasks (node clustering, node classification, and link prediction) on five real-world datasets and compare the attained performance with that of multiple state-of-the-art alternatives.
The experimental results validate the superiority of the proposed method. We also show that it can scale to large and sparse networks.

We summarize our contributions as follows.
\begin{itemize}
\item[1.] We put forth a new unified framework which can better organize and help understand a wide range of unsupervised node embedding methods.
\item[2.] We propose the use of free energy distance in computing the similarity matrix which is able to adapt to graphs of different sizes and types as well as different downstream tasks.
\item[3.] We develop a matrix factorization method that works for arbitrary similarity measures, better preserves node pairs having higher proximity scores, can be easily implemented using automatic differentiation toolkits, and can be computed efficiently by leveraging GPU resources.
\item[4.] We validate the effectiveness of combining the FE distance based similarities and the generalized matrix factorization via numerical experiments, and demonstrate the scalability of the proposed embedding method.
\end{itemize}

The rest of this article is structured as follows. 
The problem statement is given in Section~\ref{s:ps}. 
Section~\ref{s:exist} summarizes existing matrix factorization related approaches in a unified framework. 
The proposed node embedding method is discussed in Section~\ref{s:propose}, where the FE distance and the similarity measure based on it are introduced in Section~\ref{ss:fe} and the generalized matrix factorization method is discussed in Section~\ref{ss:vmf}. 
Numerical experiments are presented in Section~\ref{s:experiments} together with an ablation study and discussions on parameter sensitivity.
Algorithm scalability is discussed in Section~\ref{s:scalability}.
Closing remarks are included in Section~\ref{s:conclusion}.


\section{Problem Statement}\label{s:ps}

We consider an undirected and weighted graph $\mathcal{G}=(\mathcal{V},\mathcal{E},\mathbf{A})$ where $\mathcal{V}=\{v_1,v_2,\cdots,v_n\}$ is the node set of cardinality $n$, $\mathcal{E}$ is the edge set such that the unordered pair $(v_i,v_j)$ belongs to $\mathcal{E}$ iff there exists an edge between $v_i$ and $v_j$, and $\mathbf{A}\in\mathbb{R}^{n \times n}$ is the weighted adjacency matrix with the element $A_{ij}=A_{ji}>0$ indicating the edge weight if $(v_i,v_j)\in\mathcal{E}$ and $A_{ij}=A_{ji}=0$ otherwise. 
The edge weights can be interpreted as a measure of similarity between connected nodes.
We also introduce two other graph-related matrices that will be instrumental throughout the paper, namely, the (diagonal) degree matrix $\mathbf{D}=\diag(\mathbf{A1})$, where $\mathbf{1}$ refers to the all-ones vector, and the random-walk transition probability matrix $\mathbf{P}=\mathbf{D}^{-1}\mathbf{A}$.
With this notation in place, a formal problem statement follows.

\vspace{0.3em}

\noindent\textbf{Problem 1 (Unsupervised node embedding).} 
Given a graph $\mathcal{G}=(\mathcal{V},\mathcal{E},\mathbf{A})$, the goal is to learn a node embedding function $f: \mathcal{V} \to \mathbb{R}^d$ such that $d\ll n$ and $f$ preserves some proximity measure of interest on $\mathcal{G}$.


\section{Unified Framework for Existing Work}\label{s:exist}

We propose a unified framework to summarize and help better understand existing unsupervised node embedding methods. 
The proposed framework can be viewed as a two-step procedure.
The first step is to \emph{encode pairwise node similarities into a matrix $\mathbf{S}$}, whose entry $S_{ij}$ reflects the similarity between $v_i$ and $v_j$. 
This similarity is typically related to homophily, structural equivalence, or a mix of both. 
The second step is to \emph{factorize the constructed similarity matrix $\mathbf{S}$}. 
More precisely, the goal is to identify two matrices $\mathbf{U},\mathbf{V}\in\mathbb{R}^{n\times d}$ such that $\mathbf{S}\approx \mathbf{UV}^{\top}$, and then use the rows of $\mathbf{U}=[\mathbf{u}_1,\cdots,\mathbf{u}_n]^{\top}$ as node embeddings, i.e., $f(v_i) = \mathbf{u}_i$.

\begin{figure}
\centering
\includegraphics[width=0.48\textwidth]{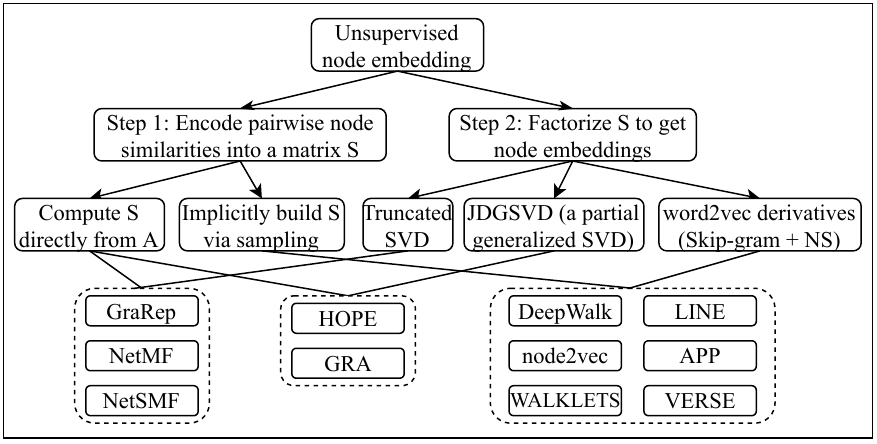}	
\caption{\small A unified framework for existing node embedding methods based on matrix factorizations.}
\label{fig:framework}
\end{figure}

We categorize related work according to their adopted methods in each step; see Fig.~\ref{fig:framework}. For the first step, existing works can be divided into the two following categories.
\begin{itemize}	
\item \emph{Explicitly compute the similarity matrix $\mathbf{S}$ from $\mathbf{A}$.}

\textbf{HOPE} \cite{hope} considers four common similarity measures (such as the Katz Index and the personalized PageRank) that can be readily computed from $\mathbf{A}$, whereas \textbf{GraRep} \cite{grarep} adopts positive $k$-step log probability matrices for different values of $k$.
\textbf{GRA} \cite{gra} proposes a Global Resource Allocation similarity measure, which is a variation over Katz that selectively assigns a high similarity score to pairs of nodes that have a large number of paths between them. 
\textbf{NetMF} \cite{netmf} and \textbf{NetSMF} \cite{netsmf} consider the similarity matrix  implicitly factorized in \cite{deepwalk} [cf.~\eqref{E:dw}].   

\item \emph{Implicitly build a similarity matrix via sampling.} 

There are two main sampling methods considered in the literature. The first sampling method is to generate random walks on the graph. In this context, for a node $v_i$, the nodes that appear within a window centered at $v_i$ are considered as the context of $v_i$. Two nodes are deemed as being similar to each other if they have similar contexts.
Indeed, \textbf{DeepWalk} \cite{deepwalk} was the first to propose to view random walks in graphs as `sentences', so that the problem of node embedding can be recast as a word embedding problem and the word2vec framework can be leveraged. 
Variations of this notion include \textbf{node2vec} \cite{node2vec}, which considers random walks with tunable drift parameters to enable smooth interpolation between breadth first search and depth first search, and
\textbf{WALKLETS} \cite{walklets}, which proposes to subsample random walks for capturing multi-scale relationships between nodes.

The second method involves sampling positive and negative node pairs directly.
For a pair of nodes $(v_i,v_j)$, we consider it as a positive sample if $v_j$ is likely to appear as the context of $v_i$ in random walks, otherwise we view it as a negative sample.
\textbf{LINE} \cite{line} draws positive samples based on second-order proximity weights whereas negative samples are drawn from a degree-dependent probability distribution.
\textbf{APP} \cite{app} uses the Monte-Carlo end-point sampling method \cite{SampleEndPoint} to generate positive samples in order to approximate the personalized PageRank value between every node pair and negative samples are drawn from a uniform distribution.
\textbf{VERSE} \cite{verse} generalizes the above ideas to any similarity matrix whose rows define a valid probability distribution. 
Hence, for a node $v_i$, another node $v_j$ is sampled from the distribution defined by the $i$th row of the similarity matrix for generating positive samples and negative samples are drawn from a uniform distribution.
\end{itemize}

\noindent
For the second step, the presented prior work falls into one of the following three categories.
\begin{itemize}
\item \emph{Truncated SVD.} 

SVD is a popular matrix factorization method for node embedding. 
In order to make the algorithms scalable for large graphs, existing works such as \textbf{GraRep} and \textbf{NetMF} set negative entries in the original similarity matrix to zero to construct a sparse matrix, and \textbf{NetSMF} also leverages a random-walk matrix-polynomial sparsification technique to further sparsity the similarity matrix. 
Efficient truncated SVD methods \cite{svd1, svd2, svd4} can be applied to these sparse matrices.  
All these methods deal with a fundamental trade-off where sparser matrices are computationally preferable but also discard more information from the original similarity matrix.

\item \emph{A partial generalized SVD (JDGSVD).} 

It has been proposed in \textbf{HOPE} that if the similarity matrix can be written in the form of $\mathbf{S}=\mathbf{M}_g^{-1}\mathbf{M}_l$ where $\mathbf{M}_g$ and $\mathbf{M}_l$ are both matrix polynomials, the original SVD problem can be transformed into a generalized SVD problem for fast computation \cite{svd3, svd5}. \textbf{GRA} also adopts this method.
\item \emph{word2vec derivatives.} 

Existing works including \textbf{DeepWalk}, \textbf{node2vec}, \textbf{WALKLETS}, \textbf{LINE}, \textbf{APP}, and \textbf{VERSE} leverage the framework of word2vec or its derivatives. 
Most of them adopt the skip-gram model proposed in word2vec together with noise contrastive estimation or NS for efficient computation. 
An iterative process is adopted where, in each step, a positive node pair $(v_i,v_j)$ and one or multiple negative pairs $\{(v_i,v_{j'})\}$ are sampled, and gradient ascent method is applied to optimize the objective $\log\sigma(\mathbf{u}_i^\top \mathbf{v}_j) + \sum_{j'}\log\sigma(-\mathbf{u}_i^\top \mathbf{v}_{j'})$ where $\sigma(x)=1/(1+e^{-x})$ is the sigmoid function. 
In this way, the sought embeddings $f(v_i) = \mathbf{u}_i$ are found; we expand on this idea in Section~\ref{ss:vmf}. 
Moreover, it has been shown that this process implicitly factorizes interpretable similarity matrices \cite{word_mf, netmf}.
\end{itemize}


\section{Proposed Method}\label{s:propose}

We leverage the framework introduced in Section~\ref{s:exist} to describe our proposed modifications to both steps in Sections~\ref{ss:fe} and~\ref{ss:vmf}, respectively.

\subsection{Similarity based on the free energy distance}\label{ss:fe}

In defining the similarity matrix $\mathbf{S}$, we consider the FE distance on graphs.
In order to define this distance, let us first introduce the following notions. 
We assume that every pair of nodes $(v_i,v_j)$ is associated with a transition cost $C_{ij}$ such that $0<C_{ij}<\infty$ if $(v_i,v_j)\in\mathcal{E}$ and $C_{ij}=\infty$ otherwise.
Denote by $\mathcal{P}_{st}$ the set of \emph{hitting} (or \emph{absorbing}) paths from node $s$ to node $t$, where node $t$ can only appear in a hitting path as the terminal node. 
For a path $p=(v^0=s,v^1,\cdots,v^{\ell-1},v^{\ell}=t)\in\mathcal{P}_{st}$, its cost and \emph{reference probability} are respectively given by $c(p)=C_{sv^1}+C_{v^1v^2}+\cdots+C_{v^{\ell-1}t}$ and $\mathbb{P}^{\mathrm{ref}}_{st}(p)=P_{sv^1}P_{v^1v^2}\cdots P_{v^{\ell-1}t}$ where, we recall, the transition probability matrix $\mathbf{P}$ was defined in Section~\ref{s:ps}.

The edge costs $C_{ij}$ can be defined according to attributes of the edges or their endpoints in order to bias the probability distribution of selecting a path~\cite{franccoisse2017bag}.
For example, the cost of jumping to a node can be set proportional to its degree in order to penalize paths visiting hubs.
When there are no natural costs assigned to the edges, it is common to set $C_{ij}=1/A_{ij}$ (the edge weights and the costs are analogous to conductance and resistance in an electric network, respectively)~\cite{fe_dist, franccoisse2017bag, devooght2014random, fe_fast}.
We adopt this convention in the paper.

The FE distance is proposed as a trade-off between the SP and the CT distances. 
The SP distance between two nodes $s$ and $t$ is the minimum cost of a path between the two nodes, i.e., $\Delta^{\mathrm{SP}}_{st}=\min_{p\in\mathcal{P}_{st}} c(p)$. 
It only considers the minimum cost path between these two nodes and does not integrate the information of other paths, thus it cannot capture the global structure of the graph. 
In many practical problems, for a constant SP distance, nodes should be considered to be closer to each other if they are connected by more paths~\cite{franccoisse2017bag}. 
In other words, the SP distance ignores the number or density of paths between two nodes.
The CT distance between two nodes is defined as $\Delta^{\mathrm{CT}}_{st}=H_{st}+H_{ts}$ where the expected hitting time $H_{st}=\sum_{p\in\mathcal{P}_{st}}\mathbb{P}^{\mathrm{ref}}_{st}(p)c(p)$ is the expected cost that it takes a random walk to travel from $s$ to $t$ for the first time. 
It has been shown in~\cite{ct_problem3} that as the graph size increases, $H_{st}$ approaches the reciprocal of the degree of node $t$ up to some constant factor. 
Hence, $\Delta^{\mathrm{CT}}_{st}$ becomes only dependent on trivial local properties of the graph, i.e., the degrees of $s$ and $t$. 
An intuitive explanation of this phenomenon is that in large graphs, it takes a long time for the random walk to travel through a substantial part of the graph. 
Before the random walk comes close to the target node, it has already `forgotten' the starting node. 
Therefore, the hitting time $H_{st}$ becomes only dependent on the degree of the target node $t$, which can be understood as the likelihood that the random walk hits $t$ once it is in its neighborhood~\cite{ct_problem3}.
By introducing an extra parameter, the FE distance is able to interpolate between the SP and the CT distances and overcome their drawbacks. 

The FE distance is obtained as the solution to the following optimization problem over the probability distributions on the paths in $\mathcal{P}_{st}$, 
%
%
\begin{align}\label{E:FE_opt}
\mathbb{P}^{\mathrm{FE}}	_{st} &= \argmin_{\mathbb{P}_{st}}  \sum_{p\in\mathcal{P}_{st}} \mathbb{P}_{st}(p)c(p)+\frac{1}{\eta}D_{\mathrm{KL}}(\mathbb{P}_{st} \parallel \mathbb{P}^{\mathrm{ref}}_{st}), \nonumber \\
&\mathrm{s.t.} \sum_{p\in\mathcal{P}_{st}}\mathbb{P}_{st}(p)=1,
\end{align}
where $\eta>0$ is a tunable parameter and 
$$
D_{\mathrm{KL}}(\mathbb{P}_{st} \parallel \mathbb{P}^{\mathrm{ref}}_{st}) = \sum_{p\in\mathcal{P}_{st}} \mathbb{P}_{st}(p)\log\frac{\mathbb{P}_{st}(p)}{\mathbb{P}^{\mathrm{ref}}_{st}(p)}
$$
is the K\"{u}llback-Leibler divergence (also called relative entropy) from the reference probability distribution $\mathbb{P}^{\mathrm{ref}}_{st}$ to $\mathbb{P}_{st}$. 
The measure $D_{\mathrm{KL}}(\mathbb{P}_{st} \parallel \mathbb{P}^{\mathrm{ref}}_{st})$ quantifies how similar $\mathbb{P}_{st}$ is to $\mathbb{P}^{\mathrm{ref}}_{st}$.
It is non-negative, and equals zero if and only if $\mathbb{P}_{st}$ and $\mathbb{P}^{\mathrm{ref}}_{st}$ are the same distribution.

Denote the objective function of~\eqref{E:FE_opt} by $\phi(\mathbb{P}_{st})$, which is the free energy of a thermodynamical system with temperature $1/\eta$ and state transition probabilities $\mathbb{P}_{st}$~\cite{fe_dist, peliti2011statistical}. 
The FE distance between node $s$ and node $t$ is defined as $\Delta^{\mathrm{FE}}_{st}=(\phi(\mathbb{P}^{\mathrm{FE}}_{st})+\phi(\mathbb{P}^{\mathrm{FE}}_{ts}))/2$. When $\eta\to 0^{+}$, the second term in the objective dominates, thus $\mathbb{P}^{\mathrm{FE}}_{st}\to\mathbb{P}^{\mathrm{ref}}_{st}$ and the FE distance converges to the CT distance (divided by 2); when $\eta\to\infty$, the first term in the objective dominates, hence $\mathbb{P}^{\mathrm{FE}}_{st}$ will be more and more peaked around the shortest path and the FE distance converges to the SP distance. 


In fact, these three types of distances (CT, SP, and FE) can be understood under the bag-of-paths framework~\cite{franccoisse2017bag}, in which a probability distribution $\mathbb{P}_{st}$ is assigned over all hitting paths between any node pair $(s,t)$.
The SP distance assigns probability $1$ to the minimum cost path and ignores other paths; the CT distance considers all the hitting paths and the probability distribution is determined by the natural random walk process (corresponding to $\mathbb{P}^{\mathrm{ref}}_{st}$). 
Compared with the SP distance, the FE distance also considers suboptimal paths apart from the minimum cost path.
Compared with the CT distance, the parameter $\eta$ in the FE distance can adjust the probability distribution to find a trade-off between exploitation and exploration.
As $\eta$ increases, low-cost paths will be assigned a higher probability while high-cost paths will have a lower probability of being sampled from the bag~\cite{franccoisse2017bag}.
Hence, the aforementioned issues existing in the SP and the CT distances can be alleviated.

There are a few other measures that generalize the SP and the CT distances including the SP-CT combination distance (a simple convex combination of these two distances), the logarithmic forest distance~\cite{chebotarev2011class}, the $p$-resistance distance~\cite{alamgir2011phase} and the randomized shortest path (RSP) dissimilarity~\cite{yen2008family, saerens2009randomized}.
The FE distance and the RSP dissimilarity do not always lie between the SP and the CT distances for intermediate values of $\eta$, thus offering more flexibility than the other mentioned measures (cf. Fig. 2 in~\cite{fe_dist}).
We favor the use of the FE distance because, unlike the RSP dissimilarity, it satisfies the triangle inequality and, thus, defines a bona fide metric.
A detailed comparison of these measures can be found in~\cite{fe_dist}.

We denote the FE distance matrix by $\pmb{\Delta}_\eta^{\mathrm{FE}}$, where we have made explicit the dependence on the tunable parameter $\eta$.  
To compute the FE distances on a graph, we do not have to solve the optimization problem~\eqref{E:FE_opt} for every node pair $(s,t)$. 
The entire matrix $\pmb{\Delta}_\eta^{\mathrm{FE}}$ can be computed in closed form using the algorithm proposed in~\cite{fe_dist}, the computational complexity of which is $\mathcal{O}(n^3)$.
Moreover, an efficient method has been proposed in~\cite{fe_fast}, which reduces the complexity to $\mathcal{O}(n|\mathcal{E}|)$ and thus is able to scale on large and sparse graphs.
A detailed discussion regarding the issue of scalability can be found in Section~\ref{s:scalability}.

We convert the distance matrix $\pmb{\Delta}_\eta^{\mathrm{FE}}$ into a similarity matrix via
\begin{equation}\label{e:fe_sim}
\mathbf{S}_{\eta,b,\gamma}^{\mathrm{FE}} = \gamma (-\pmb{\Delta}_\eta^{\mathrm{FE}} + b),
\end{equation}
where parameters $b$ and $\gamma$ respectively control the shift and the scale of the similarity values.  
We would like to clarify that we do not require $\mathbf{S}_{\eta,b,\gamma}^{\mathrm{FE}}$ to be a non-negative matrix, and the choice of parameter $b$ is not motivated by this concern.
We have further studied the influence of different choices of parameters $\eta$, $b$ and $\gamma$ on performance in Section~\ref{Ss:parameter_sensitivity}.

\subsection{Generalized matrix factorization}\label{ss:vmf}

Based on the FE similarity matrix in~\eqref{e:fe_sim}, we propose the following solution to Problem 1
\begin{align}\label{e:fe_embed}
&f_{\eta,b,\gamma,d}^{\mathrm{FE}} (v_i) = \mathbf{u}^*_i, \qquad \text{with} \\
& \{\mathbf{u}^*_i\} \! = \! \argmax_{\{\mathbf{u}_i\in\mathbb{R}^d\}_{i=1}^n} \sum_{i \neq j} e^{[\mathbf{S}_{\eta,b,\gamma}^{\mathrm{FE}}]_{ij}} \log\sigma(\mathbf{u}_i^\top\mathbf{u}_j) + \log\sigma(-\mathbf{u}_i^\top\mathbf{u}_j), \nonumber
\end{align} 
where, we recall, $\sigma(\cdot)$ is the sigmoid function and $[\mathbf{S}_{\eta,b,\gamma}^{\mathrm{FE}}]_{ij}$ denotes the $(i,j)$th entry of $\mathbf{S}_{\eta,b,\gamma}^{\mathrm{FE}}$.
The embedding $f_{\eta,b,\gamma,d}^{\mathrm{FE}} (v_i)$ depends on four hyperparameters $\eta,b,\gamma,d$ where $\eta$ is inherited from the FE distance in~\eqref{E:FE_opt}, $b$ and $\gamma$ help control the shift and the scale of the similarity matrix in~\eqref{e:fe_sim}, and $d$ is the embedding dimension.
By forming a matrix $\mathbf{U}^* \in \mathbb{R}^{n \times d}$ whose rows are $\mathbf{u}^{*\top}_i$, we obtain an implicit factorization of $ \mathbf{S}_{\eta,b,\gamma}^{\mathrm{FE}} \approx \mathbf{U}^* \mathbf{U}^{*\top}$.
We show that this is the case through the following more general result.

\begin{prop}\label{P:main_result}
Consider the following optimization problem 
\begin{align}\label{E:obj}
& \mathbf{U}^*, \mathbf{V}^*  = \argmax_{\mathbf{U}, \mathbf{V} \in \mathbb{R}^{n \times d}} \psi (\mathbf{U}, \mathbf{V}) \qquad \text{with} \\
& \psi (\mathbf{U}, \mathbf{V}) = \sum_{1\leq i , j\leq n} S^{+}_{ij}\log\sigma(\mathbf{u}_i^\top\mathbf{v}_j) + S^{-}_{ij}\log\sigma(-\mathbf{u}_i^\top \mathbf{v}_j), \nonumber
\end{align}
where $\mathbf{U}=[\mathbf{u}_1,\cdots,\mathbf{u}_n]^{\top}$ and $\mathbf{V}=[\mathbf{v}_1,\cdots,\mathbf{v}_n]^{\top}$.
Then, $\mathbf{S} \approx \mathbf{U}^* \mathbf{V}^{* \top}$ where the entries of $\mathbf{S}$ are given by $S_{ij}=\log(S^{+}_{ij}/S^{-}_{ij})$ and, for large enough $d$, we have that $\mathbf{S} = \mathbf{U}^* \mathbf{V}^{* \top}$.
\end{prop}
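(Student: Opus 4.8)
The plan is to exploit the fact that, when $d$ is large enough, the product $\mathbf{U}\mathbf{V}^{\top}$ can realize an arbitrary matrix, so that the concave objective $\psi$ decouples across the scalar variables $x_{ij} := \mathbf{u}_i^{\top}\mathbf{v}_j$ and can be optimized entrywise; this mirrors the argument used for word2vec in~\cite{word_mf, netmf}. Writing $\psi(\mathbf{U},\mathbf{V}) = \sum_{1\le i,j\le n} g_{ij}(x_{ij})$ with $g_{ij}(x) := S^{+}_{ij}\log\sigma(x) + S^{-}_{ij}\log\sigma(-x)$, I would first record that $g_{ij}$ is concave in $x$: using $\sigma' = \sigma(1-\sigma)$ one gets $(\log\sigma)''(x) = -\sigma(x)(1-\sigma(x)) \le 0$, and the same bound holds for $x\mapsto\log\sigma(-x)$, so concavity follows from $S^{+}_{ij}, S^{-}_{ij}\ge 0$; when moreover $S^{+}_{ij}, S^{-}_{ij} > 0$ the concavity is strict and the maximizer of $g_{ij}$ is unique and finite.

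Second, I would solve the first-order condition $g_{ij}'(x) = 0$. A direct computation gives $g_{ij}'(x) = S^{+}_{ij}\bigl(1-\sigma(x)\bigr) - S^{-}_{ij}\sigma(x) = S^{+}_{ij} - \bigl(S^{+}_{ij}+S^{-}_{ij}\bigr)\sigma(x)$, which vanishes exactly when $\sigma(x) = S^{+}_{ij}/(S^{+}_{ij}+S^{-}_{ij})$. Inverting the sigmoid through $x = \log\frac{\sigma(x)}{1-\sigma(x)}$ yields the unique maximizer $x^{*}_{ij} = \log(S^{+}_{ij}/S^{-}_{ij}) = S_{ij}$. This step is where positivity of $S^{+}_{ij}$ and $S^{-}_{ij}$ enters; in the embedding application~\eqref{e:fe_embed} we have $S^{-}_{ij}\equiv 1$ and $S^{+}_{ij} = e^{[\mathbf{S}_{\eta,b,\gamma}^{\mathrm{FE}}]_{ij}} > 0$, so the hypotheses hold.

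Third, I would translate the entrywise optimum back to the matrix level. Collecting $x_{ij}$ into $\mathbf{M} = [x_{ij}]$, the supremum of $\psi$ over $(\mathbf{U},\mathbf{V})\in\mathbb{R}^{n\times d}\times\mathbb{R}^{n\times d}$ equals the supremum of $\sum_{ij} g_{ij}(M_{ij})$ over matrices $\mathbf{M}$ of rank at most $d$. If $d \ge \operatorname{rank}(\mathbf{S})$ --- in particular whenever $d \ge n$ --- the rank constraint is inactive, so the unconstrained entrywise optimum $\mathbf{M} = \mathbf{S}$ is feasible; any factorization $\mathbf{S} = \mathbf{U}^{*}\mathbf{V}^{*\top}$ (e.g., obtained from an SVD of $\mathbf{S}$) is then a global maximizer of $\psi$, giving $\mathbf{U}^{*}\mathbf{V}^{*\top} = \mathbf{S}$ exactly. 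For smaller $d$ the rank constraint is active and the maximizer returns a rank-$d$ matrix that best fits $\mathbf{S}$ under the entrywise, curvature-weighted surrogate $\sum_{ij} g_{ij}$ rather than under the $\ell_2$ loss; this is the precise content of the claim $\mathbf{U}^{*}\mathbf{V}^{*\top}\approx\mathbf{S}$. I expect the main obstacle to be exactly this last point --- making the approximation quantitative --- since, unlike the squared-loss case, the weighted objective does not collapse to a truncated SVD of a fixed matrix, so one can only argue qualitatively that the local curvature $-g_{ij}''(x^{*}_{ij}) = S^{+}_{ij}S^{-}_{ij}/(S^{+}_{ij}+S^{-}_{ij})$ grows with $S^{+}_{ij}$, which is what drives the optimizer to prioritize matching the large entries of $\mathbf{S}$.
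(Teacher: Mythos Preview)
Your proof is correct and follows essentially the same approach as the paper: treat each inner product $x_{ij}=\mathbf{u}_i^\top\mathbf{v}_j$ as a free scalar, compute $g_{ij}'(x)=S^{+}_{ij}(1-\sigma(x))-S^{-}_{ij}\sigma(x)$, and set it to zero to obtain $x_{ij}^{*}=\log(S^{+}_{ij}/S^{-}_{ij})$. You are in fact more careful than the paper, which neither checks concavity of $g_{ij}$ nor spells out the rank condition $d\ge\operatorname{rank}(\mathbf{S})$ for exact recovery; the paper simply sets the derivative to zero and asserts the large-$d$ claim in one line.
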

\begin{proof}
Inspired by \cite{word_mf, Hilbert_MLE}, we take the derivative of the objective $\psi (\mathbf{U}, \mathbf{V})$ with respect to $\mathbf{u}_i^\top\mathbf{v}_j$ and get
\begin{align*}\label{E:derivative}
\frac{\partial\,\psi (\mathbf{U}, \mathbf{V})}{\partial\,\mathbf{u}_i^\top\mathbf{v}_j} &= S^{+}_{ij} [1-\sigma(\mathbf{u}_i^\top\mathbf{v}_j)] - S^{-}_{ij} \sigma(\mathbf{u}_i^\top\mathbf{v}_j) \nonumber \\
&= (S^{+}_{ij}+S^{-}_{ij}) \left[ \sigma\left( \log(S^{+}_{ij}/S^{-}_{ij}) \right) - \sigma(\mathbf{u}_i^\top\mathbf{v}_j) \right]\!,
\end{align*}
where the second equality follows from the identity $a\equiv(a+b)\sigma(\log(a/b))$. 
By setting the derivative to zero, we get $\mathbf{u}_i^\top\mathbf{v}_j = \log(S^{+}_{ij}/S^{-}_{ij})$. Therefore, problem \eqref{E:obj} implicitly factorizes the matrix whose $(i,j)$th entry is $\log(S^{+}_{ij}/S^{-}_{ij})$. Finally, for sufficiently large $d$, each inner product $\mathbf{u}_i^\top\mathbf{v}_j$ can be set independently and $\mathbf{S}$ can be reconstructed perfectly.
\end{proof}

\begin{figure}
\centering
\includegraphics[width=0.48\textwidth]{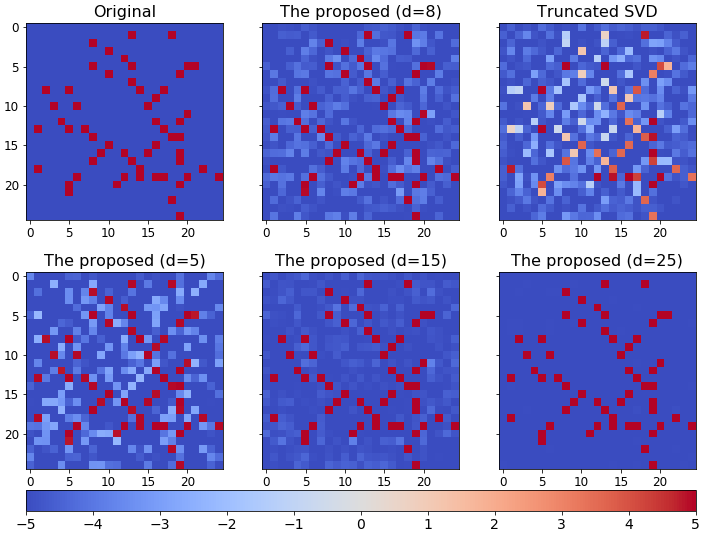}	
\caption{\small (top) One example matrix and the reconstructed ones obtained by GMF and truncated SVD. The embedding dimension $d$ is set to $8$. The proposed method can better preserve the edges (red). (bottom) The reconstructed matrices obtained by the proposed method with $d=5,15,25$ respectively. The recovery accuracy increases as $d$ increases, and a perfect recovery is achieved when $d$ is large enough.}
\label{fig:example}
\end{figure}

By setting $S^{+}_{ij} = e^{[\mathbf{S}_{\eta,b,\gamma}^{\mathrm{FE}}]_{ij}}$ and $S^{-}_{ij} = 1$ in Proposition~\ref{P:main_result}, it follows that~\eqref{e:fe_embed} is implicitly factorizing $\mathbf{S}_{\eta,b,\gamma}^{\mathrm{FE}}$.
However, note that other choices of $S^{+}_{ij}$ and $S^{-}_{ij}$ can lead to the same implicit factorization, and different choices might result in different embedding qualities.
From an empirical standpoint, two modifications can be observed when comparing~\eqref{e:fe_embed} with the generic decomposition in~\eqref{E:obj}.
First, in~\eqref{e:fe_embed} we exclude the diagonal entries ($i=j$) in the sum, since we are interested in encoding the similarities between (distinct) nodes.
Second, we set $\mathbf{V}=\mathbf{U}$ since we are factorizing a symmetric similarity matrix associated with an undirected graph. 
We refer to~\eqref{E:obj} as GMF for generalized matrix factorization and~\eqref{e:fe_embed} as GMF-FE.


\begin{table}
\caption{\small Summary of datasets considered.}\label{table-datasets}
\begin{center}
\begin{tabular}{c c c c c c c} 
 \Xhline{0.8pt}
 Dataset  & $|\mathcal{V}|$ & $|\mathcal{E}|$ & \#Labels & Multi-label  \\ 
 \hline
 CiteSeer  & $2,110$ & $3,668$ & $6$ & False  \\
 Cora  & $2,485$ & $5,069$ & $7$ & False  \\
 PPI  & $3,852$ & $37,841$ & $50$ & True  \\ 
 BlogCatalog  & $10,312$ & $333,983$ & $39$ & True  \\ 
 CoCit & $42,452$ & $194,410$ & $15$ & False \\
 \Xhline{0.8pt}
\end{tabular}
\end{center}
\end{table}

\begin{table}
\caption{\small Choices of operators for embedding node pairs.}\label{table-operator}
\begin{center}
\begin{tabular}{l l} 
 \Xhline{0.8pt}
 Operator     &   Definition  \\ 
 \hline
 Average      &   $\mathbf{e}_{ij}[k]=(\mathbf{u}_i[k] + \mathbf{u}_j[k])/2$   \\
 Hadamard     &   $\mathbf{e}_{ij}[k]=\mathbf{u}_i[k]\cdot \mathbf{u}_j[k]$   \\
 Weighted-L1  &   $\mathbf{e}_{ij}[k]=\left|\mathbf{u}_i[k]-\mathbf{u}_j[k]\right|$   \\ 
 Weighted-L2  &   $\mathbf{e}_{ij}[k]=\left|\mathbf{u}_i[k]-\mathbf{u}_j[k]\right|^2$   \\ 
 \Xhline{0.8pt}
\end{tabular}
\end{center}
\end{table}

From a more general perspective, the proposed GMF can be used to implicitly factorize arbitrary similarity matrices.
In this way, if the FE similarity is not the right choice for a specific application, the practitioner can still use GMF for their similarity measure of preference.
GMF has the following advantages. 
First, it generalizes existing methods under the framework of skip-gram plus NS to arbitrary similarity matrices. 
Especially, compared with VERSE, which requires each row of the similarity matrix to be a valid probability distribution, GMF does not have such limitations and even works for matrices with unbounded negative entries such as the implicit similarity of DeepWalk [cf.~\eqref{E:dw}]. 
Second, it can be easily implemented in automatic differentiation toolkits such as TensorFlow~\cite{tf} and PyTorch~\cite{pytorch}, facilitating the use of state-of-the-art optimization algorithms and GPU resources for efficient computation.

To build intuition about the difference between GMF and more traditional SVD-related methods, consider the $\ell_2$ loss typically considered in this latter class,
\begin{equation}\label{E:l2}
\|\mathbf{S}-\mathbf{U}\mathbf{V}^{\top}\|_2^2 = \sum_{1\leq i,j\leq n} (S_{ij} - \mathbf{u}_i^\top\mathbf{v}_j)^2.
\end{equation}
GMF adopts the loss~\eqref{E:obj} in the form of cross-entropy which is suitable for the task of edge detection, while the $\ell_2$ loss~\eqref{E:l2} is a natural choice for regression~\cite{hamilton2020graph}.
Moreover,~\eqref{E:l2} treats all entries in $\mathbf{S}$ (all node pairs) equally in an unweighted way. 
By contrast, \eqref{E:obj} implicitly allocates larger weights to the entries associated with larger $S^{+}_{ij}$ when we set identical values for all $S^{-}_{ij}$. 
In other words, it tends to preserve node pairs having higher proximity values. 
This is especially important for networks with positive bias (such as protein-protein interaction networks) where the presence of an edge is a strong indication about the relation between two nodes but the absence of an edge is a weaker indication that the nodes are not directly related. 
To illustrate this effect, we consider an unweighted Erd\H{o}s-R\'{e}nyi random graph with $25$ nodes and edge-formation probability $0.1$. 
We adopt the similarity matrix $\mathbf{S}$ where $S_{ij}=5$ if $A_{ij}=1$ and $S_{ij}=-5$ otherwise.  
We set $d=8$ and use both GMF and the truncated SVD to factorize $\mathbf{S}$ and reconstruct it\footnote{To use \eqref{E:obj} to factorize the matrix $\mathbf{S}$ considered in this example, we set $S^{+}_{ij}=e^{S_{ij}}$ and $S^{-}_{ij}=1$.}; see the first row of Fig.~\ref{fig:example}. 
We can see that GMF can better discriminate the edges present in the original graph from all node pairs, and incurs most of its error in the reconstruction of the low similarity (less relevant) pairs.
The second row of Fig.~\ref{fig:example} presents the reconstructed matrices obtained by GMF with different embedding dimensions. 
It can be observed that the recovery quality improves as $d$ increases. 
In addition, a perfect reconstruction is obtained when $d$ is large enough, which verifies Proposition~\ref{P:main_result}.


\begin{figure}
\centering
\includegraphics[scale=0.27]{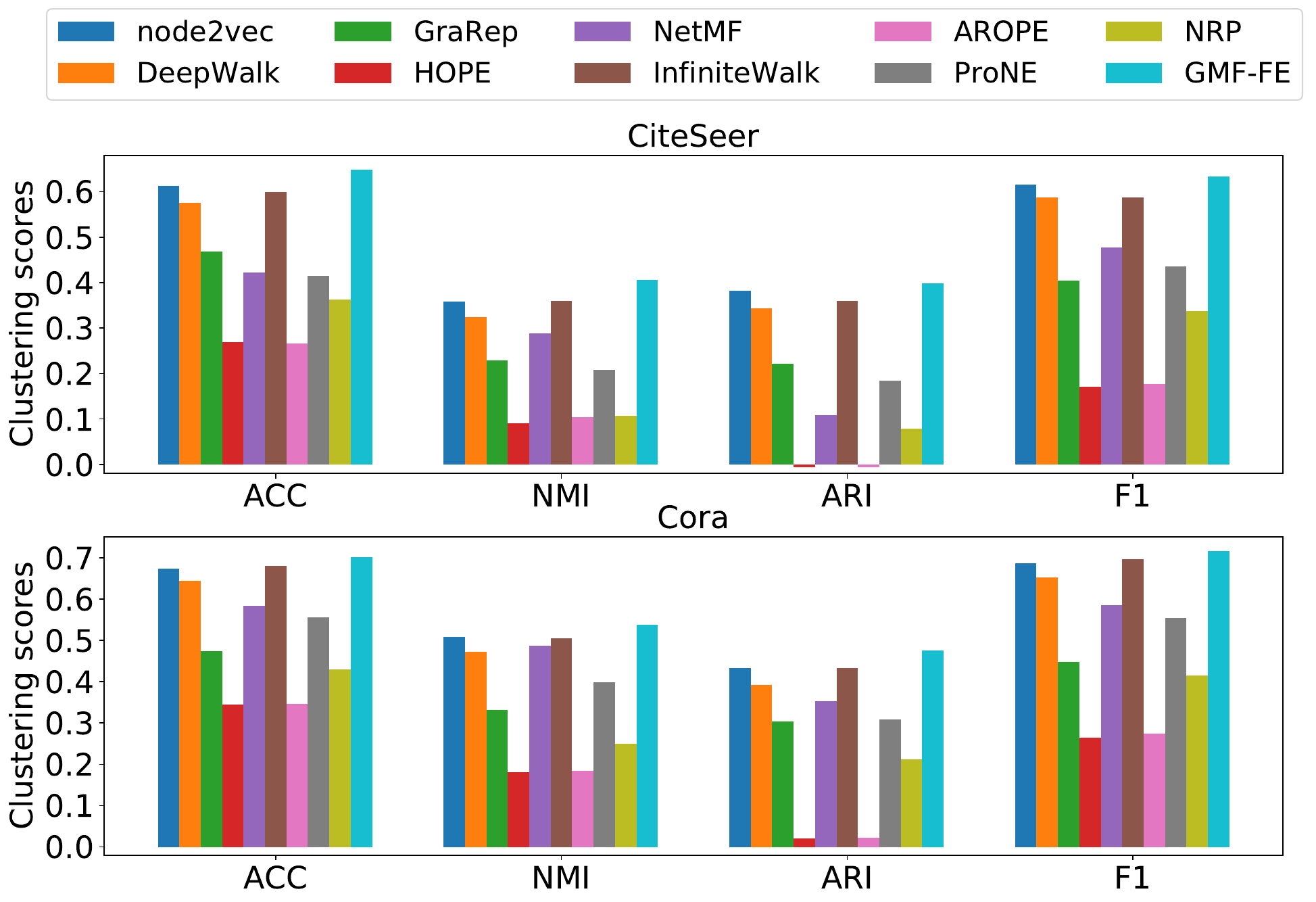}
\caption{\small Node clustering results for single-label datasets CiteSeer and Cora. The proposed GMF-FE method outperforms the alternatives on the four performance metrics considered: clustering accuracy (ACC), normalized mutual information (NMI), adjusted Rand index (ARI) and weighted-F1 score (F1).}
\label{fig:cluster}
\end{figure}

\section{Experiments}\label{s:experiments}

We conduct numerical experiments\footnote{The code needed to replicate numerical experiments presented in this paper can be found at \url{https://github.com/yuzhu2019/fe_embed}.} on real-world datasets to answer the following questions:
\begin{itemize}
\item[Q1] Does the proposed GMF-FE perform better than existing matrix factorization related methods?
\item[Q2] What is the impact of the two steps (similarity computation and matrix factorization) in GMF-FE if considered separately?
\item[Q3] How do the parameters $\eta, b, \gamma$ as well as the embedding dimension $d$ affect the embedding quality?
\end{itemize}


We answer Q1-Q3 in Sections~\ref{Ss:performance_comparison}-\ref{Ss:parameter_sensitivity}, respectively.
Before that, we present the datasets, baselines, parameter settings, and downstream tasks considered.

\vspace{1mm}
\noindent\textbf{Datasets}

\noindent We consider the following well-established networks in this section. 
CiteSeer~\cite{citeseer} and Cora~\cite{cora} are citation networks, where the nodes represent publications and the edges are citations between them. 
Each publication is assigned one label indicating its topic. 
PPI~\cite{node2vec} is a subgraph of the protein-protein interaction network for Homo sapiens. 
The labels represent biological states.
BlogCatalog~\cite{BlogCatalog} is a social network, where the nodes are blog authors on the BlogCatalog website and the edges indicate their relationships. 
Each blogger is assigned one or multiple labels that indicate the topic categories provided by the author.
We remove self-loops, convert directed graphs to undirected ones, and keep the largest connected component for each network. 
These networks are all unweighted.
Descriptive statistics of these networks after pre-processing are summarized in Table~\ref{table-datasets}. 

\begin{figure*}[!htbp]
\centering
\includegraphics[scale=0.42]{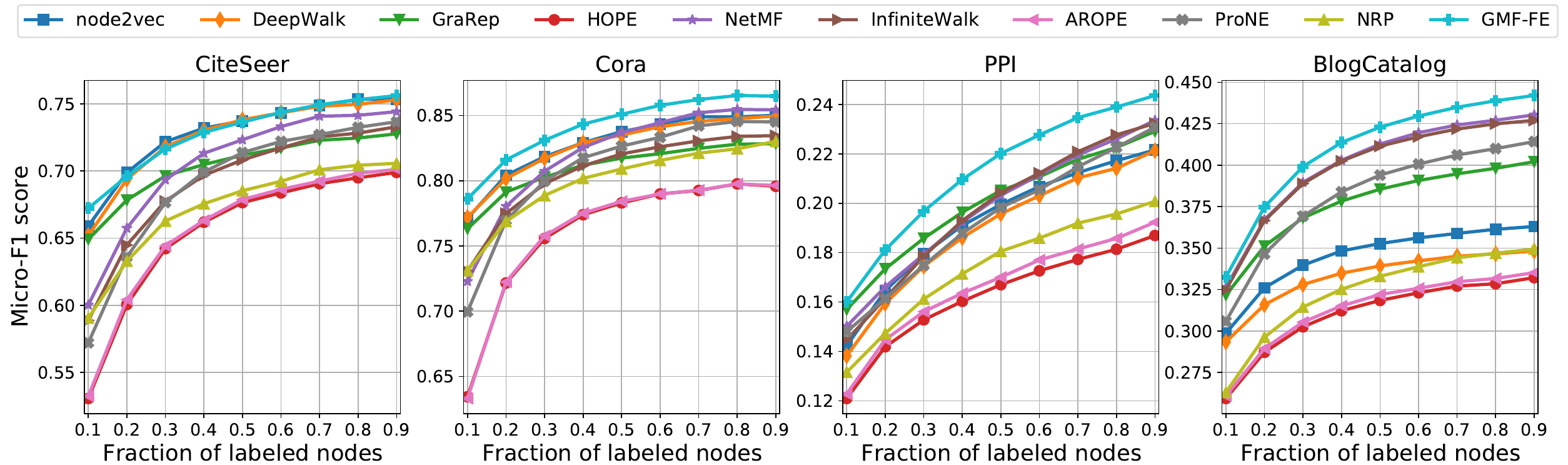}
\caption{\small Node classification results for different methods and datasets as a function of the fraction of labeled nodes. The proposed GMF-FE achieves state-of-the-art performance for CiteSeer and improves upon all baselines for the other three datasets across all fractions of labeled nodes.} 
\label{fig:nc}
\end{figure*}

\vspace{1mm}
\noindent\textbf{Baseline methods}

\noindent We compare the proposed approach with the following node embedding methods, which are representatives of matrix factorization related methods.
The parameter settings (described below) were selected in accordance with the guidelines in the corresponding publications.

\begin{itemize}
\item \textbf{node2vec}~\cite{node2vec}. We set the number of walks per node to $10$, the walk length to $80$, the window size to $10$, the number of negative samples to $1$, and the optimization is run for a single epoch. We obtain the best in-out and return parameters via a grid search over the set $\{0.25, 0.5, 1, 2, 4\}$.  
\item \textbf{DeepWalk}~\cite{deepwalk}. We adopt NS instead of hierarchical softmax, thus, DeepWalk can be viewed as a special case of node2vec when both of the in-out and return parameters are set to $1$. 
\item \textbf{GraRep}~\cite{grarep}. We set the maximum transition step to $4$ and set the log shifted factor to $1$ divided by the graph size.
\item \textbf{HOPE}~\cite{hope}. We adopt Katz as the similarity matrix and set the decay parameter to $0.95$ divided by the spectral radius of $\mathbf{A}$. 
For consistency with the other methods, we adopt the source embedding vectors as the final node embeddings and ignore the target embedding vectors.
\item \textbf{NetMF}~\cite{netmf}. Recall here the similarity matrix implicitly factorized in DeepWalk as derived in \cite{netmf}
\begin{equation}\label{E:dw}
\mathbf{S}^{\mathrm{DW}} = \log\left( \frac{\mathrm{vol}_\mathcal{G}}{bT} \sum\nolimits_{t=1}^T \mathbf{P}^t \mathbf{D}^{-1} \right),	
\end{equation}
where $\mathrm{vol}_{\mathcal{G}}\!=\!\sum_{1\leq i,j\leq n}A_{ij}$ is the volume of the graph $\mathcal{G}$, $T$ is the context window size, and $b$ is the number of negative samples in the skip-gram. 
This method applies the truncated SVD to the matrix obtained by setting the negative entries of \eqref{E:dw} to zero. 
We set $T=10$, $b=1$, and the number of top eigenpairs $h$ to $256$ [cf. Algorithm $4$ in \cite{netmf}]. 
\item \textbf{InfiniteWalk}~\cite{chanpuriya2020infinitewalk}. We set window size to $10$.
\item \textbf{AROPE}~\cite{AROPE}. This method considers similarity matrices in the form of polynomials of the adjacency matrix, namely $\mathbf{S}=\sum_{i=1}^q w_i\mathbf{A}^i$. We adopt one of the default setting where $q=3$ and $w_1=1, w_2=0.1, w_3=0.01$.
\item \textbf{ProNE}~\cite{prone}. We set the term number of the Chebyshev expansion to $10$, $\mu=0.2$ and $\theta=0.5$.
\item \textbf{NRP}~\cite{yang2020homogeneous}. We set the number of iterations for personalized PageRank approximation to $20$, the number of iterations for learning weights to $10$, and the random walk decay factor to $0.15$. Following Section 5.4 in~\cite{yang2020homogeneous}, for each node, we normalize its forward and backward vectors (of length $d/2$ respectively) and then concatenate them as the final embedding. 
\end{itemize}

The setting of hyperparameters for the proposed approach is as follows.
\begin{itemize}
	\item \textbf{GMF-FE.} We set $b$ to the $70$th percentile of the entries in the FE distance matrix $\mathbf{\Delta}_{\eta}^{\mathrm{FE}}$ so that $70\%$ of entries in the similarity matrix $\mathbf{S}_{\eta,b,\gamma}^{\mathrm{FE}}$ will be positive [cf.~\eqref{e:fe_sim}]. We set $\gamma$ to make the largest entry in $\mathbf{S}_{\eta,b,\gamma}^{\mathrm{FE}}$ equal to $6$. We perform a line search\footnote{{In node classification, we select the value of $\eta$ optimal for most fractions of labeled nodes considered. We do the same for the in-out and return parameters in node2vec. For the proposed GMF-FE, the optimal $\eta$ is relatively stable for different fractions of labeled nodes.}} of $\eta$ over the set $\{10^{-4}, 10^{-3}, \cdots, 10^{1}\}$.
\end{itemize}	

Finally, in solving the optimization problems in~\eqref{e:fe_embed} and~\eqref{E:obj}, we use the Adam optimizer with learning rate $0.1$ and forgetting factors $\beta_1=0.9$ and $\beta_2=0.999$. 
The number of iterations is set to $300$. 

\vspace{1mm}
\noindent\textbf{Tasks}

\noindent To evaluate the quality of the produced node embeddings, we consider three common downstream tasks, namely, node clustering, node classification and link prediction.  

\emph{Node clustering} detects groups of nodes with similar attributes. 
We run $k$-means~\cite{lloyd1982least} on node embeddings to cluster nodes then map the predicted labels to real labels using the Kuhn-Munkres algorithm~\cite{kuhn1955hungarian}. 
To evaluate the clustering performance, we compute four metrics, namely, clustering accuracy score (ACC), normalized mutual information (NMI), adjusted Rand index (ARI) and weighted-F1 score~\cite{emmons2016analysis}. 
For all of them, a larger value indicates a better performance.
For each set of node embeddings, we run $k$-means $10$ times with different centroid initializations. 
Moreover, for each algorithm, we repeat the embedding procedure $5$ times to take into consideration the randomness in embeddings generated by (some of) these algorithms.
For the proposed GMF-FE, the randomness comes from the optimizer.
The results are shown in Fig.~\ref{fig:cluster}, which are averaged over $50$ realizations in total. 
We set the embedding dimension $d$ to $8$ in this task\footnote{We follow the convention in spectral clustering~\cite{von2007tutorial}, in which the embedding dimension is usually set to the number of clusters. Since the baseline method GraRep requires the embedding dimension to be a multiple of the maximum transition step ($4$ in this paper), we set the embedding dimension to $8$ which is a multiple of $4$ while also being similar to the number of clusters in both networks considered ($6$ in CiteSeer and $7$ in Cora; see Table~\ref{table-datasets}).}.

Given the labels of a portion of the nodes, in \emph{node classification} our goal is to infer the labels of the remaining nodes. 
To be specific, we randomly split the nodes (i.e., their embeddings and labels) into a training set and a test set. 
We train a one-vs-rest logistic regression classifier using the training set and evaluate the performance using the test set. 
We adopt micro-F1 scores to evaluate the classification accuracy (similar results were obtained for macro-F1 scores; see Fig.~S1 in the supplementary material).
We repeat this procedure $10$ times for each set of node embeddings and repeat the embedding procedure $5$ times for each algorithm.
The results (averaged over a total of $50$ realizations) are shown in Fig.~\ref{fig:nc}.   
For node classification (as well as link prediction discussed below), we set the embedding dimension $d$ to $128$ which is widely adopted in existing works.

\begin{table*}
\caption{\small AUC scores for link prediction. The four operators in Table~\ref{table-operator} are abbreviated as Avg, Hada, L1, and L2, respectively. The best result for each method-dataset combination is underlined and the best performance for each dataset is bolded. GMF-FE attains this best performance for every dataset considered.}\label{table-lp}
\begin{center}
\begin{tabular}{l|p{0.57cm}p{0.57cm}p{0.57cm}p{0.63cm}|p{0.57cm}p{0.57cm}p{0.57cm}p{0.63cm}|p{0.57cm}p{0.57cm}p{0.57cm}p{0.63cm}|p{0.57cm}p{0.57cm}p{0.57cm}p{0.63cm}} 
 \Xhline{0.8pt}
	\multirow{2}{*}{\diagbox[width=5em]{Method}{Dataset}} & \multicolumn{4}{c}{CiteSeer} \vline& \multicolumn{4}{c}{Cora} \vline& \multicolumn{4}{c}{PPI} \vline&\multicolumn{4}{c}{BlogCatalog} \\\ 
 	                       & Avg & Hada & L1 & L2 & Avg & Hada & L1 & L2 & Avg & Hada & L1 & L2 & Avg & Hada & L1 & L2 \\ [0.2em]
 	\hline
 	node2vec   & 0.640 & \underline{0.942} & 0.929 & 0.931 & 0.596 & \underline{0.908} & 0.896 & 0.898 & 0.810 & \underline{0.817} & 0.680 & 0.680 & 0.885 & 0.819 & 0.920 & \underline{0.921} \\ [0.2em]
 	DeepWalk   & 0.648 & \underline{0.935} & 0.923 & 0.925 & 0.598 & \underline{0.903} & 0.886 & 0.888 & 0.803 & \underline{0.807} & 0.650 & 0.649 & 0.866 & 0.815 & \underline{0.915} & \underline{0.915} \\ [0.2em]
 	GraRep     & 0.681 & 0.897 & \underline{0.914} & 0.905 & 0.649 & 0.871 & \underline{0.885} & 0.879 & 0.859 & \underline{0.882} & 0.829 & 0.843 & 0.939 & \underline{0.950} & 0.941 & 0.947 \\ [0.2em]
 	HOPE       & \underline{0.724} & 0.622 & 0.581 & 0.514 & \underline{0.687} & 0.672 & 0.546 & 0.515 & \underline{0.860} & 0.568 & 0.796 & 0.729 & \underline{0.941} & 0.680 & 0.930 & 0.915 \\ [0.2em]
 	NetMF      & 0.668 & \underline{0.897} & \underline{0.897} & 0.882 & 0.585 & 0.864 & \underline{0.865} & 0.850 & 0.804 & 0.840  & \underline{0.843} & \underline{0.843} & \underline{0.878} & 0.877 & \underline{0.878} & 0.870 \\ [0.2em]
 	InfiniteWalk & 0.635 & \underline{0.910} & 0.908 & 0.902 & 0.571 & \underline{0.911} & 0.893 & 0.896 & 0.824 & \underline{0.861} & 0.823 & 0.824 & \underline{0.929} & 0.873 & 0.764 & 0.748 \\ [0.2em]
 	AROPE & \underline{0.724} & 0.624 & 0.579 & 0.509 & \underline{0.686} & 0.672 & 0.542 & 0.509 & \underline{0.860} & 0.583 & 0.832 & 0.807 & \underline{0.941} & 0.678 & 0.931 & 0.916 \\ [0.2em]
 	ProNE & 0.690 & 0.858 & \underline{0.864} & 0.860 & 0.621 & 0.817 & \underline{0.828} & 0.818 & 0.863 & \underline{0.877} & 0.796 & 0.812 & \underline{0.945} & 0.934 & 0.897 & 0.908 \\ [0.2em]
 	NRP          & 0.674 & 0.891 & \underline{0.897} & 0.894 & 0.618 & \underline{0.871} & 0.864 & 0.858 & 0.865 & 0.876 & 0.872 & \underline{0.877} & \underline{0.949} & 0.942 & 0.946 & 0.947 \\ [0.2em]
 	GMF-FE     & 0.702  & \textbf{\underline{0.951}} & 0.932 & 0.936 & 0.665  &\textbf{\underline{0.924}} & 0.892 & 0.895 & 0.885 & \textbf{\underline{0.912}} & 0.781 & 0.786 & 0.953 & \textbf{\underline{0.957}} & 0.896  & 0.885 \\ [0.2em]
 \hline 
   GMF-DW   & 0.682 & 0.928 & 0.930 & \underline{0.931} & 0.609 & \underline{0.894} & 0.874 & 0.877 & \underline{0.842} & 0.761 & 0.814 & 0.817 & \underline{0.928} & 0.672 & 0.677 & 0.675 \\ [0.2em]
   EIG-FE   & 0.717 & \underline{0.942} & 0.920 & 0.913 & 0.652 & \underline{0.917} & 0.887 & 0.886 & 0.880 & \underline{0.901} & 0.785 & 0.785 & \underline{0.952} & 0.934 & 0.783 & 0.766 \\ 
  \Xhline{0.8pt}
\end{tabular}
\end{center}
\end{table*}

Given a graph with some edges removed, in \emph{link prediction} our goal is to predict the missing edges. 
More precisely, we keep the largest connected component (denoted by $\mathcal{G}'$) of the graph generated by removing $30\%$ of the edges from the original graph $\mathcal{G}$. 
The node embeddings are learned from $\mathcal{G}'$. 
Denote by $\mathbf{e}_{ij}$ the embedding of a pair of nodes $(v_i,v_j)$, which is computed in four ways \cite{node2vec} as listed in Table \ref{table-operator}. 
The edges in $\mathcal{G}'$ are treated as positive examples in the training set, and the removed edges whose two endpoints belong to $\mathcal{G}'$ are treated as positive examples in the test set. Denote by $\mathcal{G}''$ the induced subgraph of $\mathcal{G}$ which contains the nodes in the node set of $\mathcal{G}'$ and the edges between them. 
To generate negative examples, we randomly sample node pairs from $\mathcal{G}''$ which have no edge between them. 
The number of negative examples is identical to that of positive examples for both the training and test sets. 
We train a logistic regression classifier using the training set and evaluate the performance in terms of Area Under Curve (AUC) scores in the test set. 
The results (averaged over $10$ realizations) are shown in Table~\ref{table-lp}.

\subsection{Performance comparison}
\label{Ss:performance_comparison}

Fig.~\ref{fig:cluster} reveals that, for node clustering, the proposed GMF-FE performs better than all the baseline methods for both of the datasets considered.
For these two datasets, each node is associated with a single label and we use these labels as our ground-truth classes.
When contrasted with the second-best alternative (node2vec for CiteSeer and InfiniteWalk for Cora), our proposed method yields an increase of $0.036$ in clustering accuracy ($0.649$ compared to $0.613$) for CiteSeer and an increase of $0.021$ ($0.701$ compared to $0.680$) for Cora.
Similar improvements can be seen on the other three performance metrics considered and the improvements are more substantial when compared with the other baselines.

For node classification, we can see from Fig.~\ref{fig:nc} that GMF-FE markedly improves over all the baseline methods for three of the considered networks and, for CiteSeer, it is comparable with the top baselines. 
For example, when half of the nodes are labeled, GMF-FE yields a micro-F1 score of $0.851$, $0.220$ and $0.423$ for Cora, PPI and BlogCatalog, which improves upon the performance of the best competing baseline for each dataset, i.e., $0.838$, $0.205$ and $0.413$, respectively.
Moreover, notice that GMF-FE's superior performance is robust to the fraction of labeled nodes.

Table~\ref{table-lp} shows that, in link prediction, GMF-FE when used with the Hadamard operator achieves the best performance across all networks with marked differences over the second-best alternative for some of the datasets.
Importantly, for PPI, GMF-FE yields a reduction in the error rate of $25\%$ when compared with the best baseline by pushing the accuracy from $0.882$ for GraRep to $0.912$.
Briefly revisiting Fig.~\ref{fig:nc} it can be appreciated that also for node classification the largest relative performance improvements were attained for the PPI network.

In summary, we have established that the proposed method provides superior performance over all baseline methods for the three downstream tasks considered across various networks.  

\begin{figure}
\centering
\includegraphics[scale=0.27]{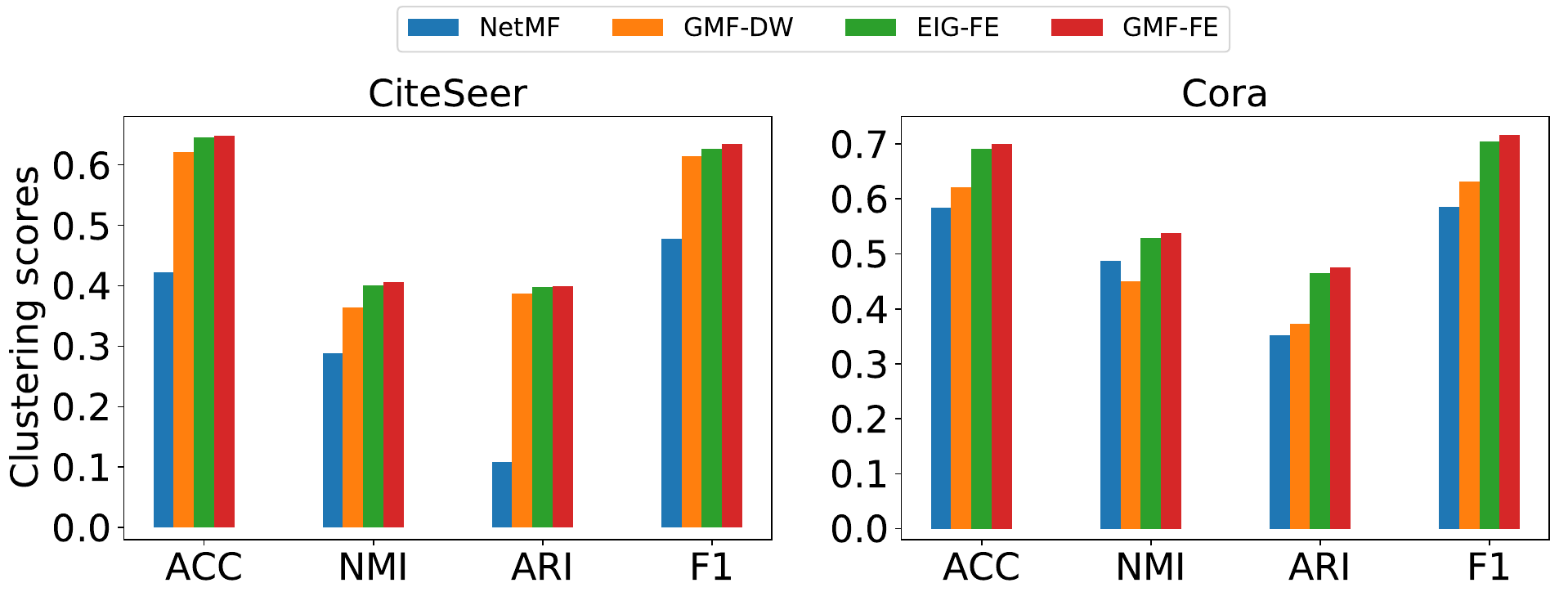}
\caption{\small Node clustering results for the ablation study.}
\label{fig:ablation_cluster}
\end{figure}

\begin{figure*}[!htbp]
\centering
\includegraphics[scale=0.42]{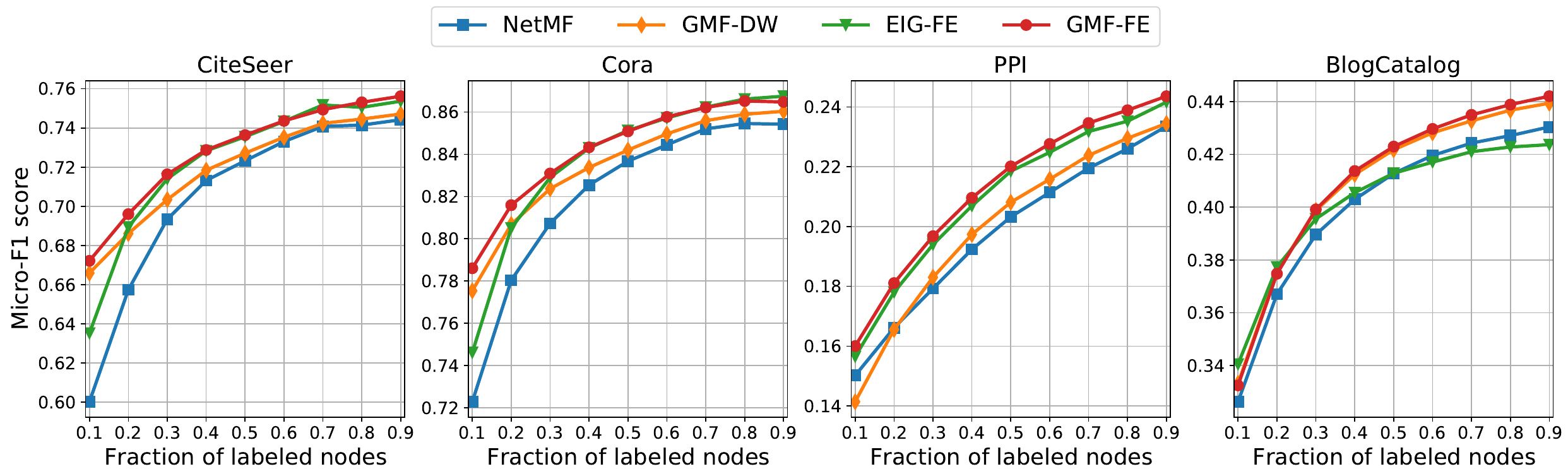}
\caption{\small Node classification results for the ablation study.}
\label{fig:ablation_nc}
\end{figure*}

\begin{figure*}
\centering
\begin{subfigure}[t]{0.24\textwidth}
\centering
\includegraphics[scale=0.2]{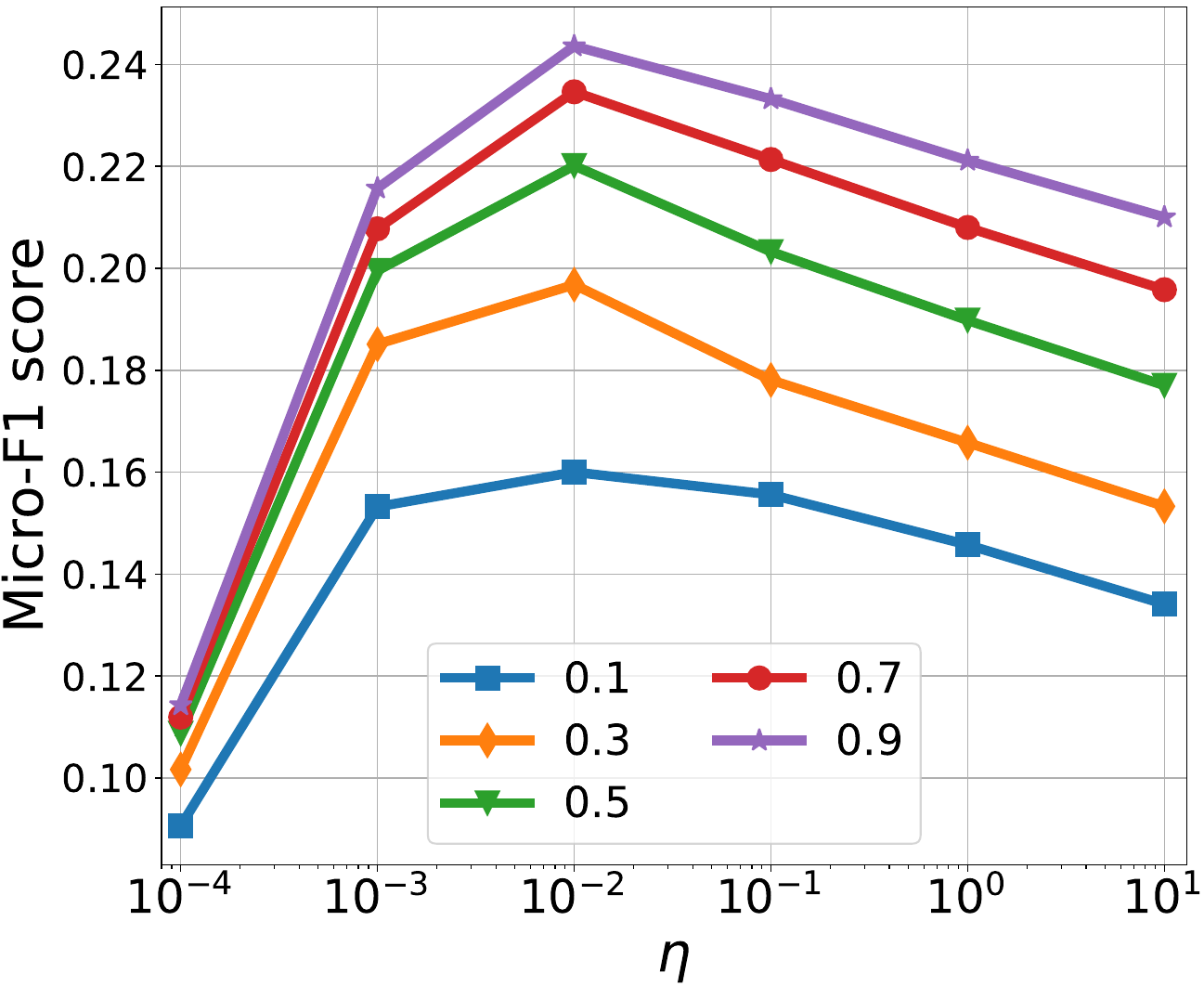}
\caption{}
\end{subfigure}
\begin{subfigure}[t]{0.24\textwidth}
\centering
\includegraphics[scale=0.2]{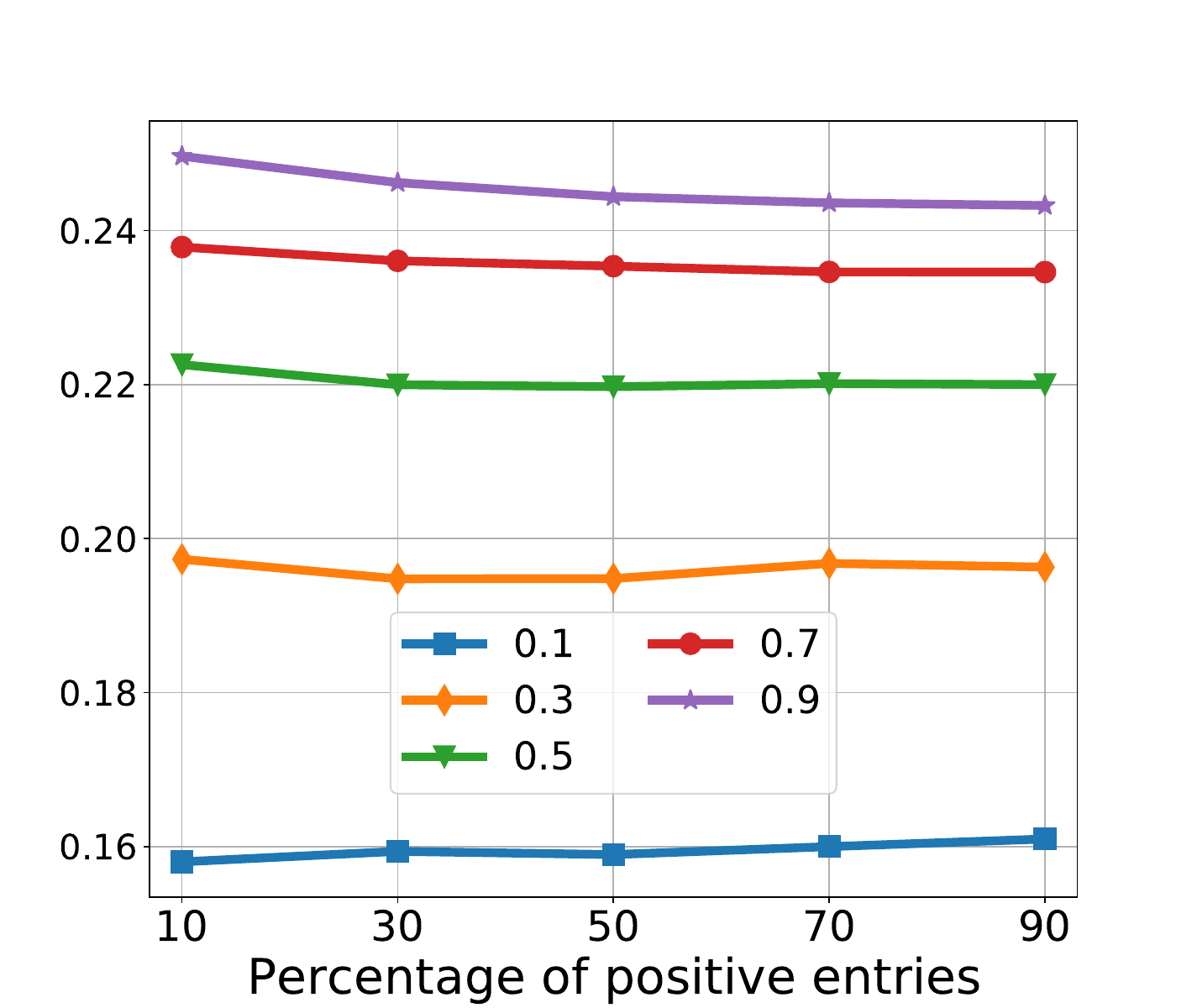}
\caption{}
\end{subfigure}
\begin{subfigure}[t]{0.24\textwidth}
\centering
\includegraphics[scale=0.2]{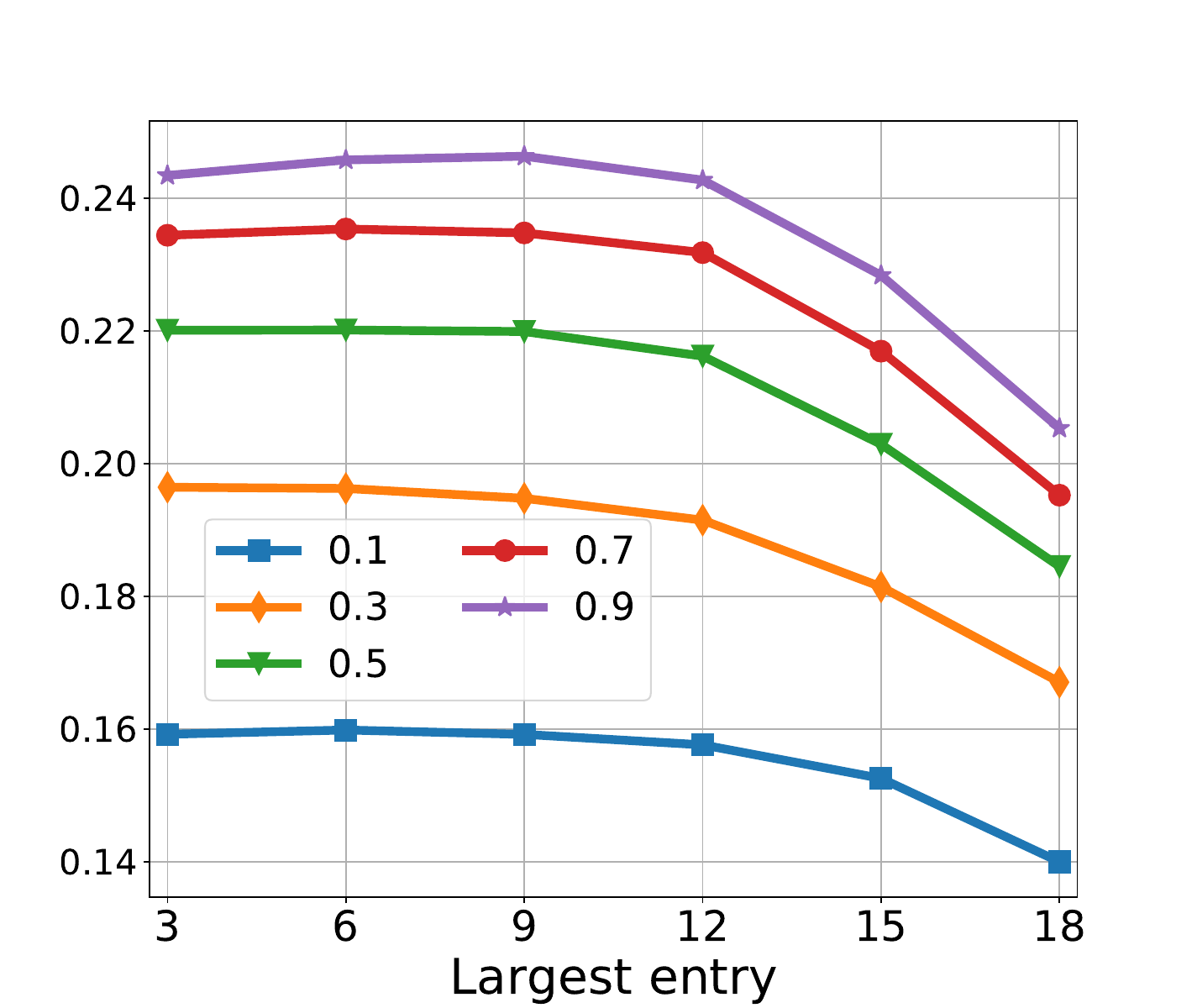}
\caption{}
\end{subfigure}
\begin{subfigure}[t]{0.24\textwidth}
\centering
\includegraphics[scale=0.2]{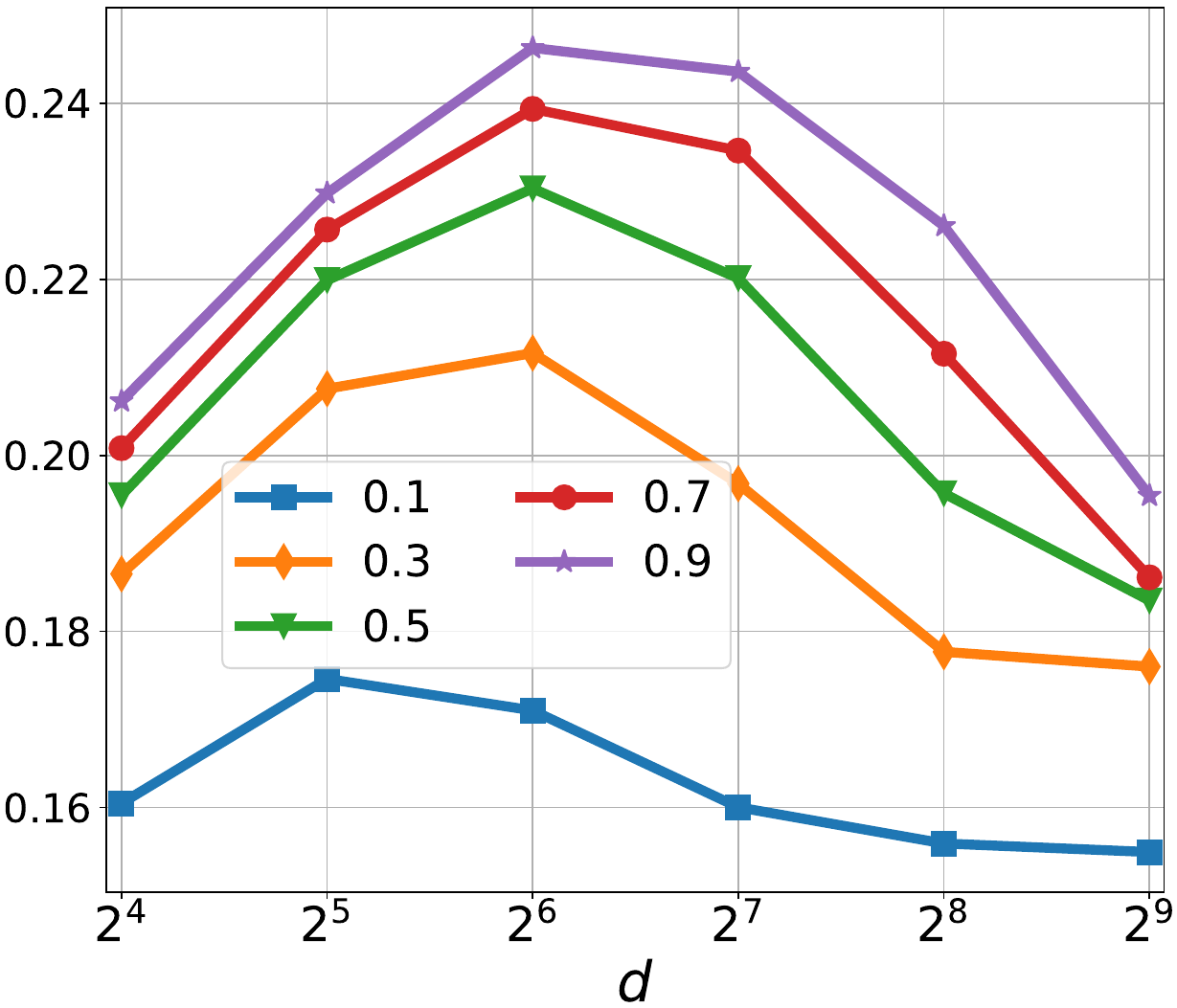}
\caption{}
\end{subfigure}
\begin{subfigure}[t]{0.24\textwidth}
\centering
\includegraphics[scale=0.2]{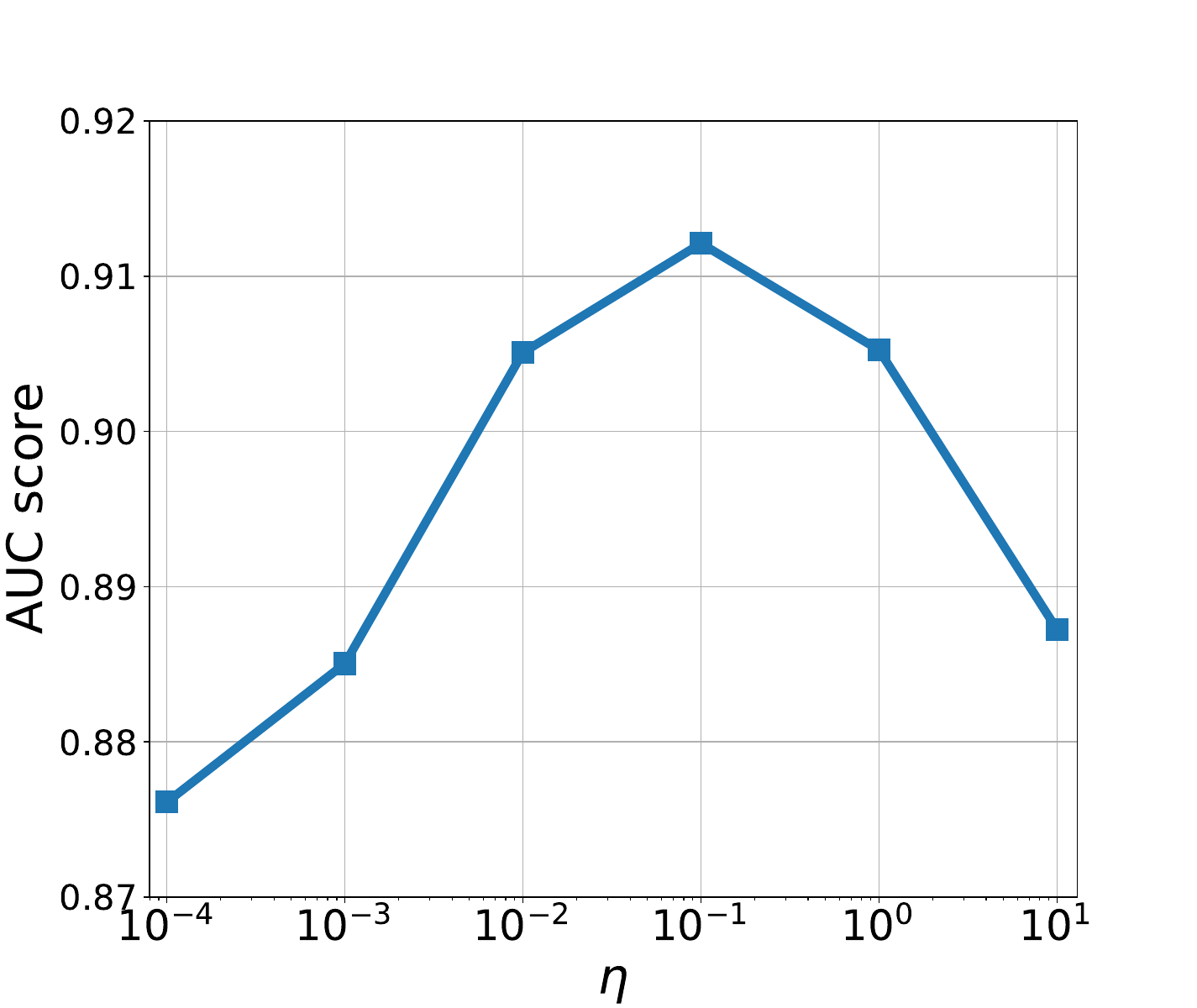}
\caption{}
\end{subfigure}
\begin{subfigure}[t]{0.24\textwidth}
\centering
\includegraphics[scale=0.2]{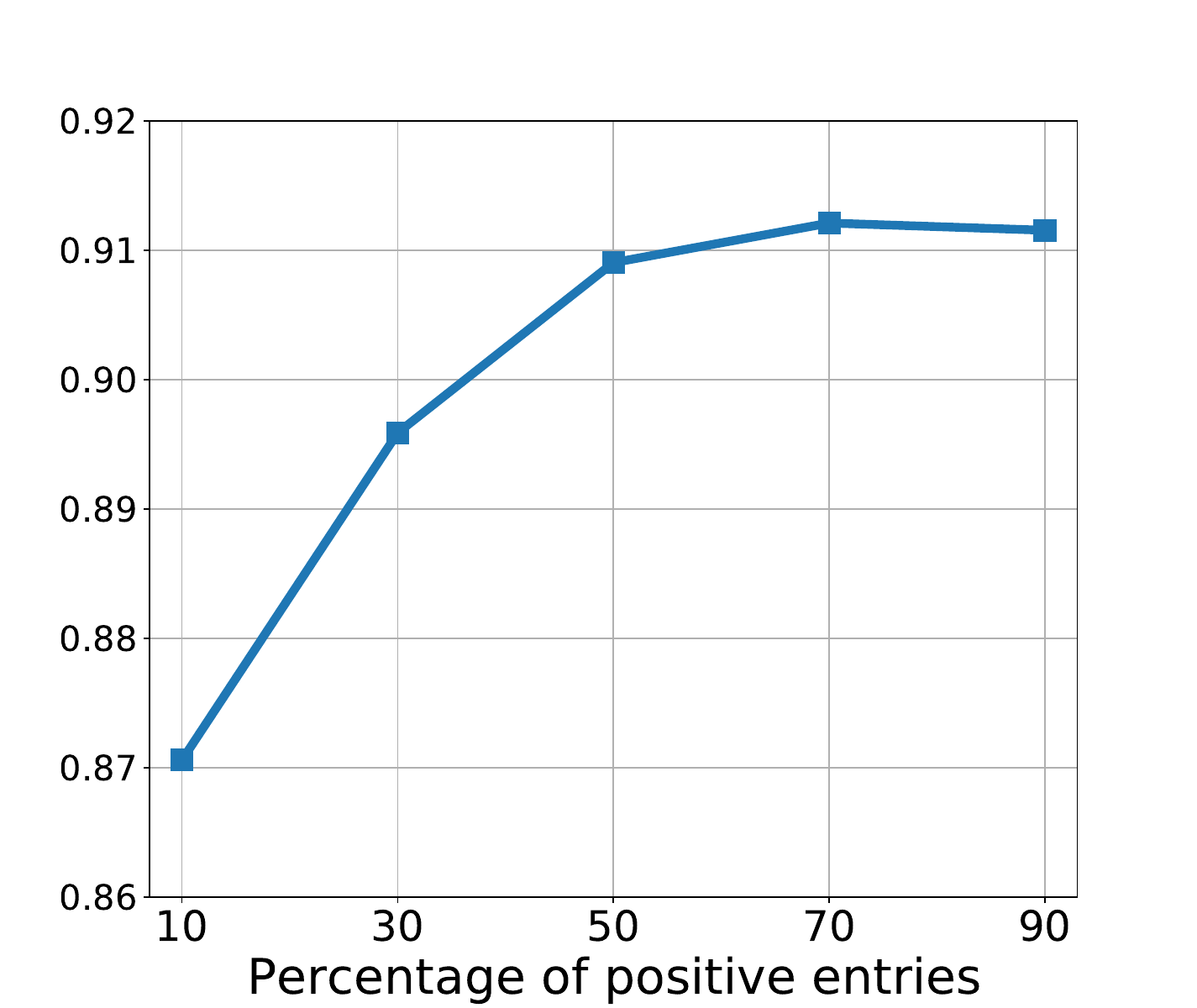}
\caption{}
\end{subfigure}
\begin{subfigure}[t]{0.24\textwidth}
\centering
\includegraphics[scale=0.2]{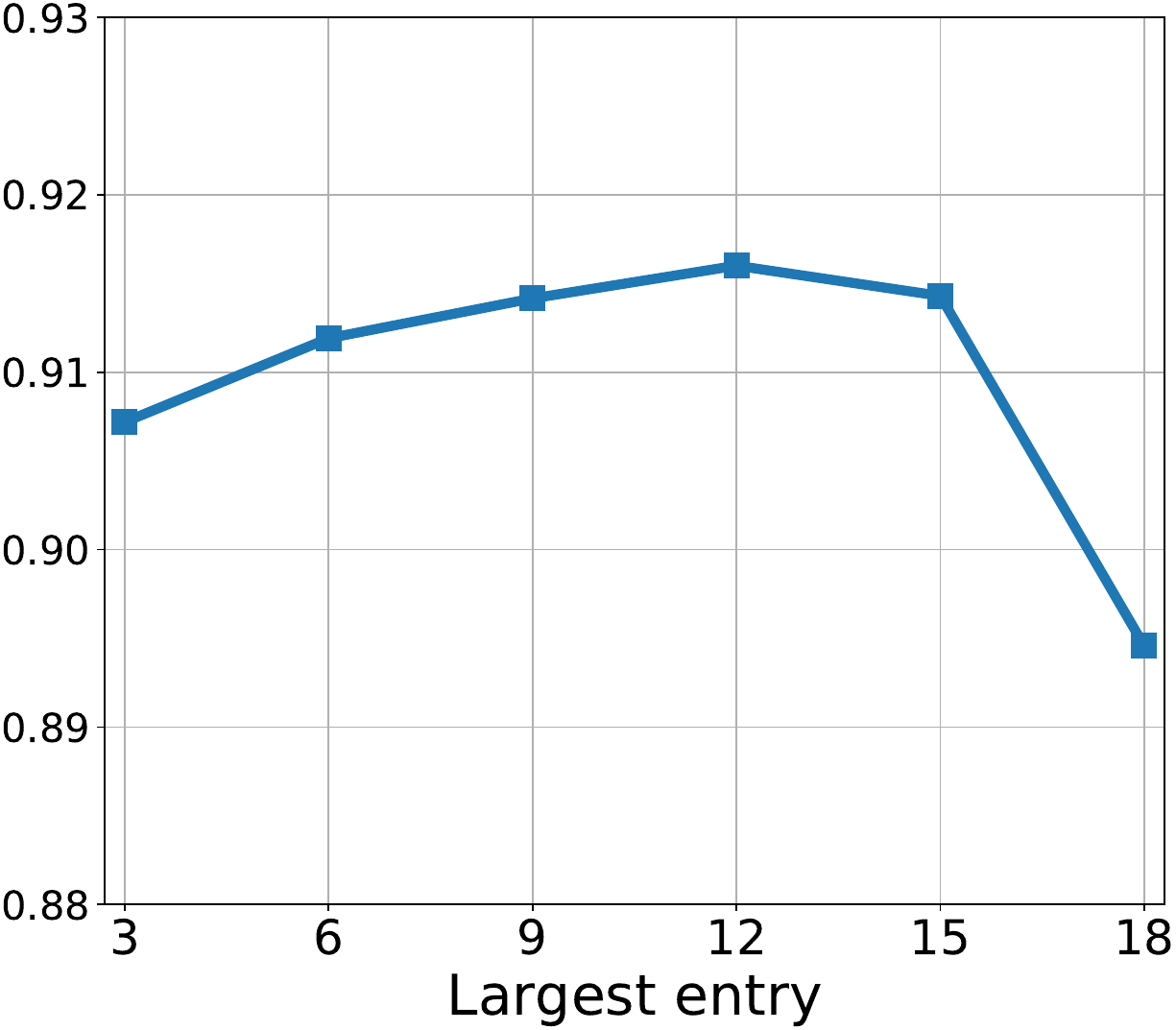}
\caption{}
\end{subfigure}
\begin{subfigure}[t]{0.24\textwidth}
\centering
\includegraphics[scale=0.2]{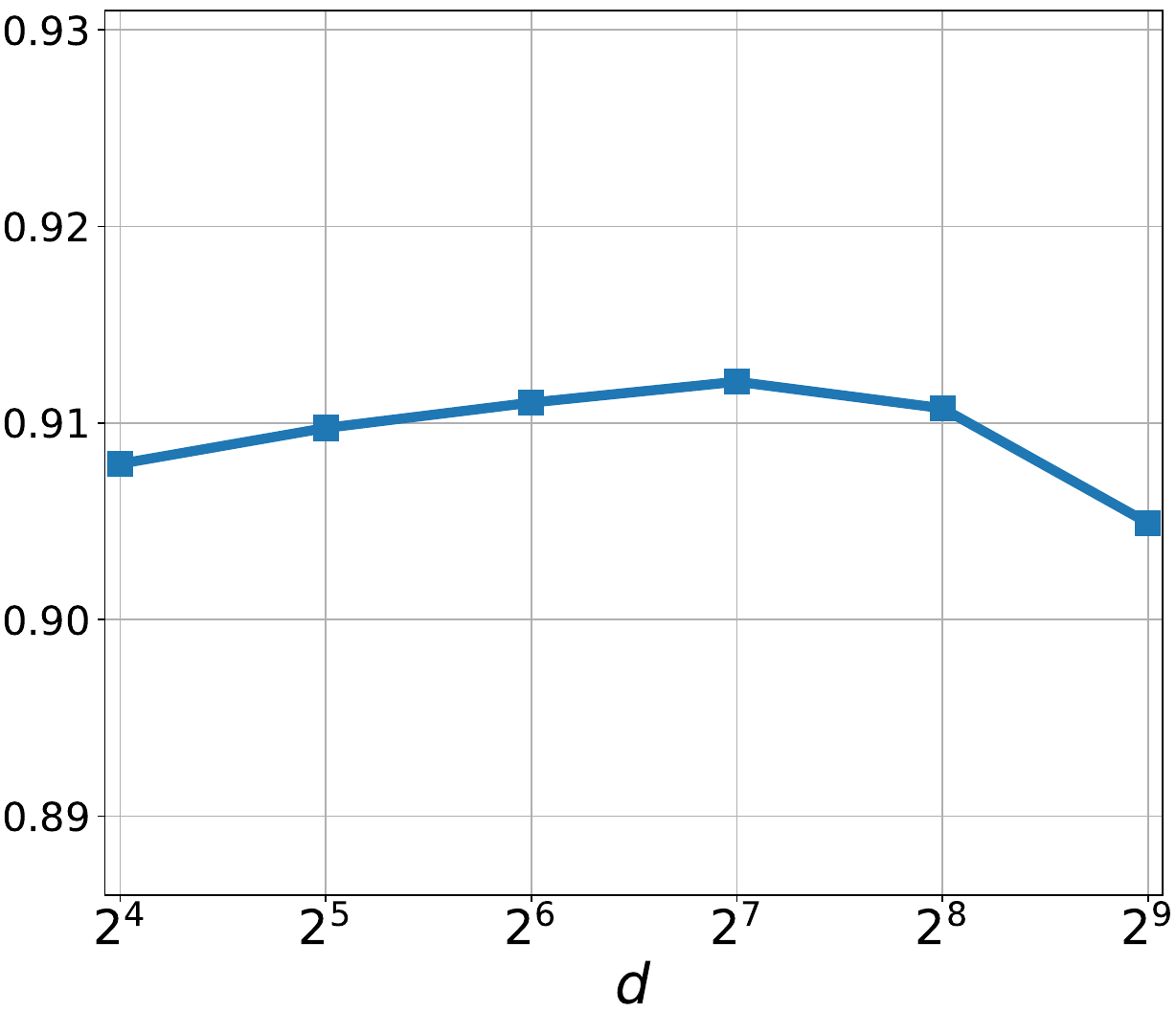}
\caption{}
\end{subfigure}
\caption{\small Parameter sensitivity test on the PPI network. (a)-(d) are for node classification where the legend denotes the fraction of labeled nodes. (e)-(h) are for link prediction where the Hadamard operator is applied.}
\label{fig:ppi-para}
\end{figure*}

\subsection{Ablation study}
\label{Ss:ablation_study}
We compare the following two approaches with NetMF and GMF-FE to determine the impact of each step in GMF-FE.
\begin{itemize}
\item \textbf{GMF-DW}.
	We adopt GMF to factorize the similarity matrix $\mathbf{S}^{\mathrm{DW}}$ derived in NetMF (see~\eqref{E:dw}).
	Here we consider a scaled version $\alpha\mathbf{S}^{\mathrm{DW}}$.
	Note that the scaling does not change the SVD results but influences the results of GMF.
	More precisely, in (4) we set $S^{+}_{ij}$ to the $\alpha$th power of $\exp(S^{\mathrm{DW}}_{ij})$ and set $S^{-}_{ij}=1$ for all pairs $(i,j)$.
	We perform a line search of $\alpha$ over the set $\{0.2, 0.4, \cdots, 1.8, 2\}$.
\item \textbf{EIG-FE}.
	We use the eigendecomposition to factorize the FE distance-based similarity matrix~\eqref{e:fe_sim}.
	The parameter setting is the same as that of GMF-FE.
	We use the top eigenvectors as node embeddings. 
\end{itemize}
The numerical results for node clustering, node classification, and link prediction are respectively given in Figure~\ref{fig:ablation_cluster}, Figure~\ref{fig:ablation_nc}, and Table~\ref{table-lp} (the bottom two lines).
It can be observed that, for these three downstream tasks and four datasets, the performance of GMF-DW and EIG-FE generally lie between the performance of NetMF and GMF-FE (except for node clustering NMI score on Cora and node classification on BlogCatalog). 
The improvement of GMF-DW over NetMF (as well as GMF-FE over EIG-FE) indicates the benefit of considering our matrix decomposition technique whereas the improvement of EIG-FE over NetMF (as well as GMF-FE over GMF-DW) indicates the benefit of considering the free energy distance. 
The superior performance of GMF-FE demonstrates the benefit of considering both modifications together. 
In addition, we can also see that the use of the FE distance brings a bigger performance increase than the proposed matrix factorization method in most cases (except for node classification on BlogCatalog).

\subsection{Parameter sensitivity}
\label{Ss:parameter_sensitivity}

We evaluate how different choices of parameters in the proposed GMF-FE method influence its performance.  
We present the results on the PPI network in Fig.~\ref{fig:ppi-para}, where the two rows respectively correspond to the tasks of node classification and link prediction.
Except for the parameter being examined, all other parameters are set to default values (as detailed when introducing the baseline methods). 
The default value for $\eta$ is selected as the optimal one over the set $\{10^{-4}, 10^{-3},\cdots,10^1\}$, i.e., $10^{-2}$ for node classification and $10^{-1}$ for link prediction.
We have also conducted examinations on the other three networks as well as the node clustering task, and similar observations are obtained; see Figs.~S2,~S3 and~S4 in the supplementary material.

We analyze the effect of $\eta$ in Figs.~\ref{fig:ppi-para}(a) and~\ref{fig:ppi-para}(e), where it becomes evident that the optimal performance is achieved for intermediate values of the parameter. 
This aligns well with our intuition that the optimal representation is not achieved at the extremal SP and CT distances but rather for an intermediate FE distance. 
It should also be noted that the optimal value of $\eta$ is robust to the fraction of labeled nodes [cf. Fig.~\ref{fig:ppi-para}(a)], which is a desirable feature in practice.

Figs.~\ref{fig:ppi-para}(b) and~\ref{fig:ppi-para}(f) show the effect of the shift parameter $b$, which controls the percentage of positive entries in the similarity matrix, whereas
Figs.~\ref{fig:ppi-para}(c) and~\ref{fig:ppi-para}(g) present the influence of the scale parameter $\gamma$ [cf. \eqref{e:fe_sim}].
Combining the results in Fig.~\ref{fig:ppi-para} with those for other networks shown in the supplementary material, we conclude that the proposed GMF-FE performs well and is stable when the percentage of positive entries and the largest entry are in the range $50\sim 90$ and $3\sim 9$, respectively.
Intuitively, larger values of $\gamma$ might be undesirable from a numerical standpoint due to the exponential function operator in \eqref{e:fe_embed}.

Figs.~\ref{fig:ppi-para}(d) and~\ref{fig:ppi-para}(h) show the effect of increasing the embedding dimension $d$. 
It can be observed that, in node classification, the optimal $d$ is dependent on the fraction of labeled nodes. 
This also aligns well with intuition, where a simpler model (smaller $d$) is preferred when limited data are available (smaller fraction of labeled nodes) but a more complex model can be afforded for larger fraction of labeled nodes.
Finally, it can be seen that the performance of link prediction (as well as node clustering shown in Figs.~S2(l) and S3(l) in the supplementary material) is only minimally affected by the choice of $d$ within the range considered. 

\begin{figure}
\centering
\begin{subfigure}[t]{0.24\textwidth}
\centering
\includegraphics[width=4.4cm, height=3.7cm]{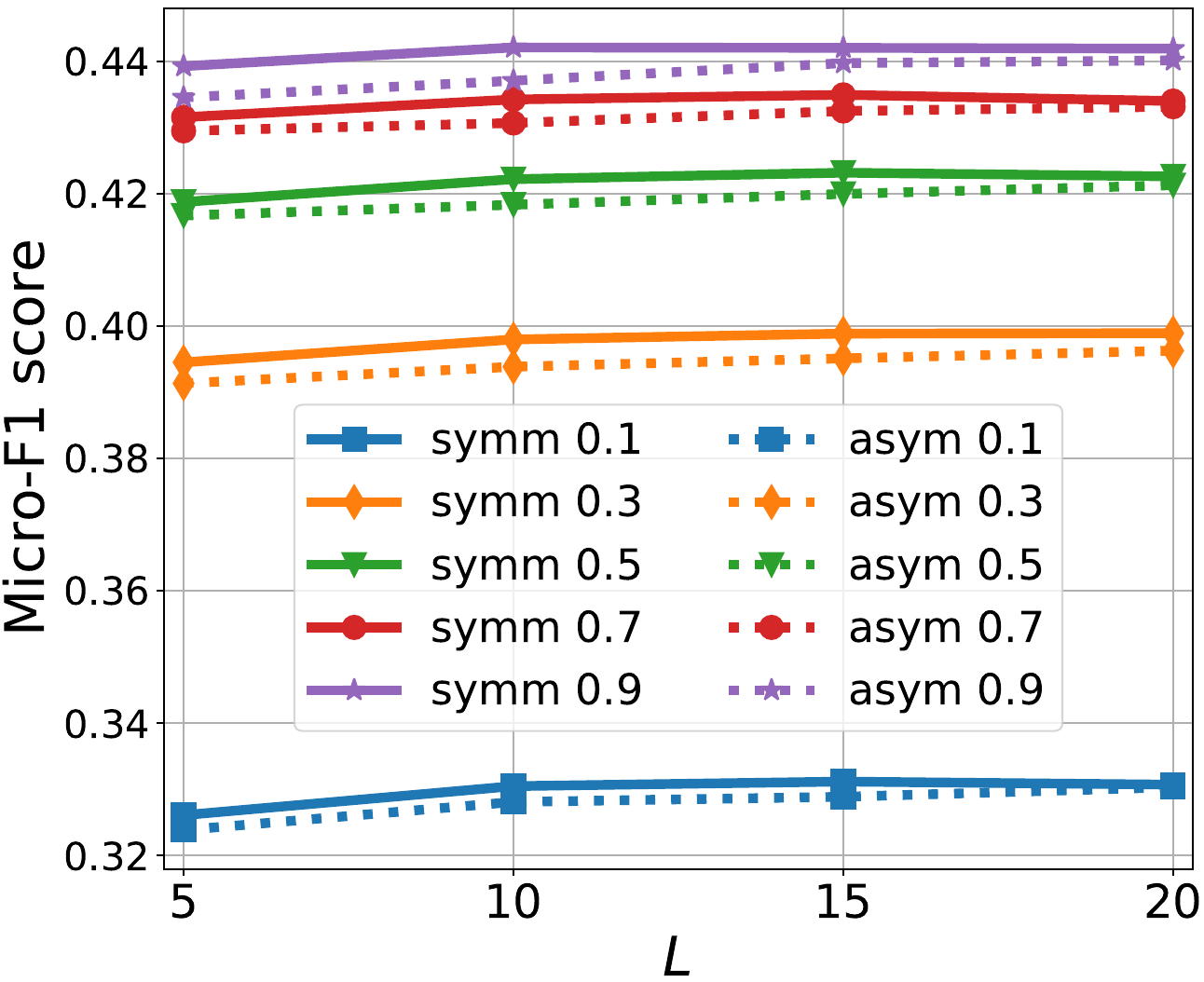}
\caption{}
\end{subfigure}%
\begin{subfigure}[t]{0.24\textwidth}
\centering
\includegraphics[width=4.2cm, height=3.7cm]{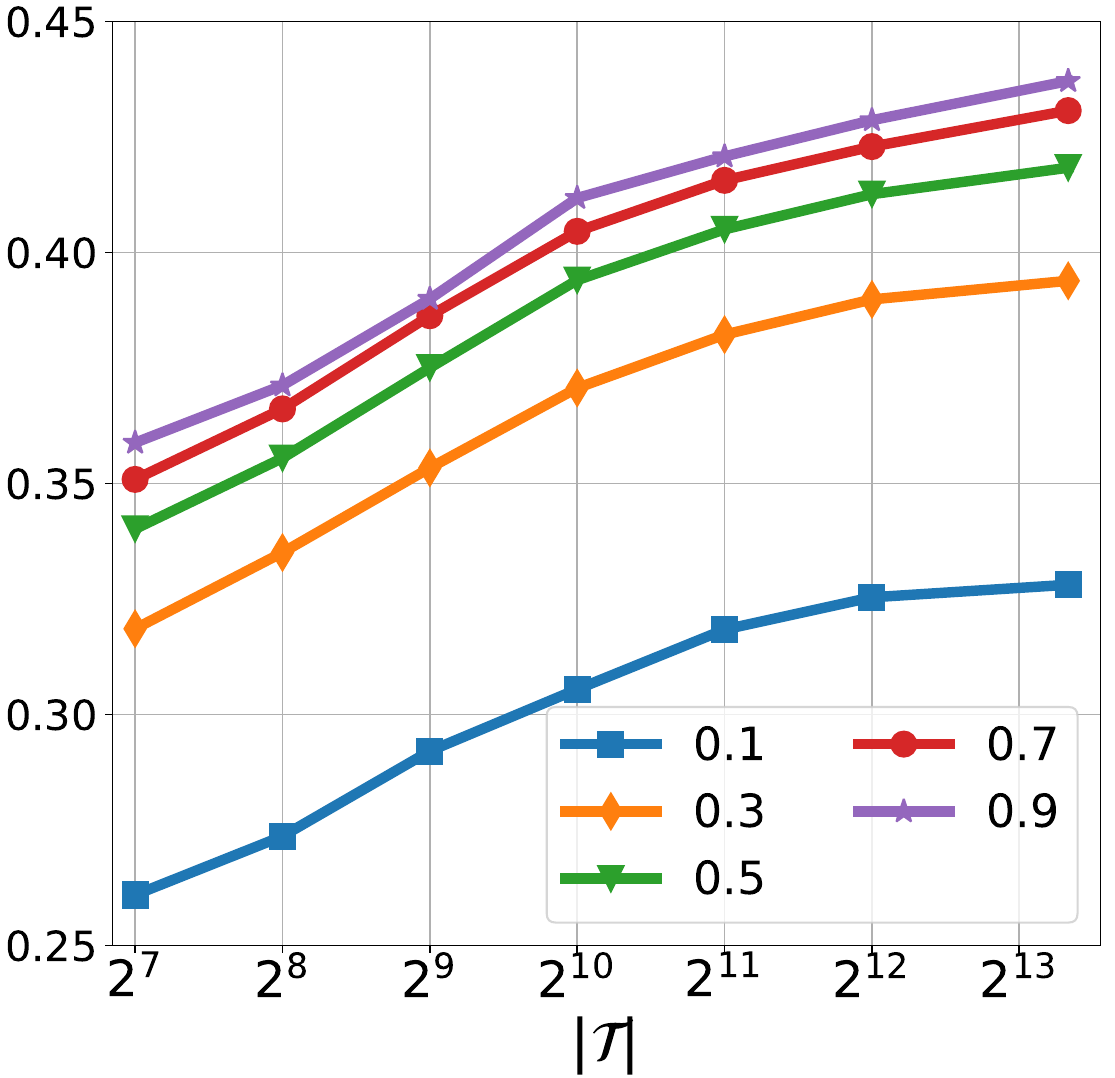}
\caption{}
\end{subfigure}
\caption{\small The influence of (a) the number of iterative steps $L$ and (b) the number of sampled target nodes $|\mathcal{T}|$ on the performance of node classification for the BlogCatalog network. The legend denotes the fraction of labeled nodes. Good performance can be achieved even for small values of $L$ while the sensitivity to $|\mathcal{T}|$ is larger.
}
\label{fig:fast-nc}
\end{figure}

\section{Scalability}\label{s:scalability}

In this section, we discuss the scalability of the proposed method.
More precisely, we state a scalable way of computing the FE distance and analyze its complexity in Section~\ref{ss:complexity}.
Then, we validate its effectiveness on the BlogCatalog dataset as well as another larger dataset called CoCit in Section~\ref{ss:large}.

\subsection{Computational complexity analysis}\label{ss:complexity}

As mentioned in Section~\ref{ss:fe}, the FE distance matrix $\pmb{\Delta}_{\eta}^{\mathrm{FE}}$ can be computed in closed form using the method proposed in~\cite{fe_dist}. 
This method involves a matrix inversion of one $n\times n$ matrix, thus its computation complexity is $\mathcal{O}(n^3)$ if we adopt a naive Gauss-Jordan elimination method.

A scalable way of computing the FE distance on large and sparse graphs is proposed in~\cite{fe_fast}. 
For notational simplicity, denote the \emph{directed free energy dissimilarity} from node $s$ to node $t$ by $\phi^{\mathrm{FE}}_{st}$ (which corresponds to $\phi(\mathbb{P}^{\mathrm{FE}}_{st})$ in Section~\ref{ss:fe}).
Then, its value can be computed via an iterative process as 
\begin{align}\label{E:iter}
& \phi^{\mathrm{FE}}_{st}(\tau+1) =  \\
& \quad \begin{cases}
-\frac{1}{\eta} \log \left[ \sum\limits_{i\in\mathcal{N}_s} P_{si} \exp \left[ -\eta\left(C_{si}+\phi^{\mathrm{FE}}_{it}(\tau)\right) \right] \right] & \text{if } s \neq t, \\
0 & \text{if } s=t,
\end{cases} \nonumber
\end{align}
where $\mathcal{N}_s$ is the set of (out-)neighbors of node $s$, and $\tau$ is the iteration index.  
It can be understood that $\phi^{\mathrm{FE}}_{st}(\tau)$ contains the directed FE dissimilarity from $s$ to $t$ when considering paths up to length $\tau$.
Starting from the initialization at $\tau=0$
\begin{align}
\phi^{\mathrm{FE}}_{st}(0) = 
\begin{cases}
\infty & \text{if } s \neq t,\\
0 & \text{if } s=t,
\end{cases}
\end{align}
$\phi^{\mathrm{FE}}_{st}(\tau)$ will converge to $\phi^{\mathrm{FE}}_{st}$ as $\tau$ increases.
We have observed from experiments that the iterative process converges faster for a larger $\eta$.
This is consistent with the fact that the FE distance converges to the SP distance as $\eta\to\infty$, thus shorter paths (corresponding to smaller $\tau$) are sufficient for larger $\eta$.
Denote the matrix collecting the directed FE dissimilarities between all pairs of nodes by $\mathbf{\Phi}^{\mathrm{FE}}_{\eta}$, and its relation with the FE distance matrix is $\pmb{\Delta}^{\mathrm{FE}}_{\eta} = (\mathbf{\Phi}^{\mathrm{FE}}_{\eta} + (\mathbf{\Phi}^{\mathrm{FE}}_{\eta})^{\top}) / 2$.

Inspired by~\cite{fe_fast}, we adopt two techniques to scale the computation of~\eqref{E:iter}.
First, for a fixed pair $(s,t)$, we set $x_i=C_{si}+\phi^{\mathrm{FE}}_{it}(\tau)$ for simplicity and pre-compute $x^{\ast}=\min_{i\in\mathcal{N}_s} x_i$.
Then, the expression when $s\neq t$ in~\eqref{E:iter} can be rewritten as
\begin{equation}\label{E:lse}
x^{\ast} - \frac{1}{\eta} \log \left[ \sum\limits_{i\in\mathcal{N}_s} P_{si} \exp \left[ -\eta\left( x_i - x^{\ast} \right) \right] \right]
\end{equation}
according to the log-sum-exp trick, which can help avoid numerical underflow problems.
Moreover, the terms in the summation for which $\eta(x_i - x^{\ast})$ exceeds a certain threshold value (which we set to $7$ in our experiments) can be ignored to reduce the number of terms to be computed. 
The second adopted technique is to consider only paths bounded by a given length $L \ll n$. 
Putting it differently, we terminate the iterative process in~\eqref{E:iter} when $\tau$ reaches $L$ instead of repeating the iteration until convergence.
We denote the approximated FE distance and dissimilarity matrices by $\pmb{\Delta}^{\mathrm{FE}}_{\eta, L}$ and $\mathbf{\Phi}^{\mathrm{FE}}_{\eta, L}$, respectively.


\begin{table}
\caption{\small The influence of (a) the number of iterative steps $L$ and (b) the number of sampled target nodes $|\mathcal{T}|$ on the performance of link prediction for the BlogCatalog network. Good performance can be achieved even for small values of $L$ and small sensitivity to $|\mathcal{T}|$ can be observed for the range considered.}
\label{table:fast-lp}
\begin{subtable}{0.22\textwidth}
\caption{} 
\begin{tabular}{l c|p{0.4cm}p{0.4cm}p{0.4cm}p{0.4cm}} 
 \Xhline{0.8pt}
	& $L$ & Avg & Hada & L1 & L2  \\
 \hline
  \multirow{3}{0.01cm}{\rotatebox{90}{symm}}  & 5   &  0.952 & \underline{0.954} & 0.822 & 0.827   \\
    & 10  &  0.953 & \underline{0.958} & 0.809 & 0.811   \\
    & 15  &  0.953 & \underline{0.960} & 0.810 & 0.809   \\
 \hline
   \multirow{3}{0.01cm}{\rotatebox{90}{asym}} & 5   &  \underline{0.924} & 0.793 & 0.799 & 0.801    \\
    & 10  &  \underline{0.912} & 0.785 & 0.811 & 0.814    \\
    & 15  &  \underline{0.913} & 0.775 & 0.828 & 0.830     \\
 \Xhline{0.8pt}
\end{tabular}
\end{subtable}
\hspace{2em}
\begin{subtable}{0.22\textwidth}
\caption{} 
\begin{tabular}{p{0.4cm}|p{0.4cm}p{0.4cm}p{0.4cm}p{0.4cm}} 
 \Xhline{0.8pt}
  $|\mathcal{T}|$	& Avg & Hada & L1 & L2 \\
 \hline
$2^7$ &    \underline{0.895} & 0.734 & 0.617 & 0.616  \\
$2^8$ &    \underline{0.897} & 0.746 & 0.629 & 0.629  \\
$2^9$ &    \underline{0.908} & 0.748 & 0.650 & 0.651  \\
$2^{10}$ & \underline{0.908} & 0.773 & 0.672 & 0.673 \\
$2^{11}$ & \underline{0.909} & 0.785 & 0.702 & 0.702 \\
$2^{12}$ & \underline{0.913} & 0.797 & 0.743 & 0.744 \\
 \Xhline{0.8pt}
\end{tabular}
\end{subtable}
\end{table}

Considering a single iteration, it can be observed from~\eqref{E:lse} that the computational complexity of updating $\phi^{\mathrm{FE}}_{st}(\tau)$ for one pair of nodes $(s,t)$ is in the order of $|\mathcal{N}_s|$.
This implies that for a fixed $t$ and all possible $s$, the computational complexity is $\sum_{s\in\mathcal{V}}|\mathcal{N}_s|$, thus for all possible pairs $(s,t)$ we get a complexity of $n\times \sum_{s\in\mathcal{V}}|\mathcal{N}_s|=2n|\mathcal{E}|$.
The total computational complexity for $L$ iterations is $2Ln|\mathcal{E}|$.
Moreover, from~\eqref{E:iter} we can see that the computation of $\phi^{\mathrm{FE}}_{st}(\tau+1)$ only requires $\phi^{\mathrm{FE}}_{it}(\tau)$ for $i\in\mathcal{N}_s$, hence $\phi^{\mathrm{FE}}_{st}(\tau)$ for every $t\in\mathcal{V}$ (namely columns in $\mathbf{\Phi}^{\mathrm{FE}}_{\eta, L}$) can be computed in parallel.

To further reduce the computational complexity, instead of considering the directed free energy dissimilarities from every node to all other nodes, we propose to consider a subset of nodes $\mathcal{T}\subseteq \mathcal{V}$ as the target nodes.   
Denote by $\mathbf{\Phi}^{\mathrm{FE}}_{\eta, L, \mathcal{T}}$ the matrix composed of columns of $\mathbf{\Phi}^{\mathrm{FE}}_{\eta, L}$ indexed by $\mathcal{T}$.
Computing $\mathbf{\Phi}^{\mathrm{FE}}_{\eta, L, \mathcal{T}}$ instead of the whole matrix reduces the computation complexity by a factor of $n/|\mathcal{T}|$.
We can sample $\mathcal{T}$ from $\mathcal{V}$ randomly or select the nodes with higher degrees.

We can convert any of these dissimilarity matrices $\pmb{\Delta}^{\mathrm{FE}}_{\eta, L}$, $\mathbf{\Phi}^{\mathrm{FE}}_{\eta, L}$ and $\mathbf{\Phi}^{\mathrm{FE}}_{\eta, L, \mathcal{T}}$ to a similarity matrix using \eqref{e:fe_sim}, then adopt \eqref{e:fe_embed} or \eqref{E:obj} to factorize the similarity matrix and obtain the node embeddings. 
The similarity matrix computed from $\pmb{\Delta}^{\mathrm{FE}}_{\eta, L}$ is symmetric, thus we can directly adopt \eqref{e:fe_embed}.
The similarity matrices computed from $\mathbf{\Phi}^{\mathrm{FE}}_{\eta, L}$ and $\mathbf{\Phi}^{\mathrm{FE}}_{\eta, L, \mathcal{T}}$ are, respectively, asymmetric and non-square. 
However, we can still use \eqref{E:obj} or a modified version\footnote{Note that Proposition~\ref{P:main_result} is also valid for a non-square matrix $\mathbf{S}$, in which case, we only need to adjust the dimensions of $\mathbf{U}$ and $\mathbf{V}$.} to factorize them and use the computed $\mathbf{U}$ as the node embeddings.

\begin{figure}
\centering
\includegraphics[scale=0.5]{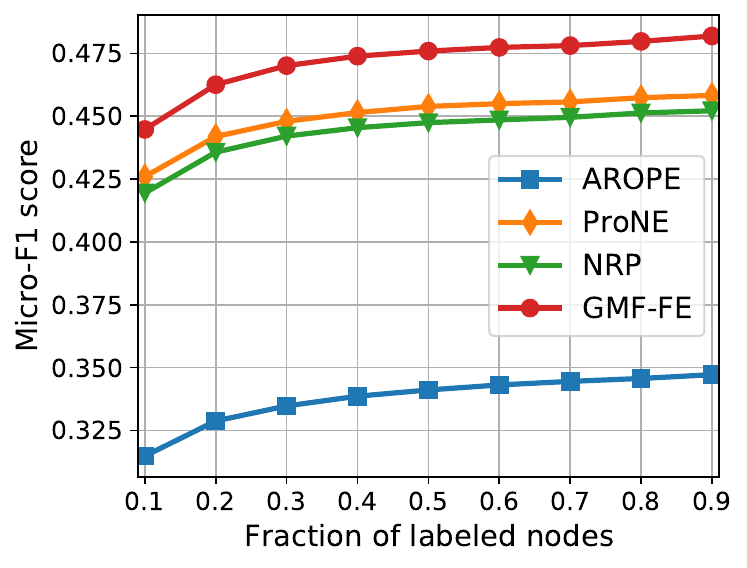}
\caption{\small Node classification results on the CoCit dataset. For GMF-FE, $\mathbf{\Phi}^{\mathrm{FE}}_{\eta, L, \mathcal{T}}$ is used to compute node embeddings where $\mathcal{T}$ consists of nodes with degree no smaller than $20$ and $L=10$.}
\label{fig:nc_cocit}
\end{figure}


\subsection{Numerical results}\label{ss:large}

We first consider the BlogCatalog dataset (and set $\eta$ to $10^{-1}$).
Fig.~\ref{fig:fast-nc}(a) and Table~\ref{table:fast-lp}(a) show the influence of the number of iterative steps $L$ on performance.
The schemes adopting $\pmb{\Delta}^{\mathrm{FE}}_{\eta, L}$ and $\mathbf{\Phi}^{\mathrm{FE}}_{\eta, L}$ are denoted by `symm' and `asym', respectively.
It can be observed from Fig.~\ref{fig:fast-nc}(a) that, in node classification, adopting $\pmb{\Delta}^{\mathrm{FE}}_{\eta, L}$ yields a slightly better performance than adopting $\mathbf{\Phi}^{\mathrm{FE}}_{\eta, L}$, while both of them achieve comparable performance to adopting the exact FE distance matrix $\pmb{\Delta}^{\mathrm{FE}}_{\eta}$ [cf. Fig.~\ref{fig:nc}] for a very small $L$.
For example, when half of the nodes are labeled, adopting $\pmb{\Delta}^{\mathrm{FE}}_{\eta, L}$ and $\mathbf{\Phi}^{\mathrm{FE}}_{\eta, L}$ with $L=10$ yield a micro-F1 score of $0.422$ and $0.418$ respectively, which are comparable with the performance of adopting $\pmb{\Delta}^{\mathrm{FE}}_{\eta}$, i.e., $0.423$.
We can see from Table~\ref{table:fast-lp}(a) that, in link prediction, using $\pmb{\Delta}^{\mathrm{FE}}_{\eta, L}$ yields about the same performance as adopting $\pmb{\Delta}^{\mathrm{FE}}_{\eta}$ [cf. Table~\ref{table-lp}]. 
For instance, adopting $\pmb{\Delta}^{\mathrm{FE}}_{\eta, L}$ ($L=10$) and $\pmb{\Delta}^{\mathrm{FE}}_{\eta}$, both with the Hadamard operator, respectively achieve a AUC score of $0.958$ and $0.957$.
When adopting the asymmetric $\mathbf{\Phi}^{\mathrm{FE}}_{\eta, L}$, the Average operator becomes the optimal choice.
Although performance degrades, it is still comparable with some baselines like DeepWalk.
Overall, for both tasks, even small values of $L$ can result in good performance, significantly reducing the computation time.
 
Fig.~\ref{fig:fast-nc}(b) and Table~\ref{table:fast-lp}(b) show the effect of the number of sampled target nodes $|\mathcal{T}|$ on performance when basing our node embeddings on the matrix $\mathbf{\Phi}^{\mathrm{FE}}_{\eta, L, \mathcal{T}}$, where we set $L=10$.
Here we adopt a random sampling strategy.
For each curve in Fig.~\ref{fig:fast-nc}(b), the last point corresponds to the case $\mathcal{T}=\mathcal{V}$, namely we use the whole matrix $\mathbf{\Phi}^{\mathrm{FE}}_{\eta, L}$.
We can see that, in node classification, accuracy increases as $|\mathcal{T}|$ increases and tends to saturate once $|\mathcal{T}|$ reaches $2^{12}$, especially for low fractions of labeled nodes. 
This implies that we can achieve performance similar to that using $\mathbf{\Phi}^{\mathrm{FE}}_{\eta, L}$ while reducing the computation time by a half. 
Moreover, Fig.~\ref{fig:fast-nc}(b) reveals an interesting trade-off between performance and computation. 
For example, for a fraction of labeled nodes of $0.3$, we can reduce the total computation by a factor of around $10$ by selecting $|\mathcal{T}| = 2^{10}$ and paying a reduction in micro-F1 score of $0.023$.
Finally, it can be observed in Table~\ref{table:fast-lp}(b) that, for link prediction, although the performance slightly improves as $|\mathcal{T}|$ increases, it remains relatively stable.
Thus, if an accuracy of around $0.9$ is acceptable, it can be achieved with a largely reduced computational effort.

\begin{table}
\caption{\small Link prediction results on the CoCit dataset.}\label{table:lp_cocit}
\begin{center}
\begin{tabular}{l|p{0.57cm}p{0.57cm}p{0.57cm}p{0.63cm}} 
 \Xhline{0.8pt}
	Method & Avg & Hada & L1 & L2 \\ [0.2em]
 	\hline
 	AROPE & 0.769 & \underline{0.874} & 0.695 & 0.679  \\ [0.2em]
 	ProNE & 0.710 & \underline{0.950} & 0.939 & 0.938 \\ [0.2em]
 	NRP   & 0.727 & \underline{0.955} & 0.951 & 0.952  \\ [0.2em]
 	GMF-FE & 0.726 &  \textbf{\underline{0.967}} &  0.961 & 0.961  \\
  \Xhline{0.8pt}
\end{tabular}
\end{center}
\end{table}

Finally, we consider a larger dataset and a different sampling strategy.
We adopt the CoCit dataset~\cite{zhou2021direction,tsitsulin2021frede}, which is a citation network.
Each node (publication) is associated with a single label indicating the conference where it is published.
The dataset statistics can be found in Table~\ref{table-datasets}.
Its degree distribution follows a power law and most nodes have a relatively small degree.
We select nodes of degree greater or equal to $20$ as the target nodes in $\mathcal{T}$, accounting for around $10\%$ of the whole node set.
We compute the embeddings based on approximate directed free energy dissimilarities $\mathbf{\Phi}^{\mathrm{FE}}_{\eta, L, \mathcal{T}}$ and set $L=10$.
Similar to the previous section, the optimal $\eta$ is obtained via a line search.
We compare our performance with three highly scalable baselines.
Numerical results for node classification and link prediction are given in Fig.~\ref{fig:nc_cocit} and Table~\ref{table:lp_cocit}, respectively.
It can be observed that GMF-FE outperforms all the baseline methods on both tasks, even with incomplete and approximate free energy dissimilarities.

\section{Conclusions}\label{s:conclusion}
We proposed a node embedding method based on the FE distance and a generalized skip-gram matrix factorization.
The parametric FE distance equips the method with the flexibility needed to adapt to different graph types and sizes. 
Moreover, the matrix factorization proposed can be applied to arbitrary similarity matrices and focuses on preserving node pairs with high similarity scores, thus making it well-suited for the task of node embedding. 
Experimental results validated the effectiveness of the proposed scheme. 

Many existing methods use (different variants of) random walks as building blocks, while this paper draws attention to similarities based on (hitting) paths and opens new possibilities for future research. For example, in real-world applications, one can consider non-uniform a priori probabilities of choosing the start and terminal nodes or extra penalization of specific undesirable paths~\cite{franccoisse2017bag}. Other directions for future work include:
i) Theoretical analysis of GMF, especially the optimal choice of $\mathbf{S}^{+}$ and $\mathbf{S}^{-}$ given a specific $\mathbf{S}$; 
ii) Generalization of the proposed node embedding method to digraphs (by leveraging the directed FE dissimilarity), multi-view graphs~\cite{crsp}, heterogeneous graphs~\cite{pte}, hypergraphs~\cite{zhu2021co, schaub2021signal_hg}, and simplicial complexes~\cite{schaub2021signal_sc};
iii) Designing better strategies for sampling the set of target nodes $\mathcal{T}$.

\ifCLASSOPTIONcompsoc
  \section*{Acknowledgments}
\else
  \section*{Acknowledgment}
\fi

Research was sponsored by the Army Research Office and was accomplished under Cooperative Agreement Number W911NF-19-2-0269.
The views and conclusions contained in this document are those of the authors and should not be interpreted as representing the official policies, either expressed or implied, of the Army Research Office or the U.S. Government. 
The U.S. Government is authorized to reproduce and distribute reprints for Government purposes notwithstanding any copy-right notation herein.

\ifCLASSOPTIONcaptionsoff
  \newpage
\fi



\bibliography{embed}
\bibliographystyle{IEEEtran}

%
%
%

%






\begin{titlepage}
\centering{\Large\bfseries Supplementary material}

\renewcommand\thefigure{S\arabic{figure}}    
\setcounter{figure}{0}  

\begin{figure*}[!htbp]
\centering
\includegraphics[scale=0.42]{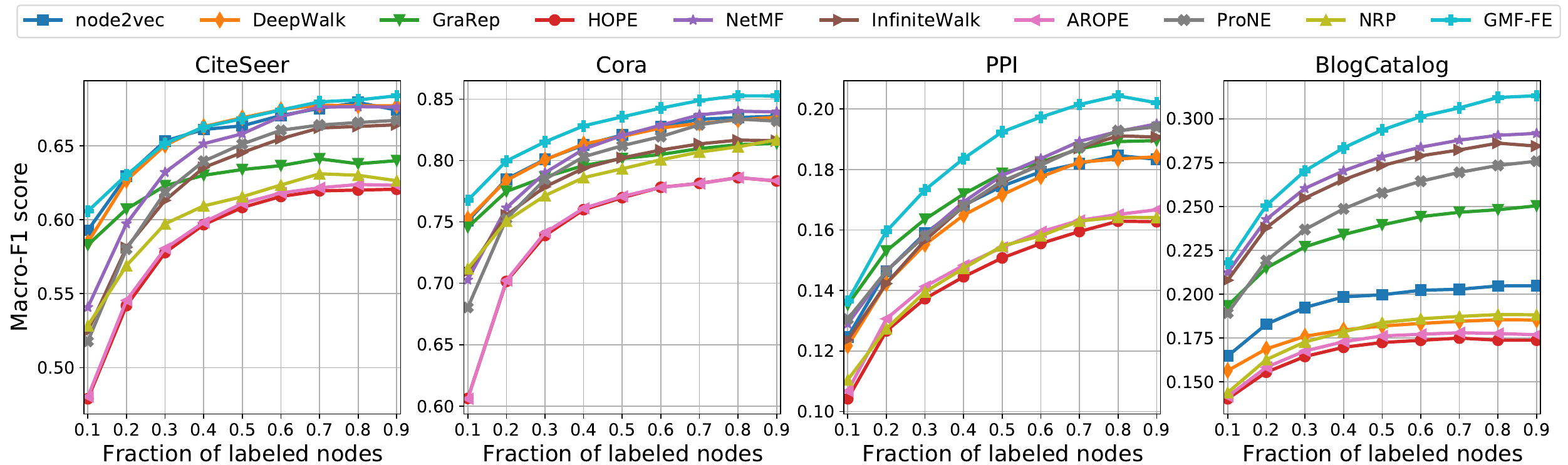}
\caption{\small Node classification results (macro-F1 scores) for different methods and datasets as a function of the fraction of labeled nodes.}
\label{fig:nc_macro}
\end{figure*}

\begin{figure*}[!htbp]
\centering
\begin{subfigure}[t]{0.24\textwidth}
\centering
\includegraphics[scale=0.2]{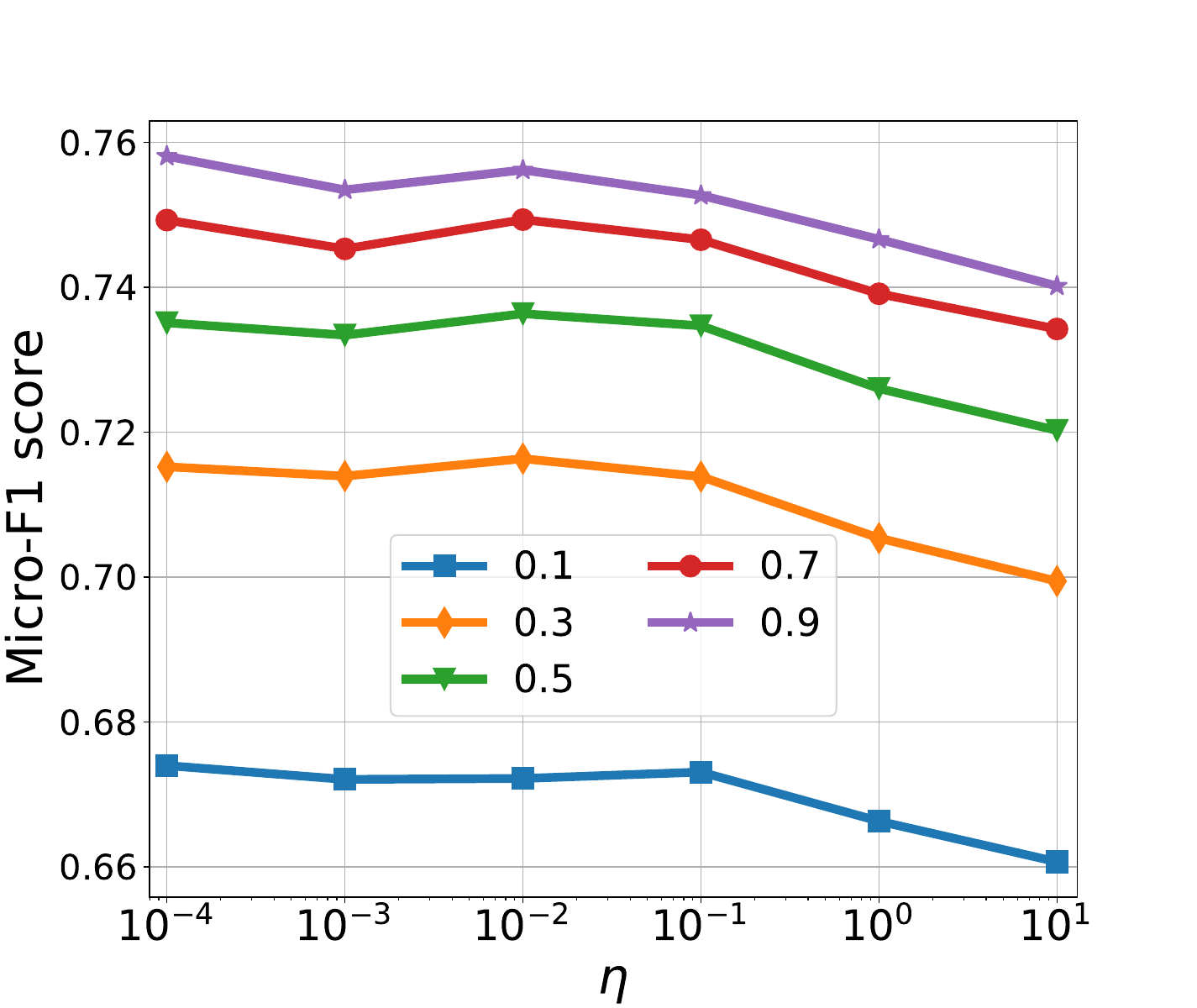}
\caption{}
\end{subfigure}
\begin{subfigure}[t]{0.24\textwidth}
\centering
\includegraphics[scale=0.2]{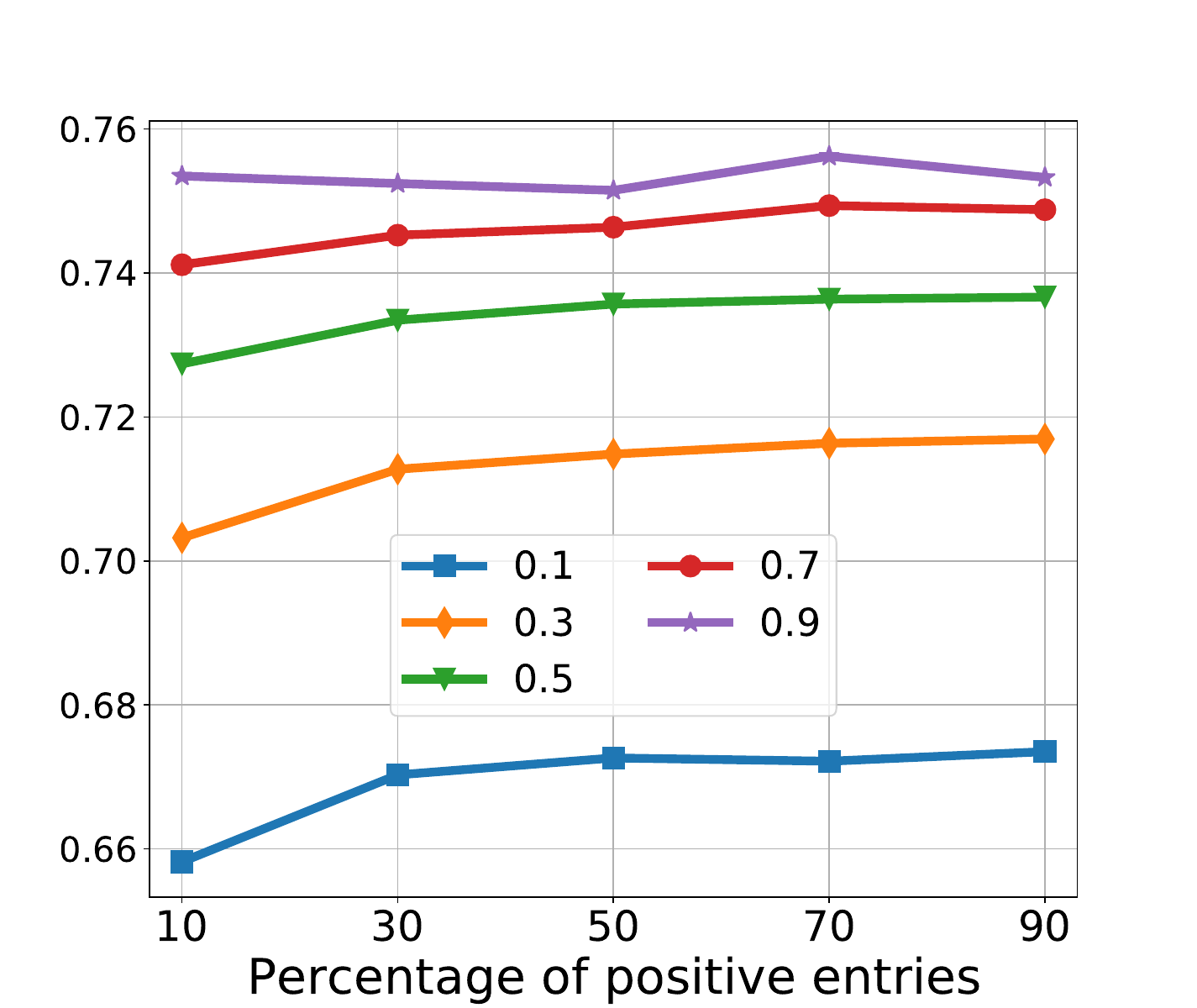}
\caption{}
\end{subfigure}
\begin{subfigure}[t]{0.24\textwidth}
\centering
\includegraphics[scale=0.2]{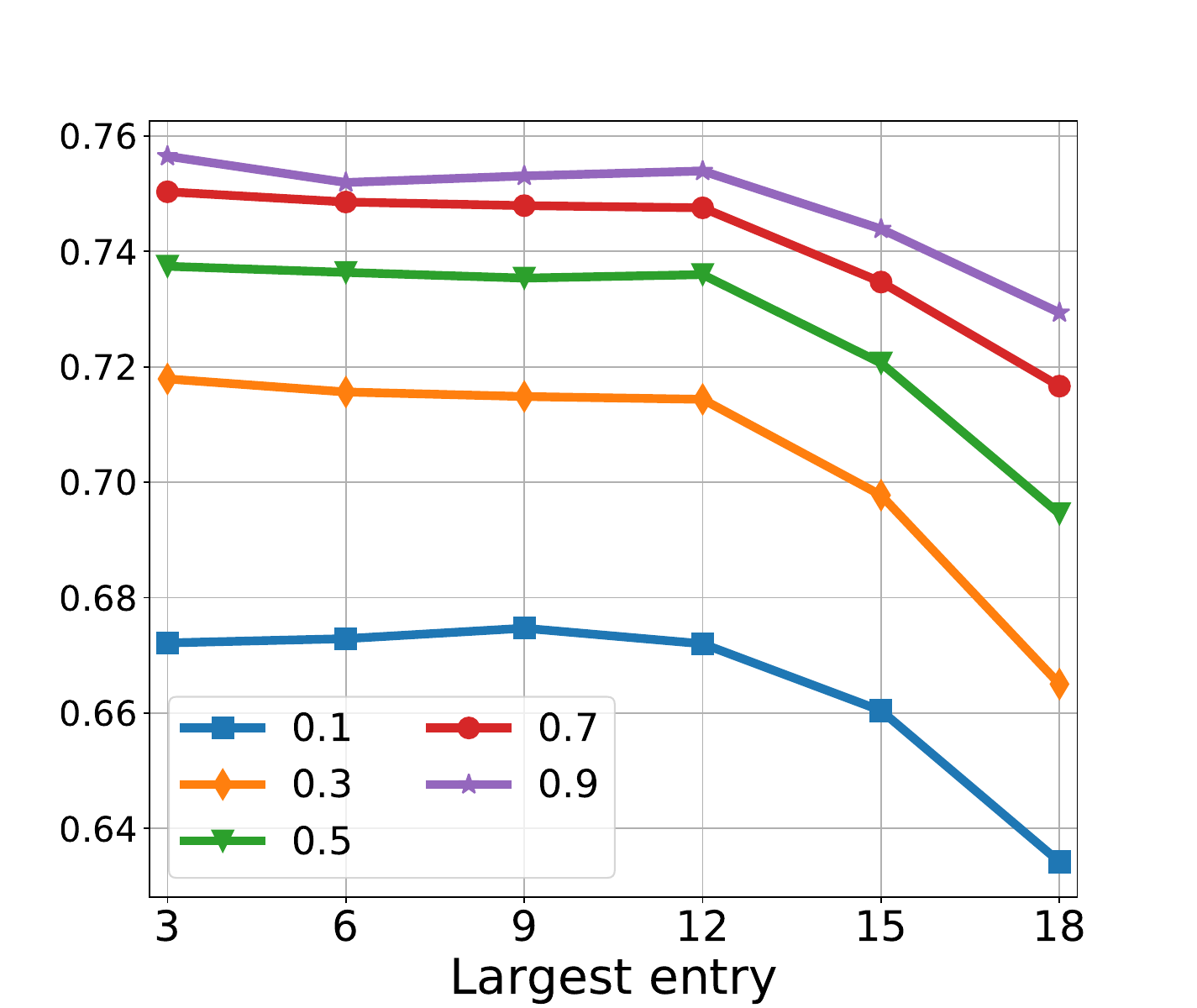}
\caption{}
\end{subfigure}
\begin{subfigure}[t]{0.24\textwidth}
\centering
\includegraphics[scale=0.2]{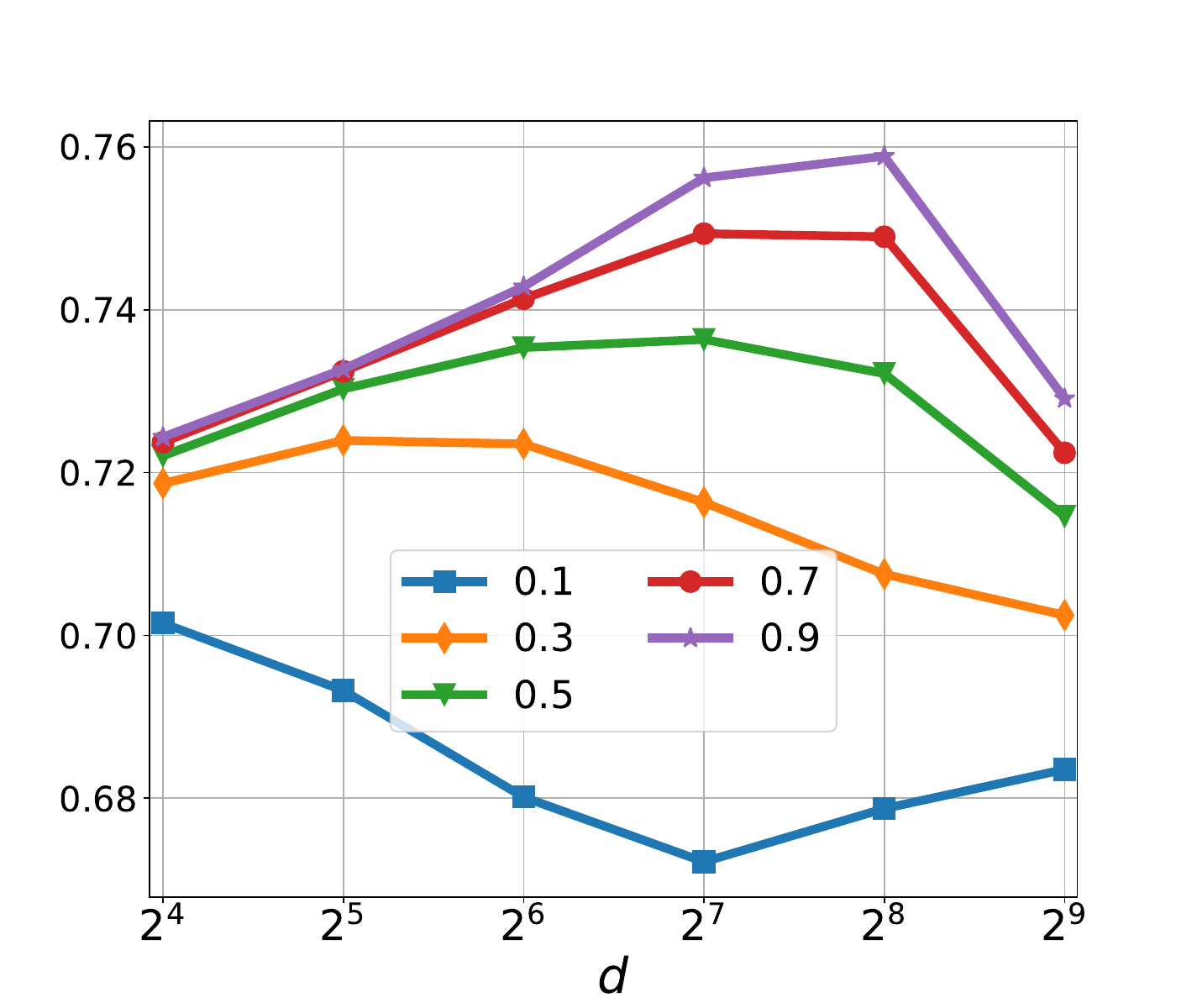}
\caption{}
\end{subfigure}
\begin{subfigure}[t]{0.24\textwidth}
\centering
\includegraphics[scale=0.2]{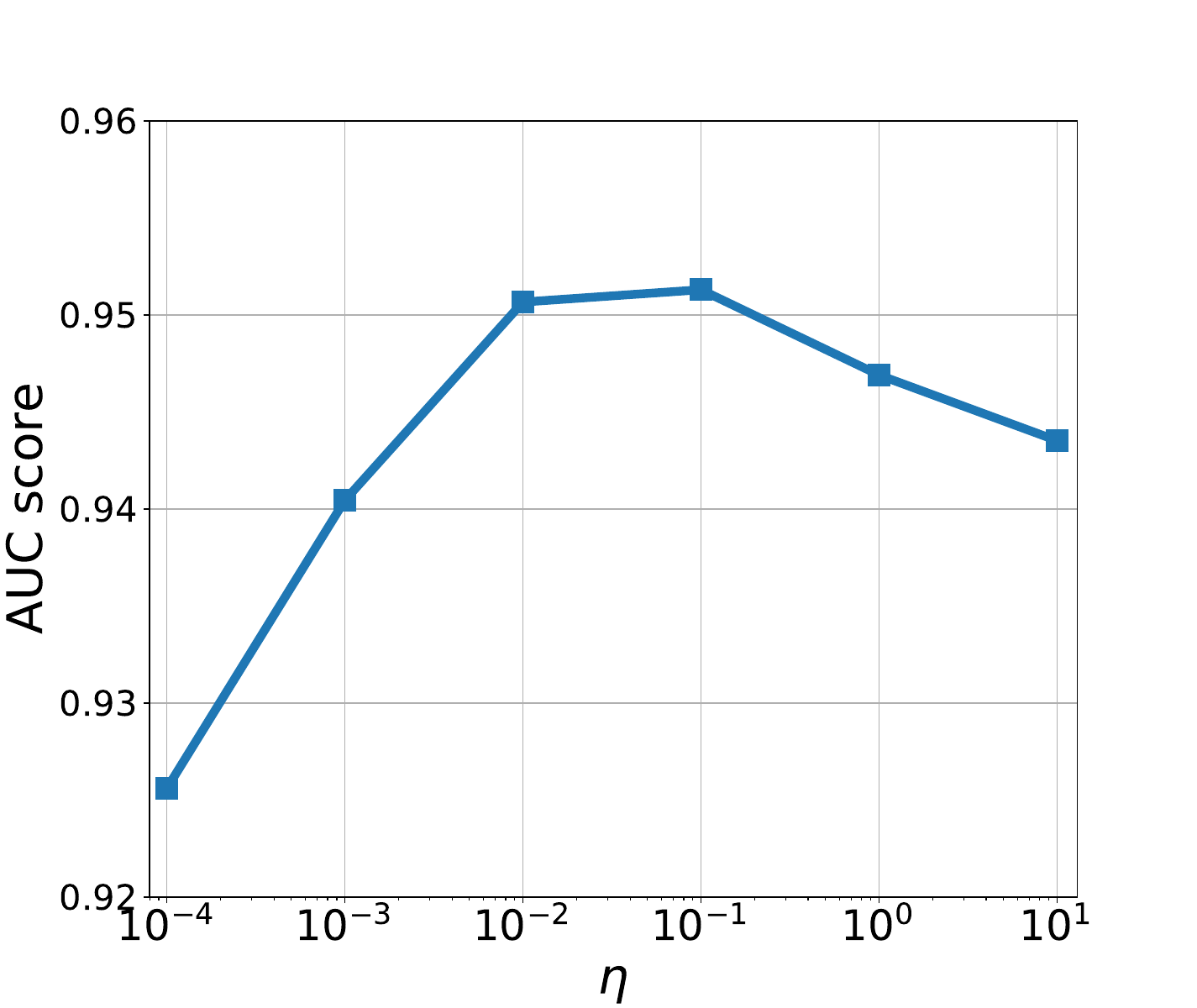}
\caption{}
\end{subfigure}
\begin{subfigure}[t]{0.24\textwidth}
\centering
\includegraphics[scale=0.2]{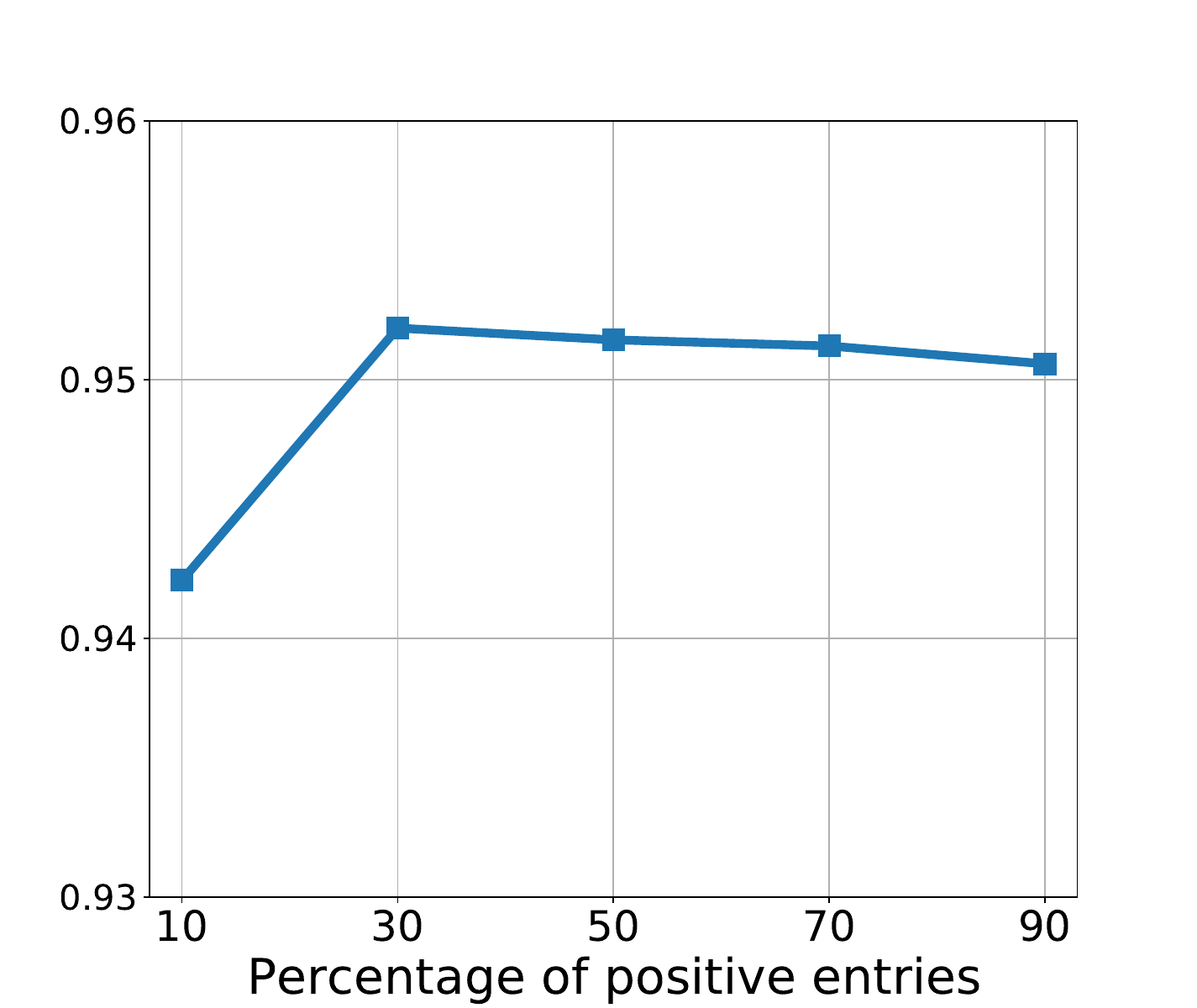}
\caption{}
\end{subfigure}
\begin{subfigure}[t]{0.24\textwidth}
\centering
\includegraphics[scale=0.2]{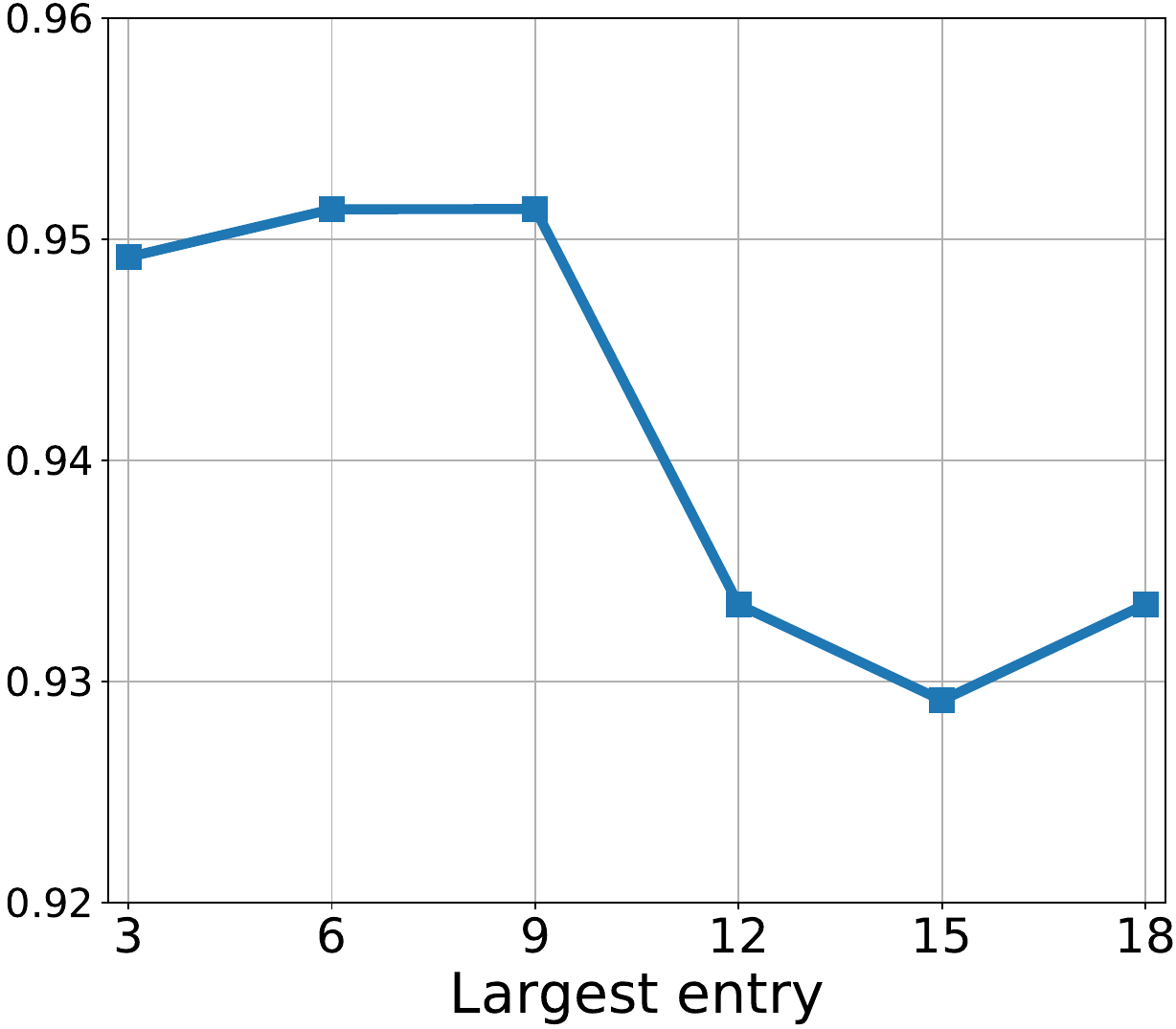}
\caption{}
\end{subfigure}
\begin{subfigure}[t]{0.24\textwidth}
\centering
\includegraphics[scale=0.2]{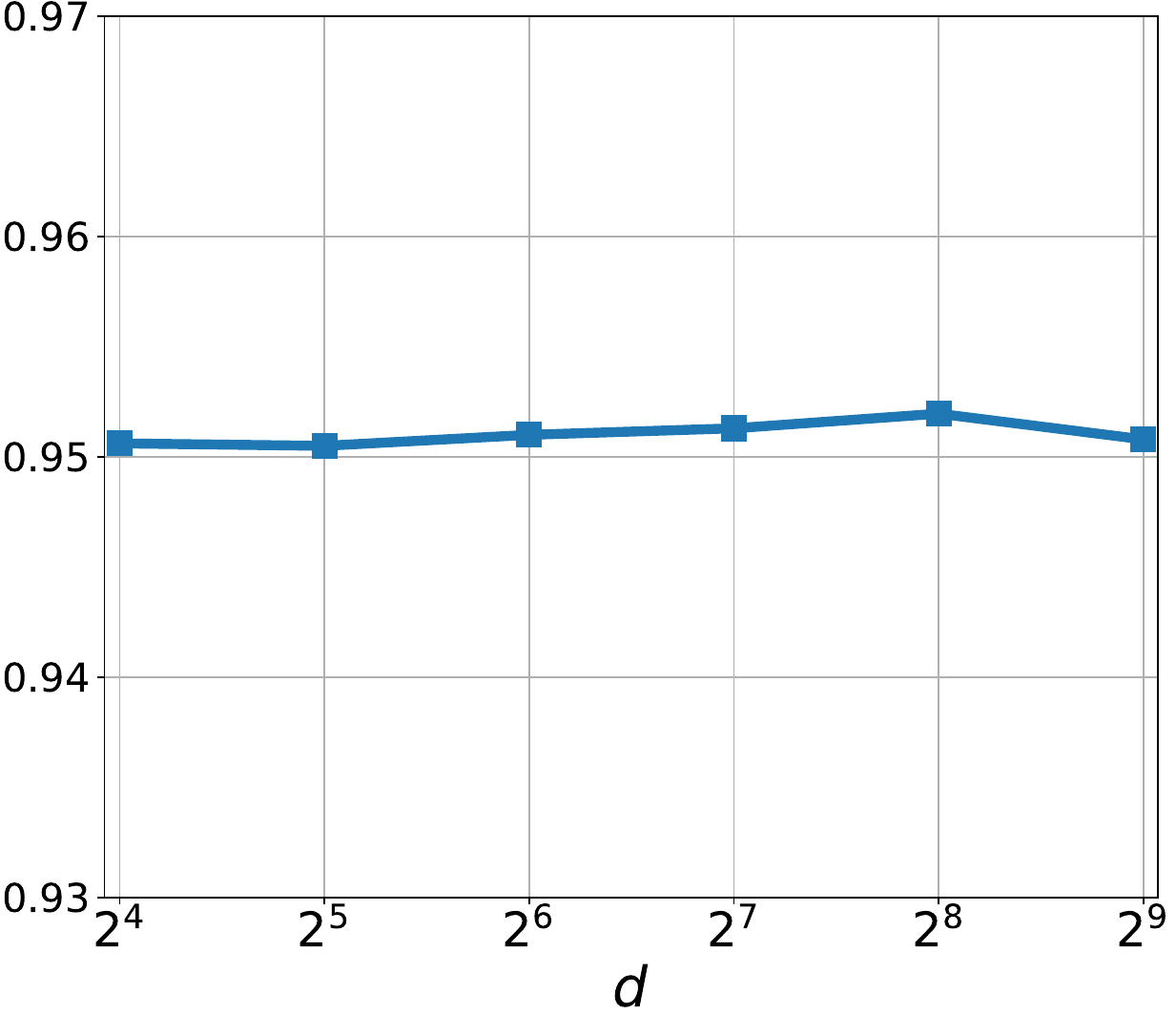}
\caption{}
\end{subfigure}
\begin{subfigure}[t]{0.24\textwidth}
\centering
\includegraphics[scale=0.2]{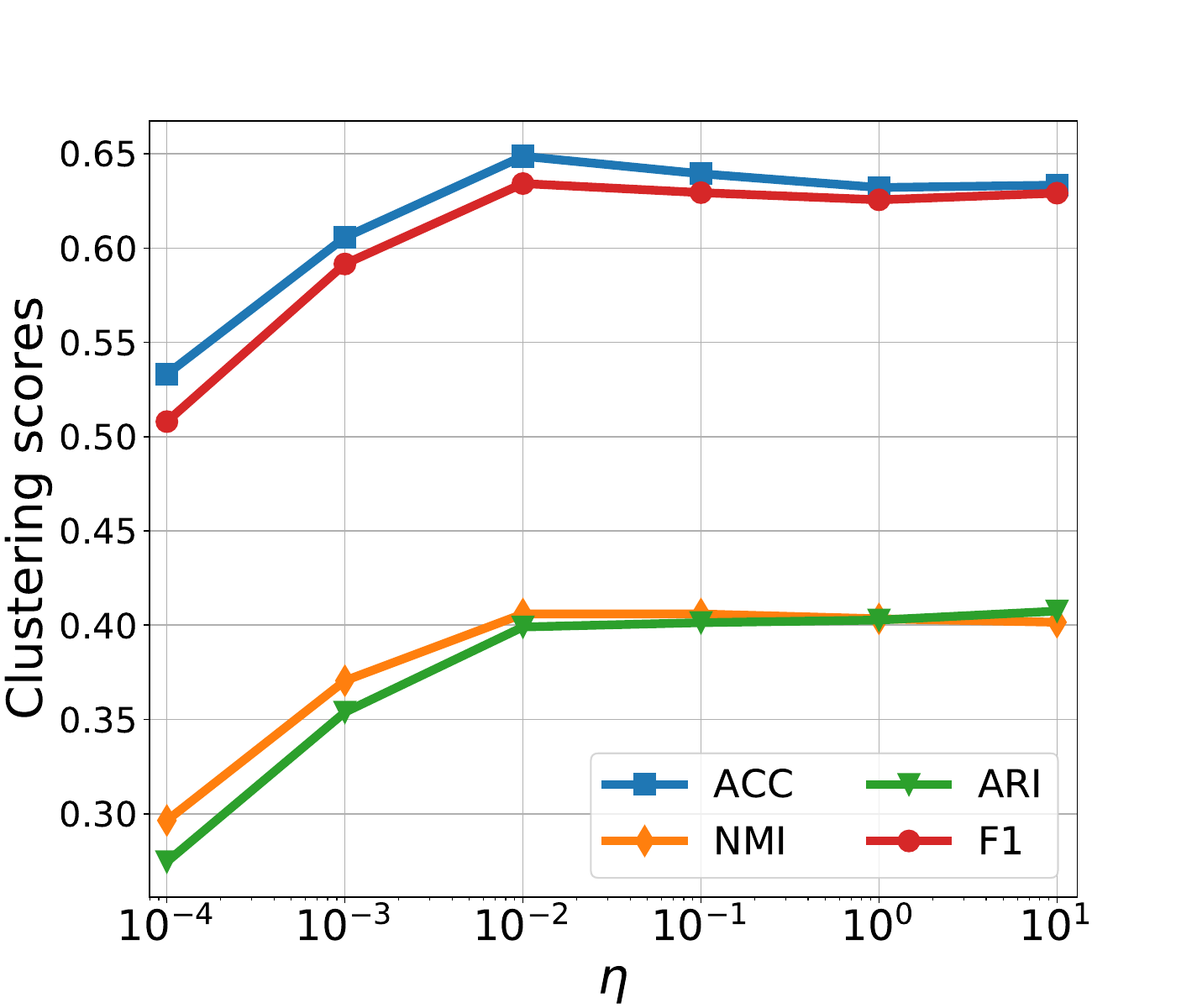}
\caption{}
\end{subfigure}
\begin{subfigure}[t]{0.24\textwidth}
\centering
\includegraphics[scale=0.2]{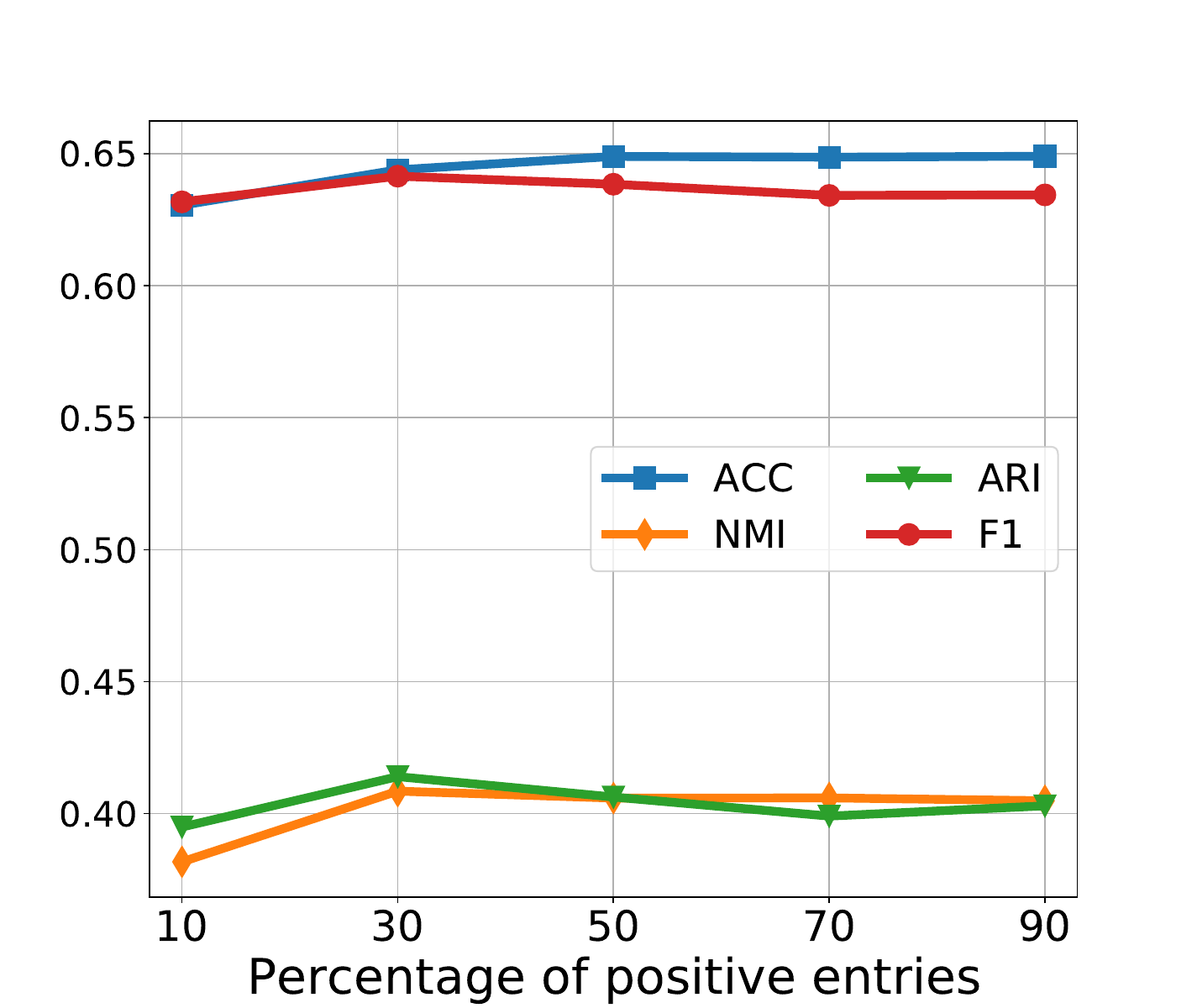}
\caption{}
\end{subfigure}
\begin{subfigure}[t]{0.24\textwidth}
\centering
\includegraphics[scale=0.2]{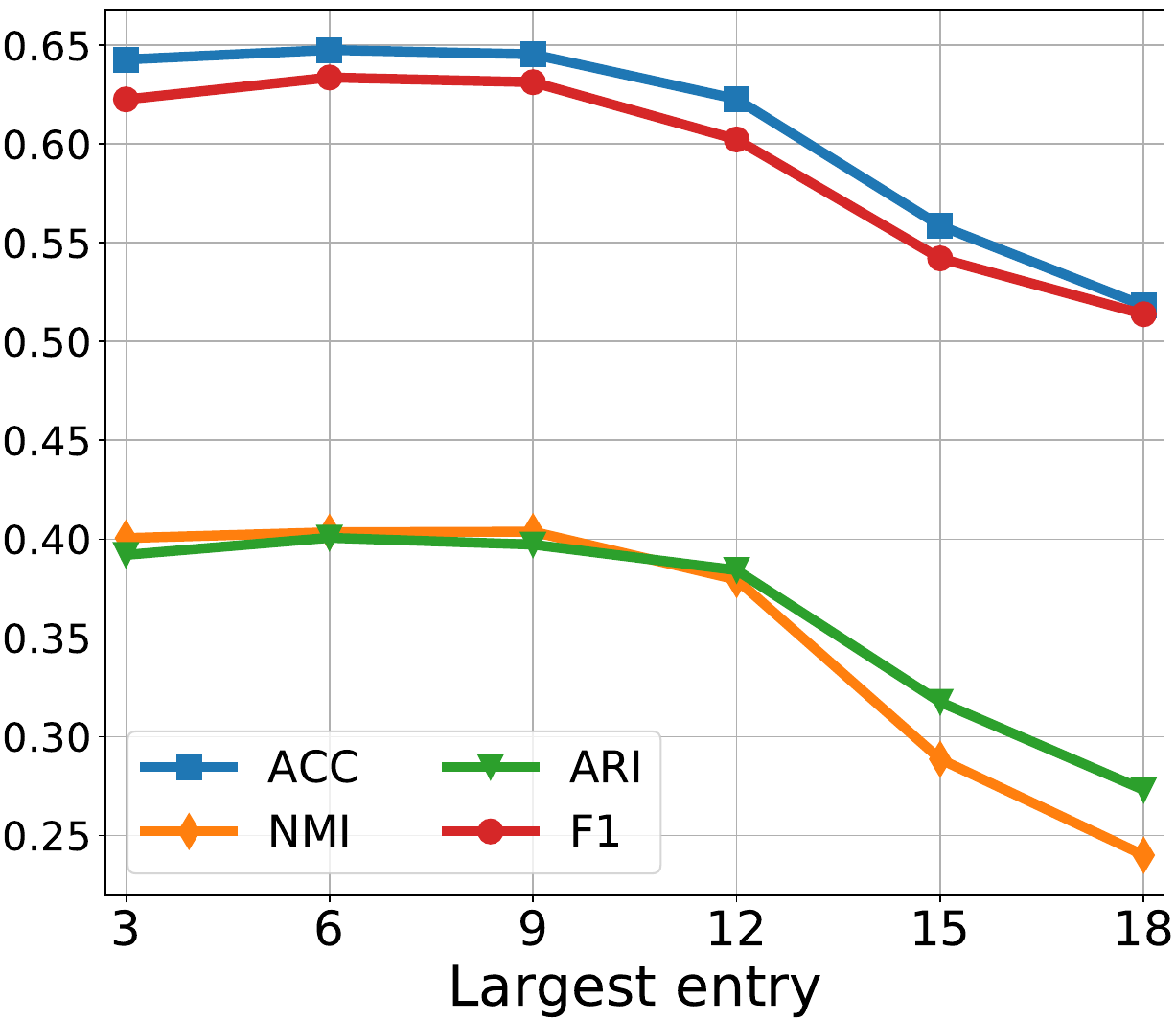}
\caption{}
\end{subfigure}
\begin{subfigure}[t]{0.24\textwidth}
\centering
\includegraphics[scale=0.2]{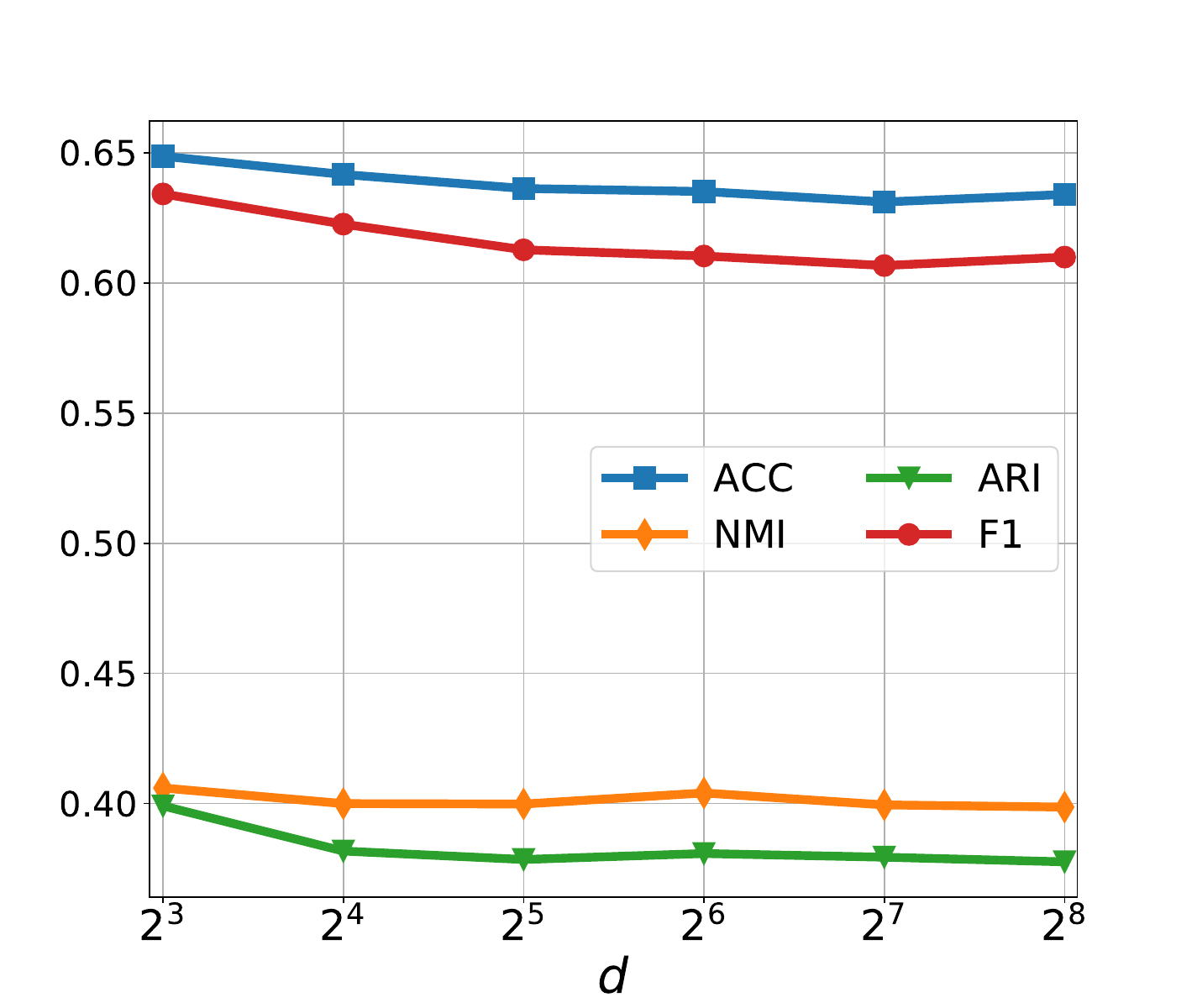}
\caption{}
\end{subfigure}
\caption{\small Parameter sensitivity test on the CiteSeer network. (a)-(d) are for node classification where the legend denotes the fraction of labeled nodes. (e)-(h) are for link prediction where the Hadamard operator is applied. (i)-(l) are for node clustering.}
\label{fig:citeseer_para}
\end{figure*}

\begin{figure*}[!htbp]
\centering
\begin{subfigure}[t]{0.24\textwidth}
\centering
\includegraphics[scale=0.2]{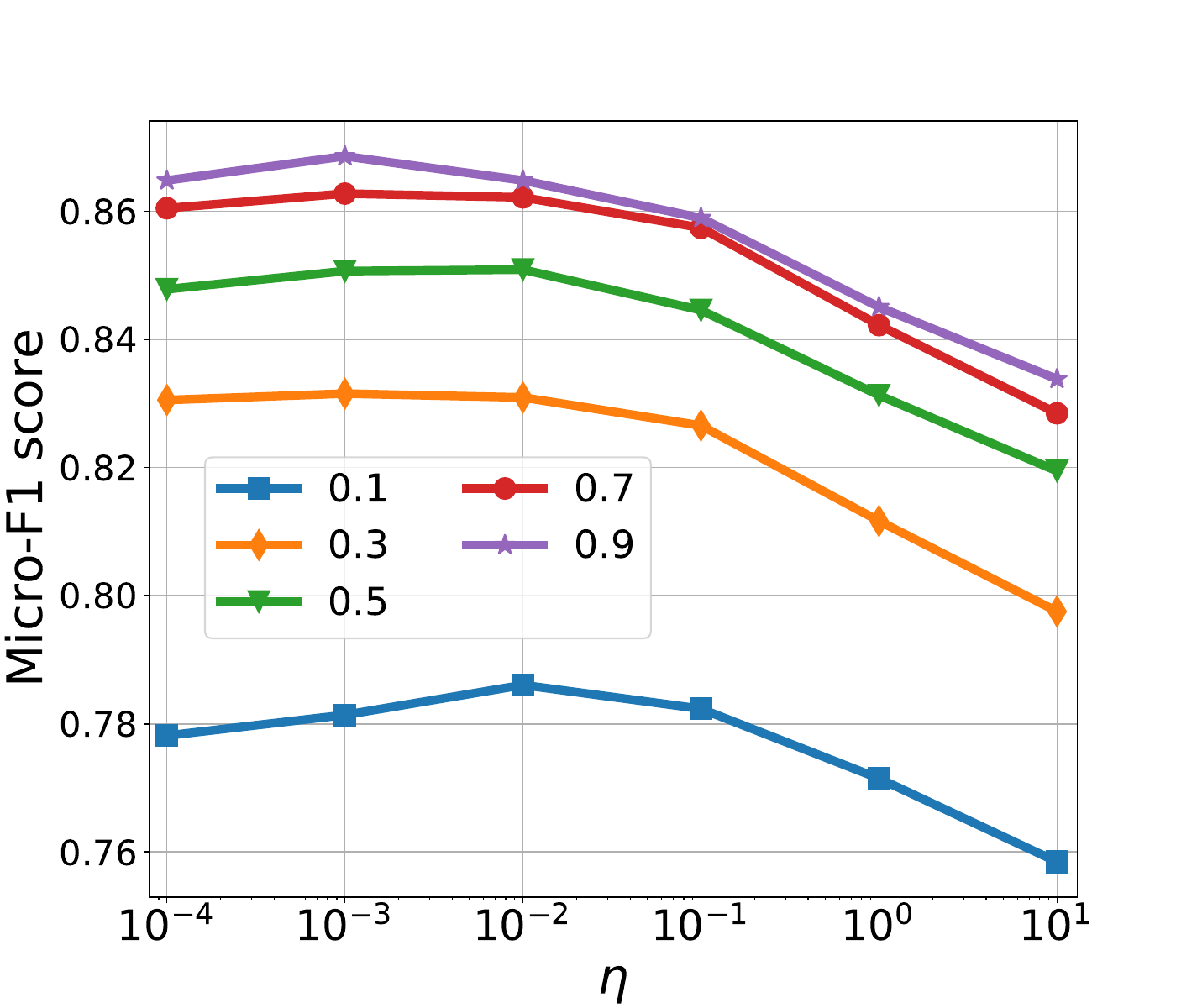}
\caption{}
\end{subfigure}
\begin{subfigure}[t]{0.24\textwidth}
\centering
\includegraphics[scale=0.2]{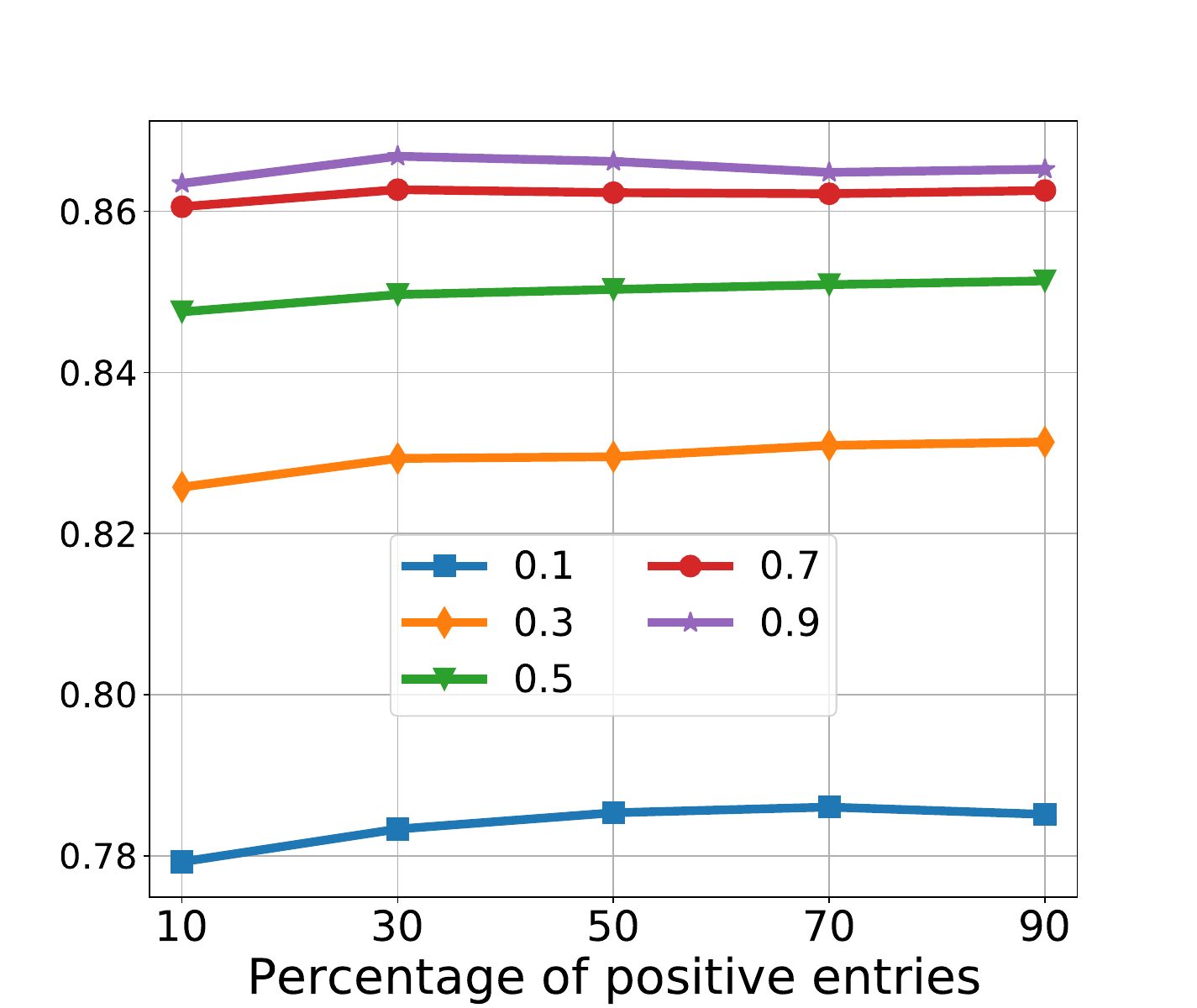}
\caption{}
\end{subfigure}
\begin{subfigure}[t]{0.24\textwidth}
\centering
\includegraphics[scale=0.2]{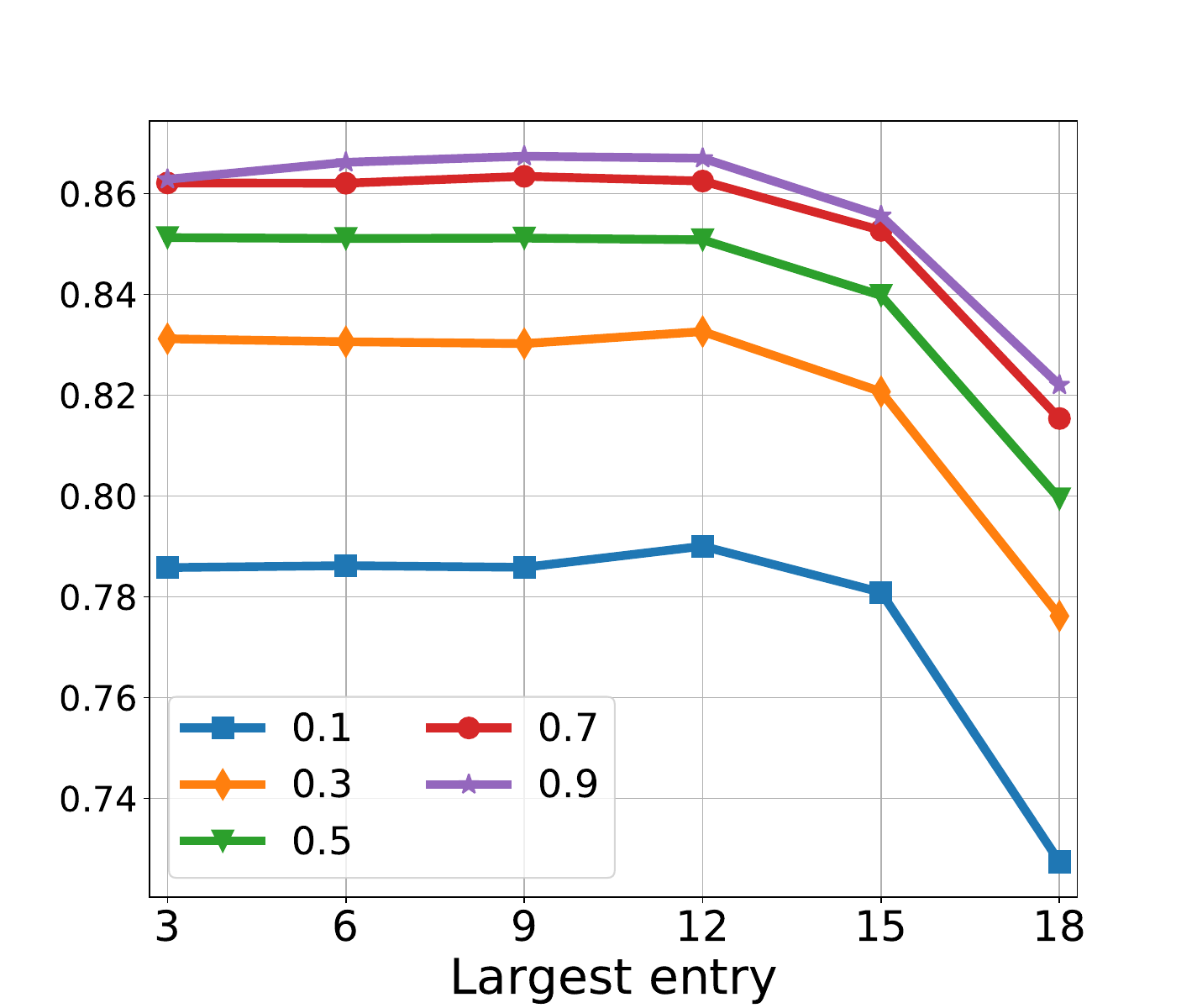}
\caption{}
\end{subfigure}
\begin{subfigure}[t]{0.24\textwidth}
\centering
\includegraphics[scale=0.2]{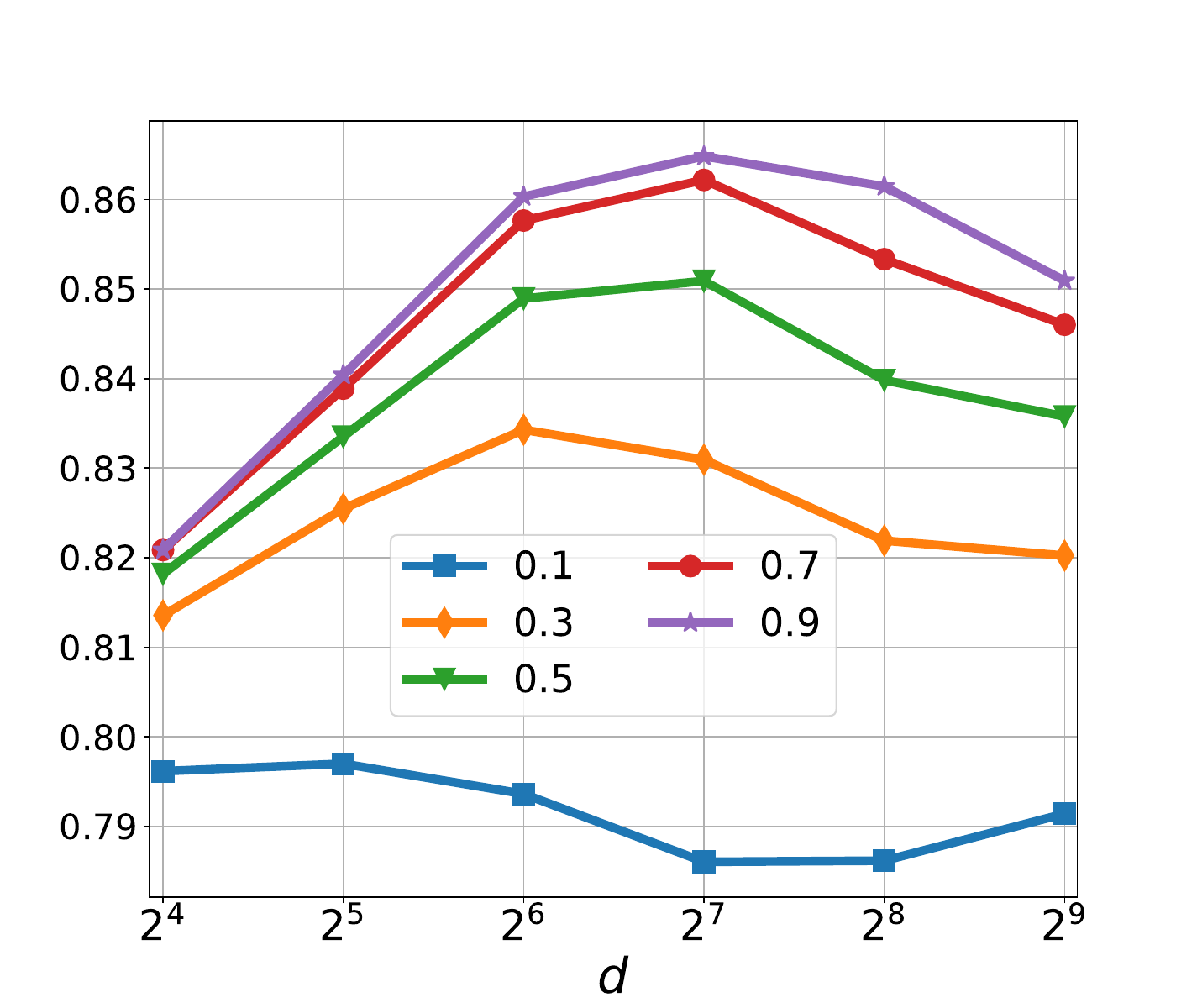}
\caption{}
\end{subfigure}
\begin{subfigure}[t]{0.24\textwidth}
\centering
\includegraphics[scale=0.2]{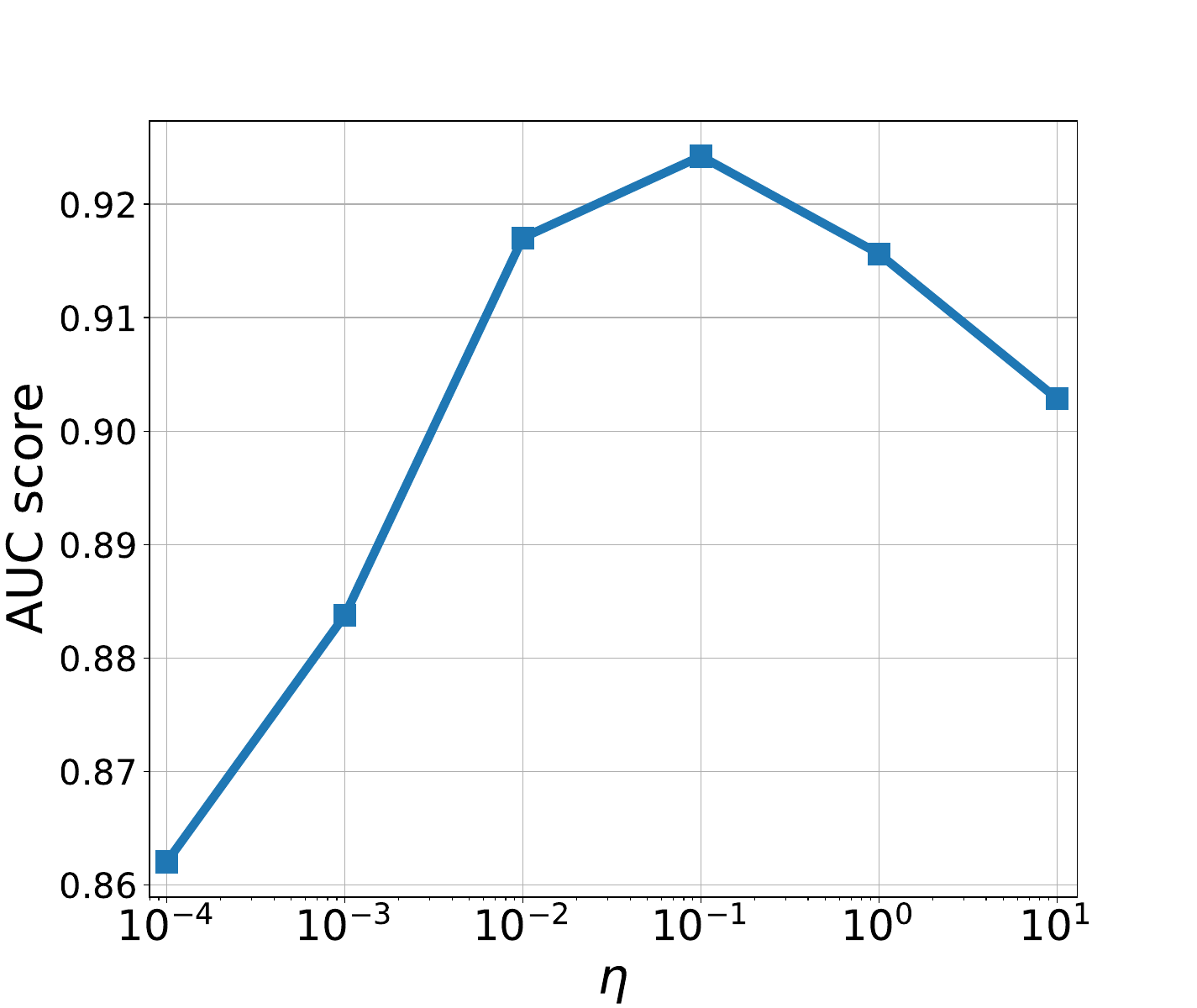}
\caption{}
\end{subfigure}
\begin{subfigure}[t]{0.24\textwidth}
\centering
\includegraphics[scale=0.2]{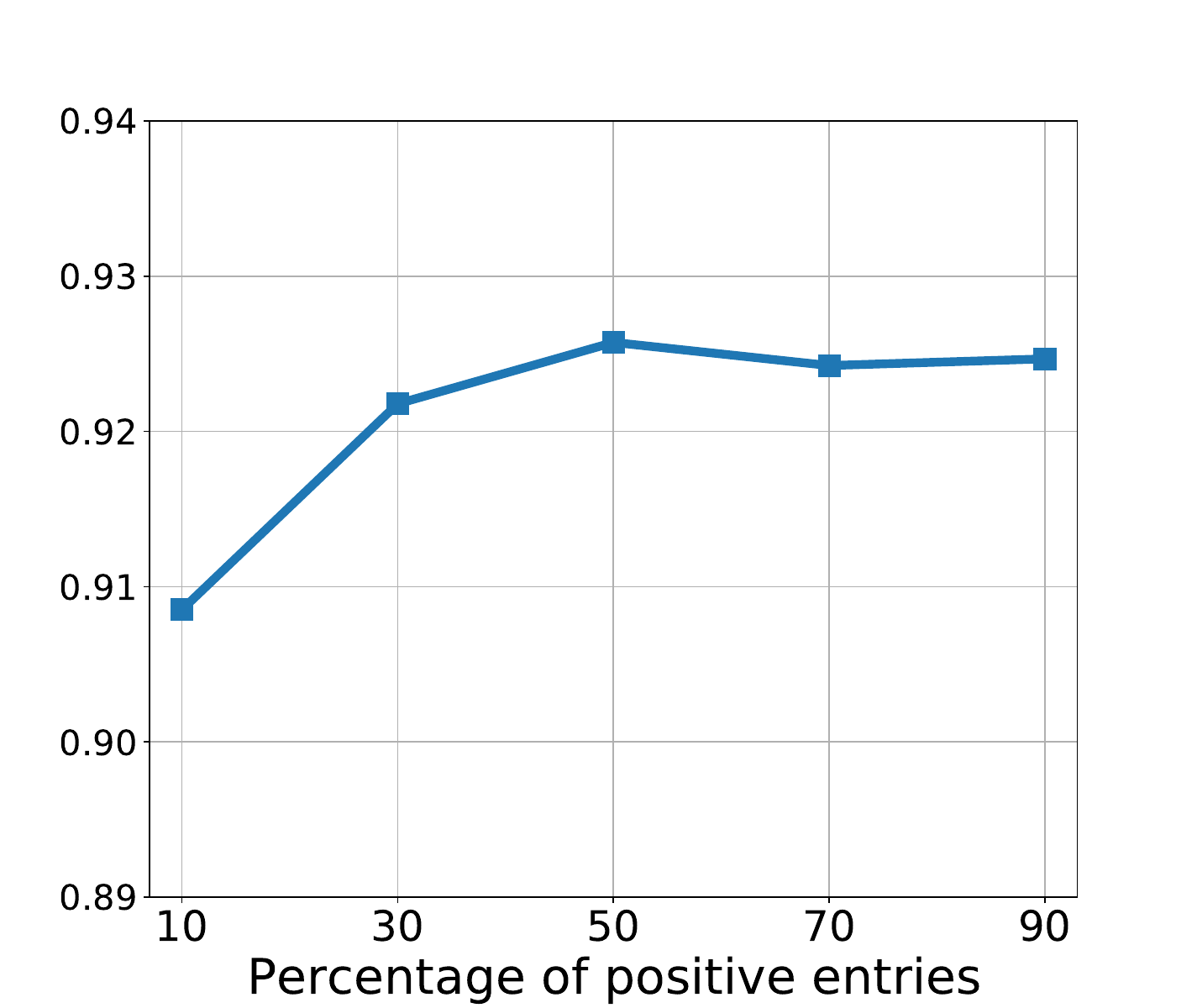}
\caption{}
\end{subfigure}
\begin{subfigure}[t]{0.24\textwidth}
\centering
\includegraphics[scale=0.2]{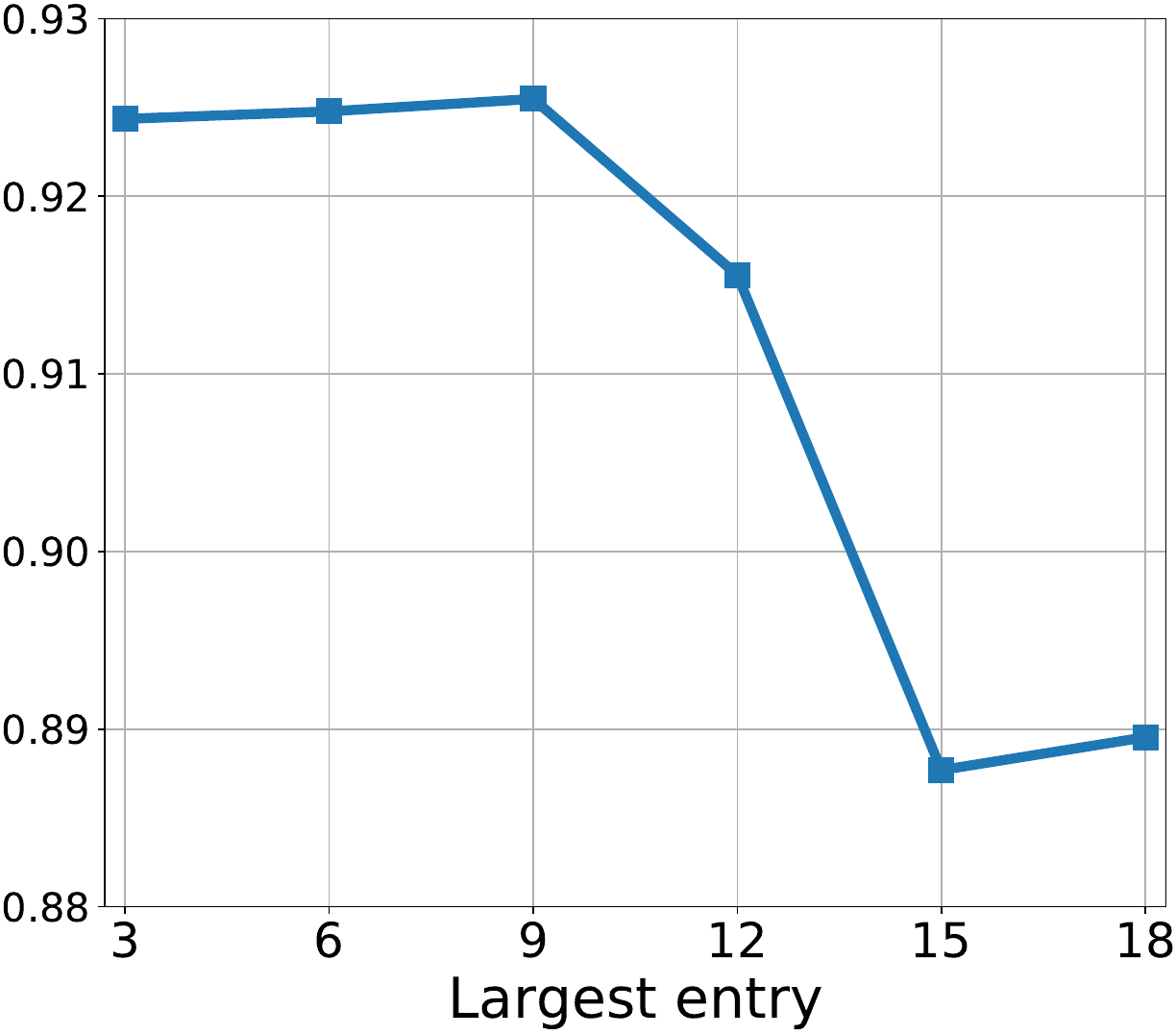}
\caption{}
\end{subfigure}
\begin{subfigure}[t]{0.24\textwidth}
\centering
\includegraphics[scale=0.2]{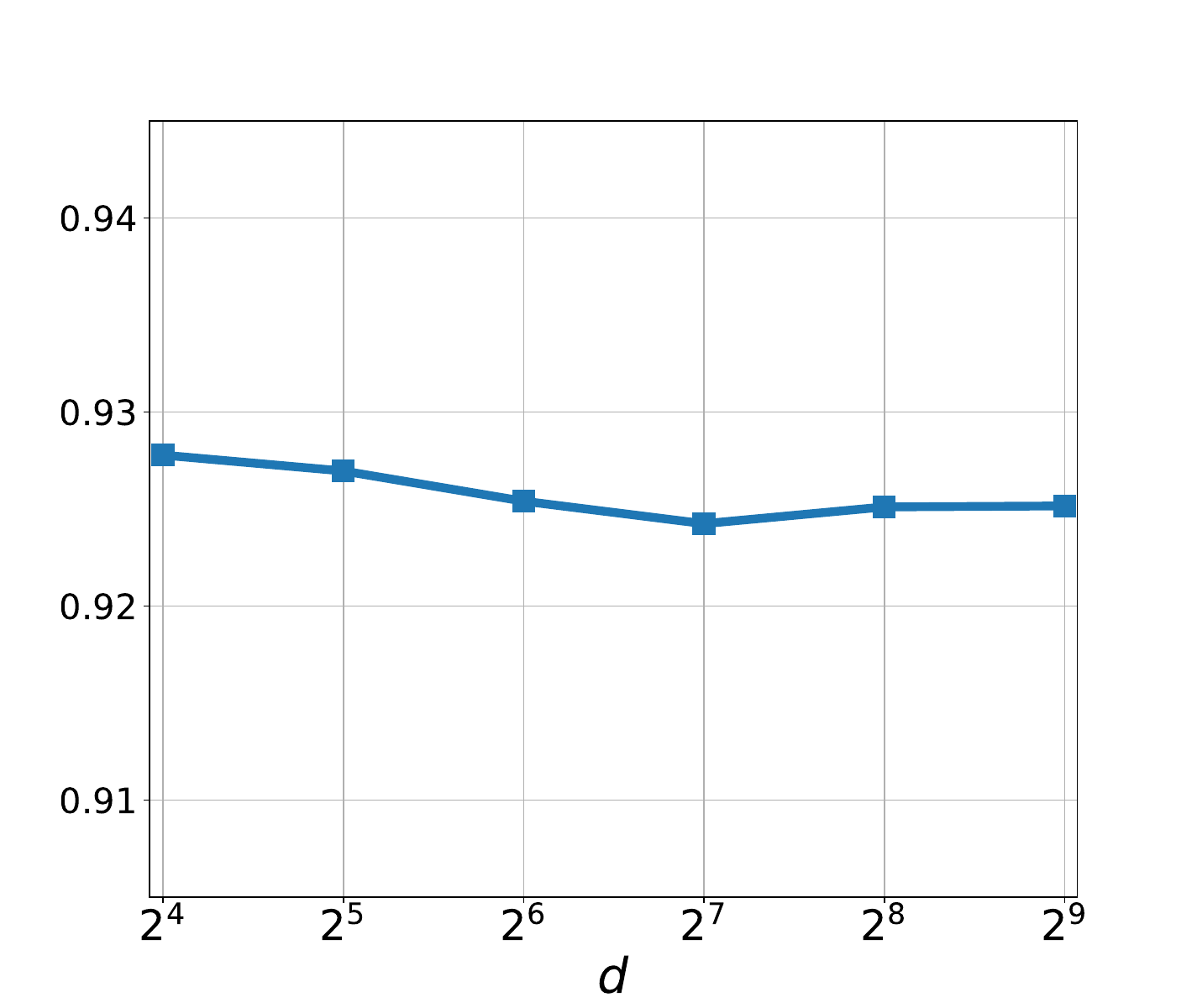}
\caption{}
\end{subfigure}
\begin{subfigure}[t]{0.24\textwidth}
\centering
\includegraphics[scale=0.2]{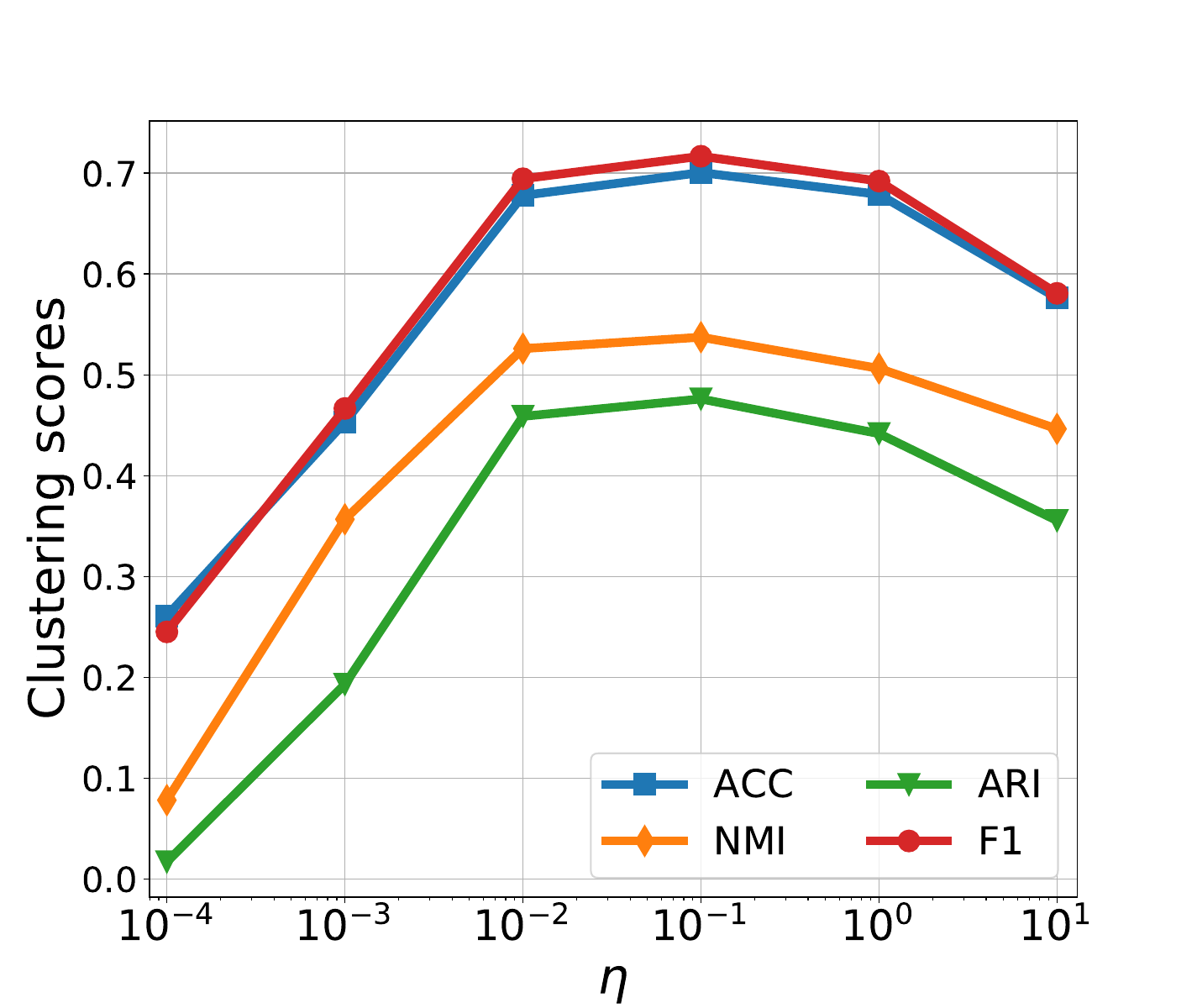}
\caption{}
\end{subfigure}
\begin{subfigure}[t]{0.24\textwidth}
\centering
\includegraphics[scale=0.2]{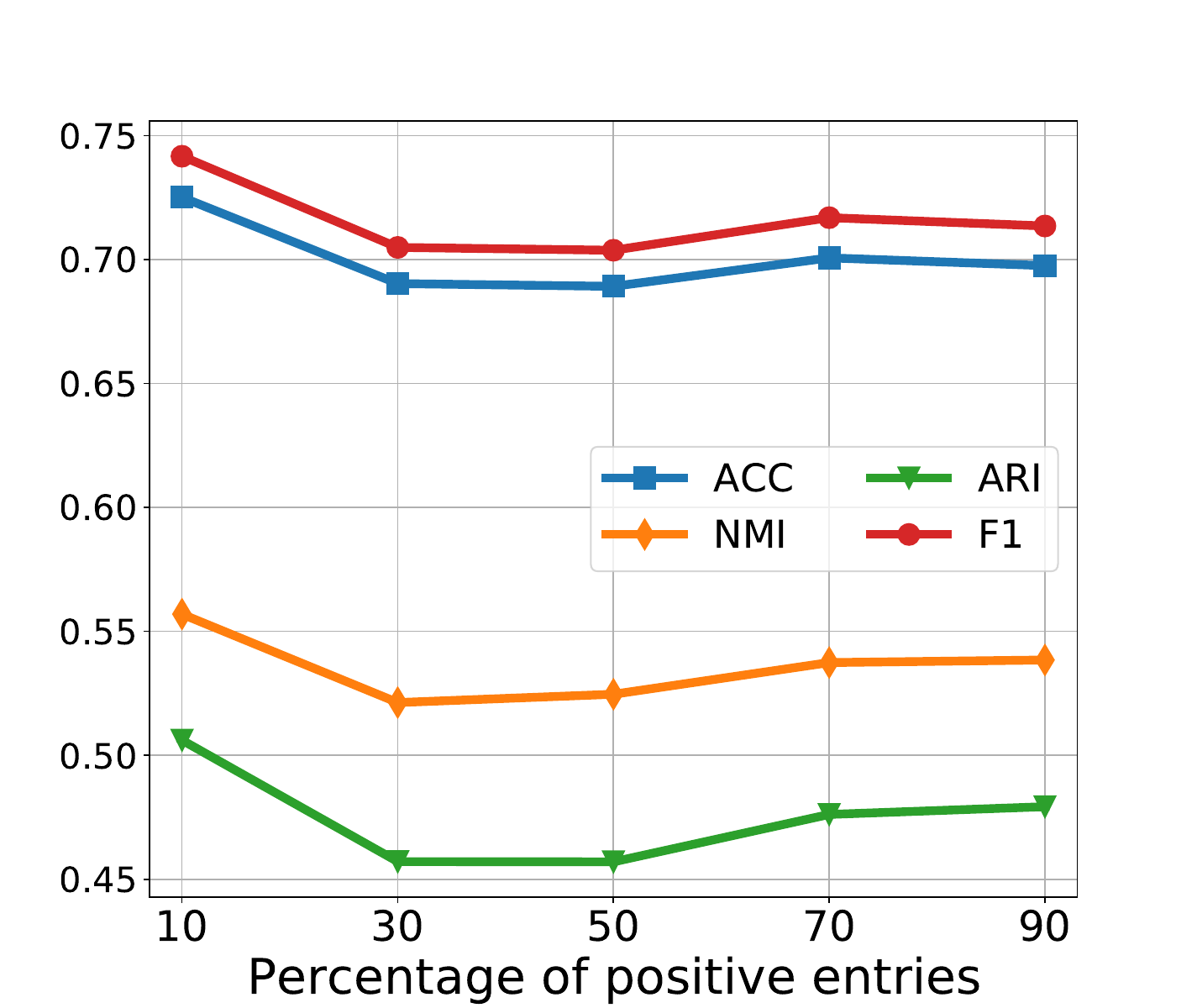}
\caption{}
\end{subfigure}
\begin{subfigure}[t]{0.24\textwidth}
\centering
\includegraphics[scale=0.2]{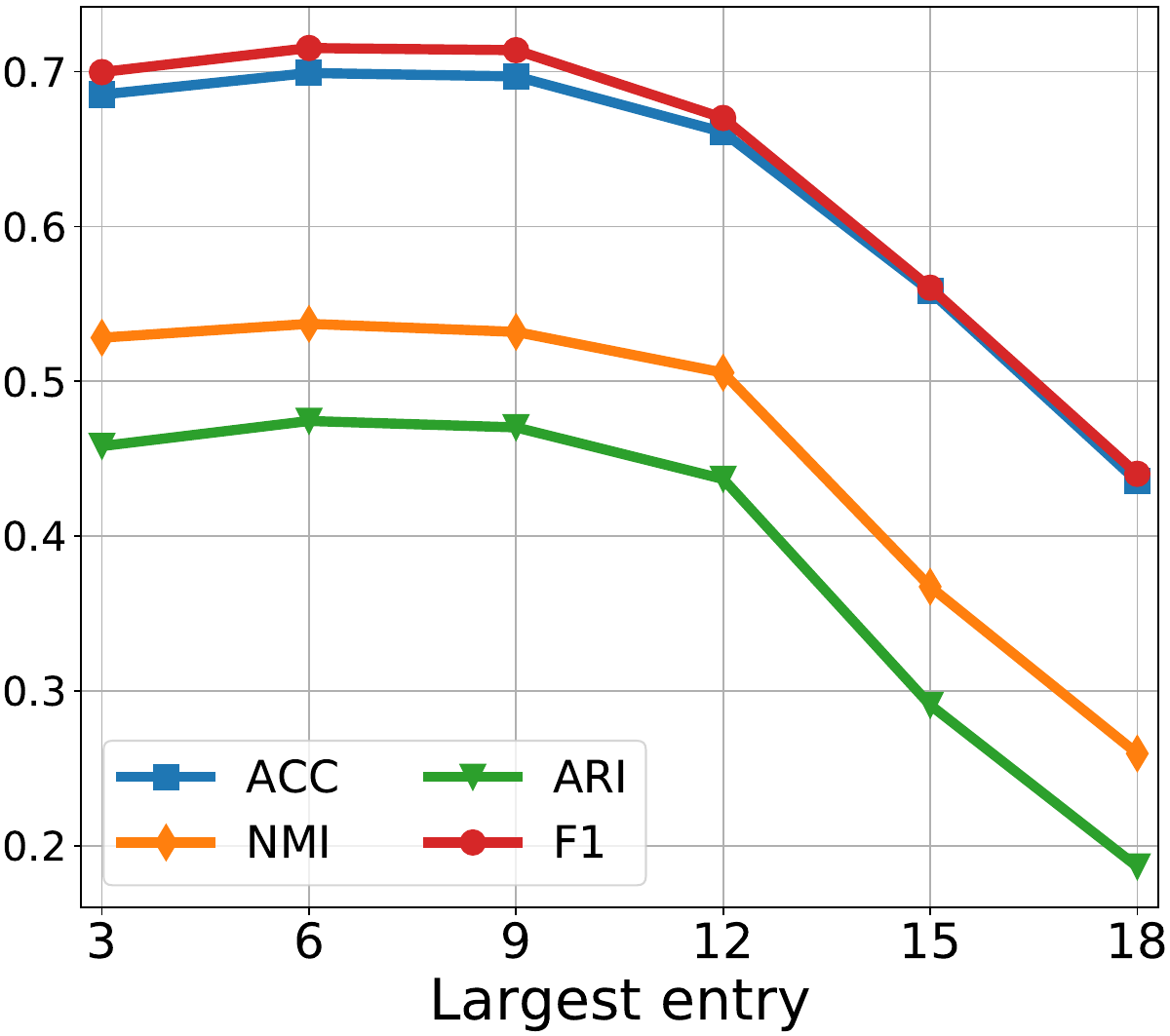}
\caption{}
\end{subfigure}
\begin{subfigure}[t]{0.24\textwidth}
\centering
\includegraphics[scale=0.2]{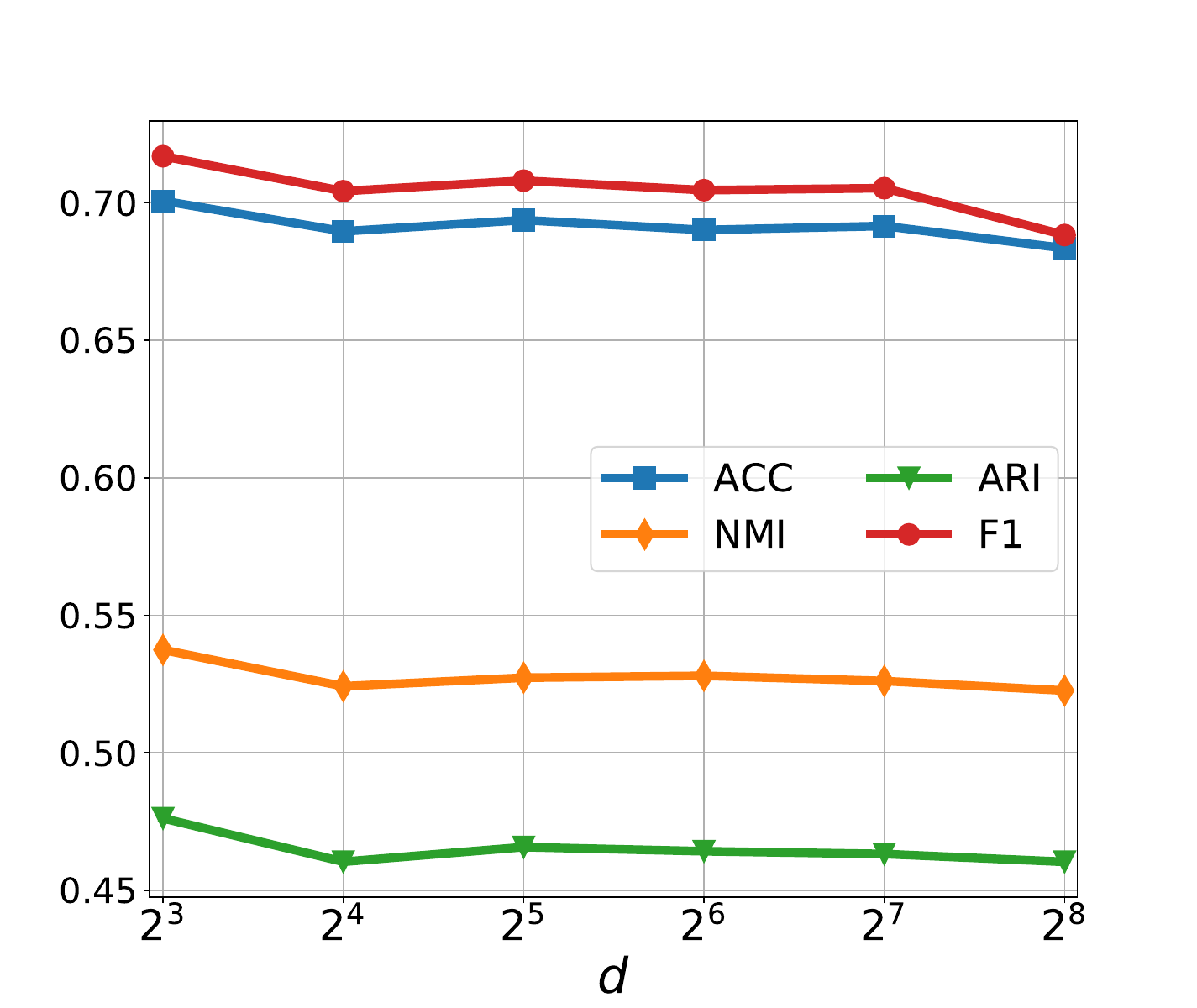}
\caption{}
\end{subfigure}
\caption{\small Parameter sensitivity test on the Cora network. (a)-(d) are for node classification where the legend denotes the fraction of labeled nodes. (e)-(h) are for link prediction where the Hadamard operator is applied. (i)-(l) are for node clustering.}
\label{fig:cora_para}
\end{figure*}

\begin{figure*}[!htbp]
\centering
\begin{subfigure}[t]{0.24\textwidth}
\centering
\includegraphics[scale=0.2]{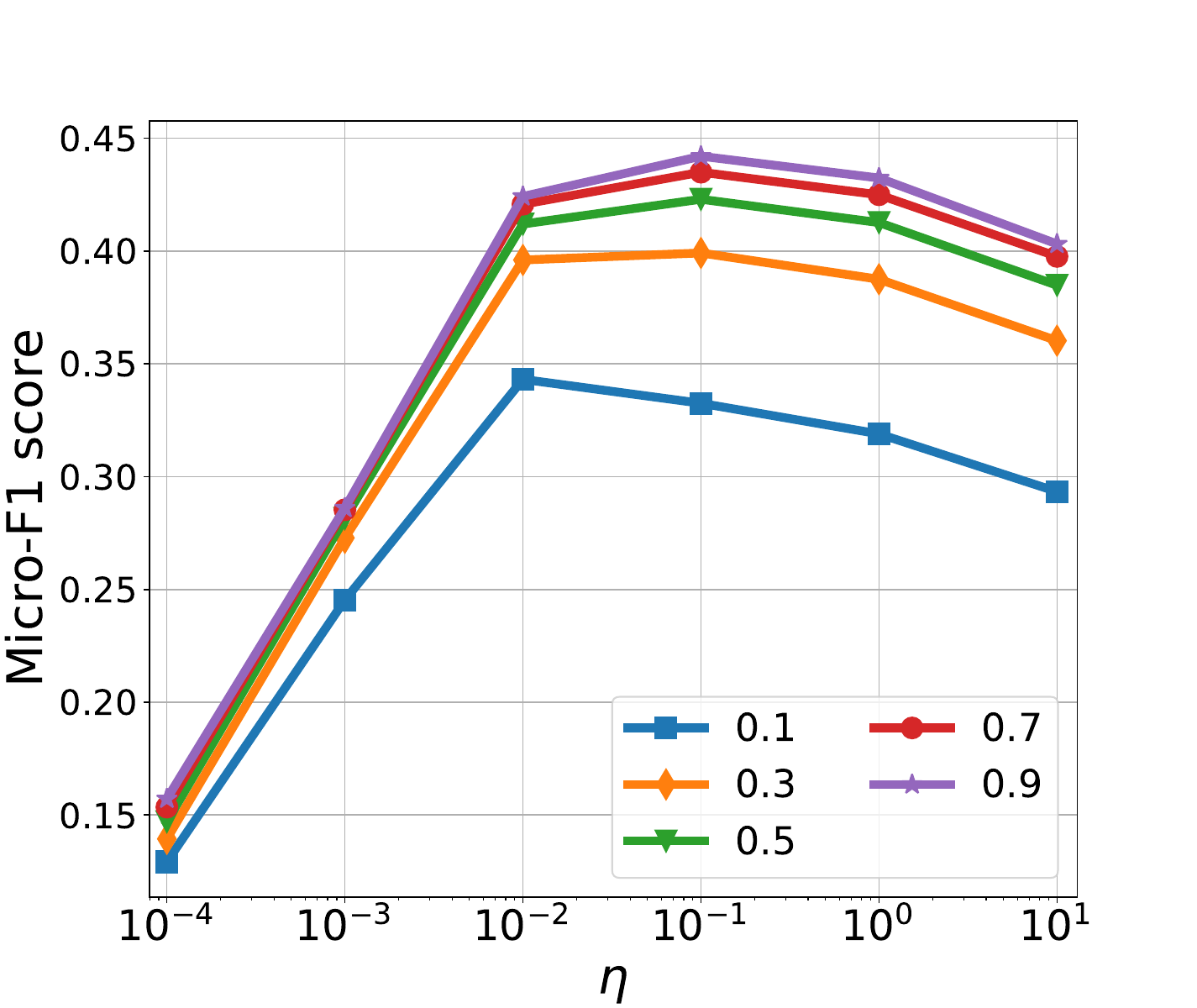}
\caption{}
\end{subfigure}
\begin{subfigure}[t]{0.24\textwidth}
\centering
\includegraphics[scale=0.2]{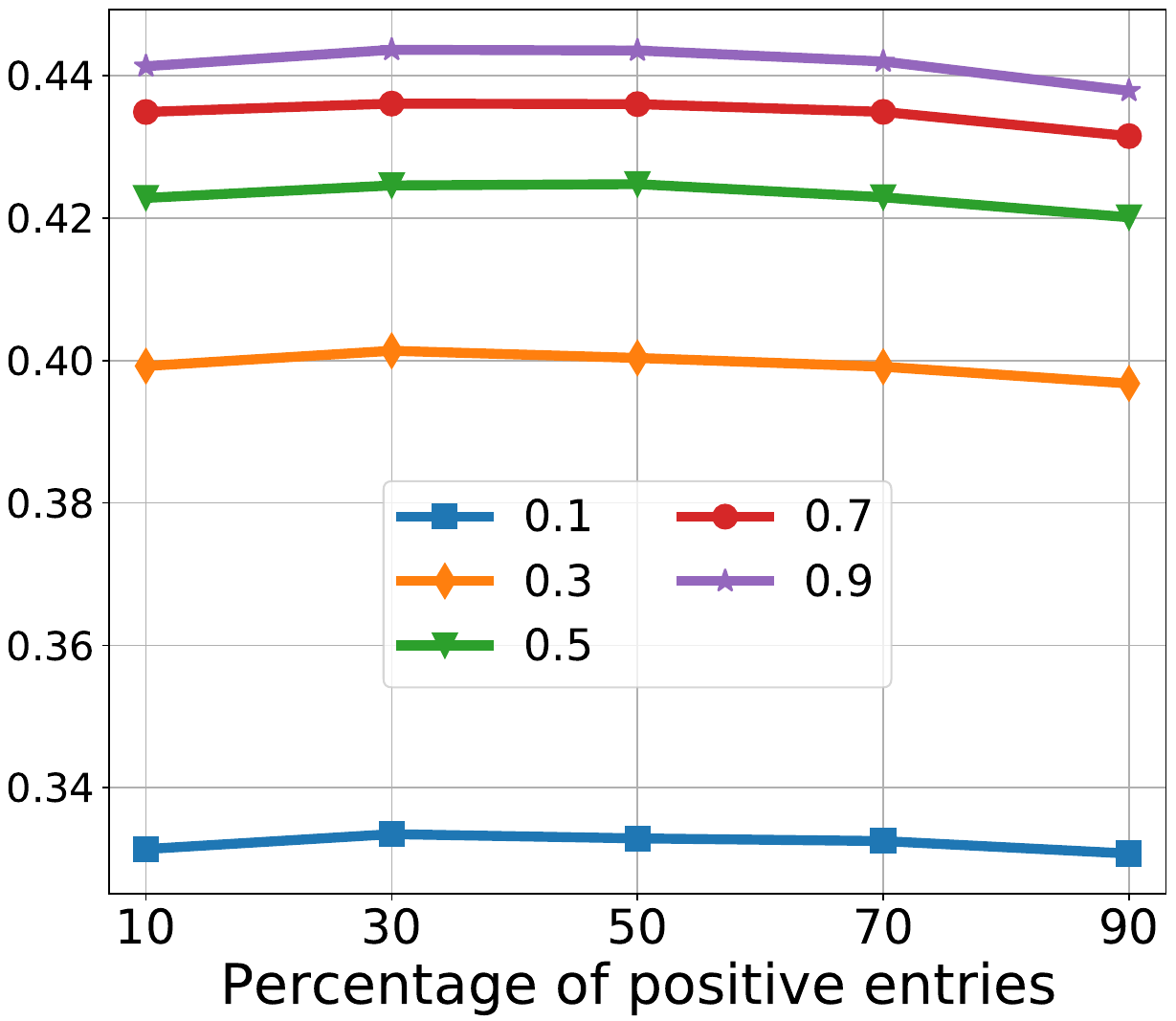}
\caption{}
\end{subfigure}
\begin{subfigure}[t]{0.24\textwidth}
\centering
\includegraphics[scale=0.2]{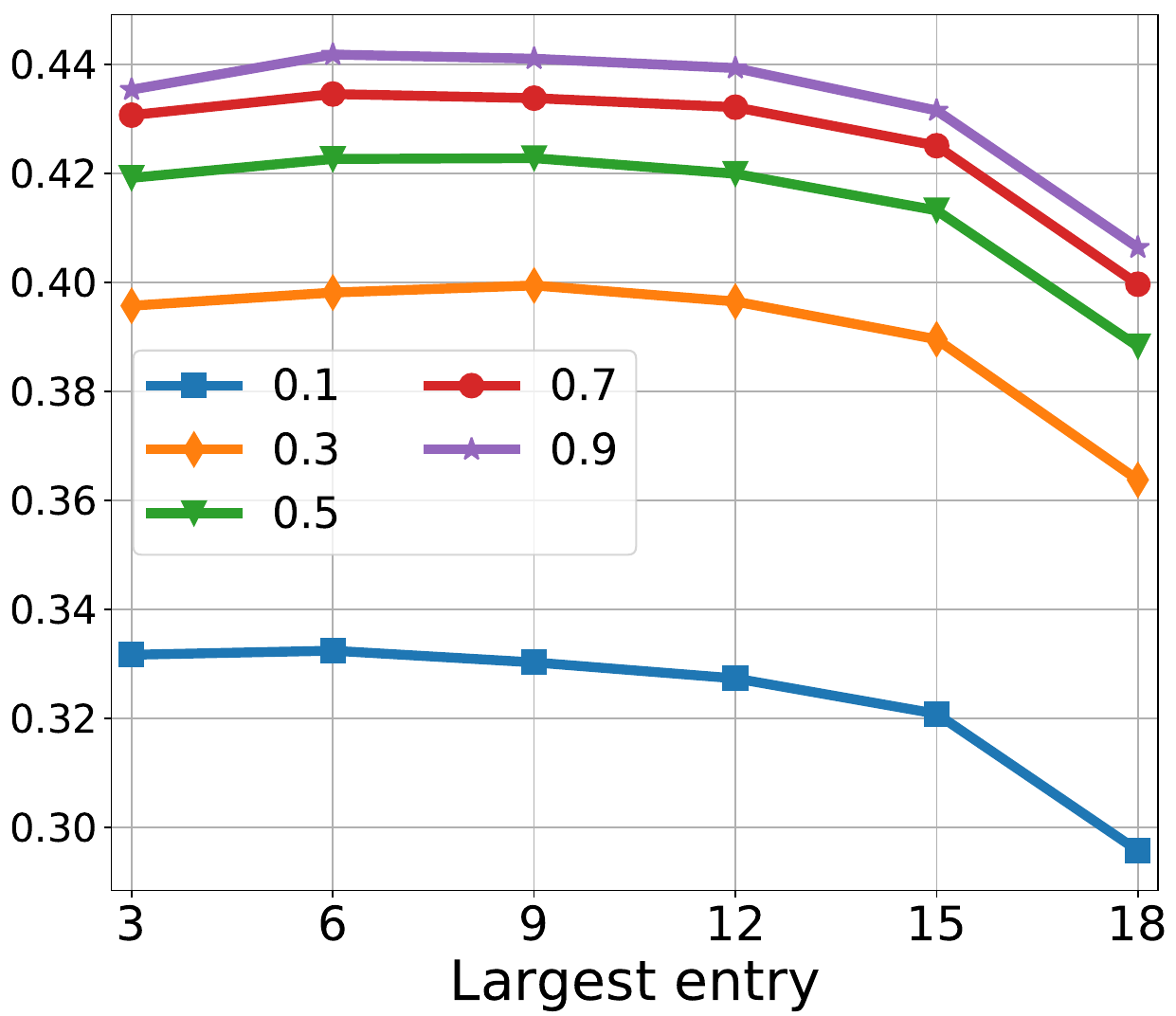}
\caption{}
\end{subfigure}
\begin{subfigure}[t]{0.24\textwidth}
\centering
\includegraphics[scale=0.2]{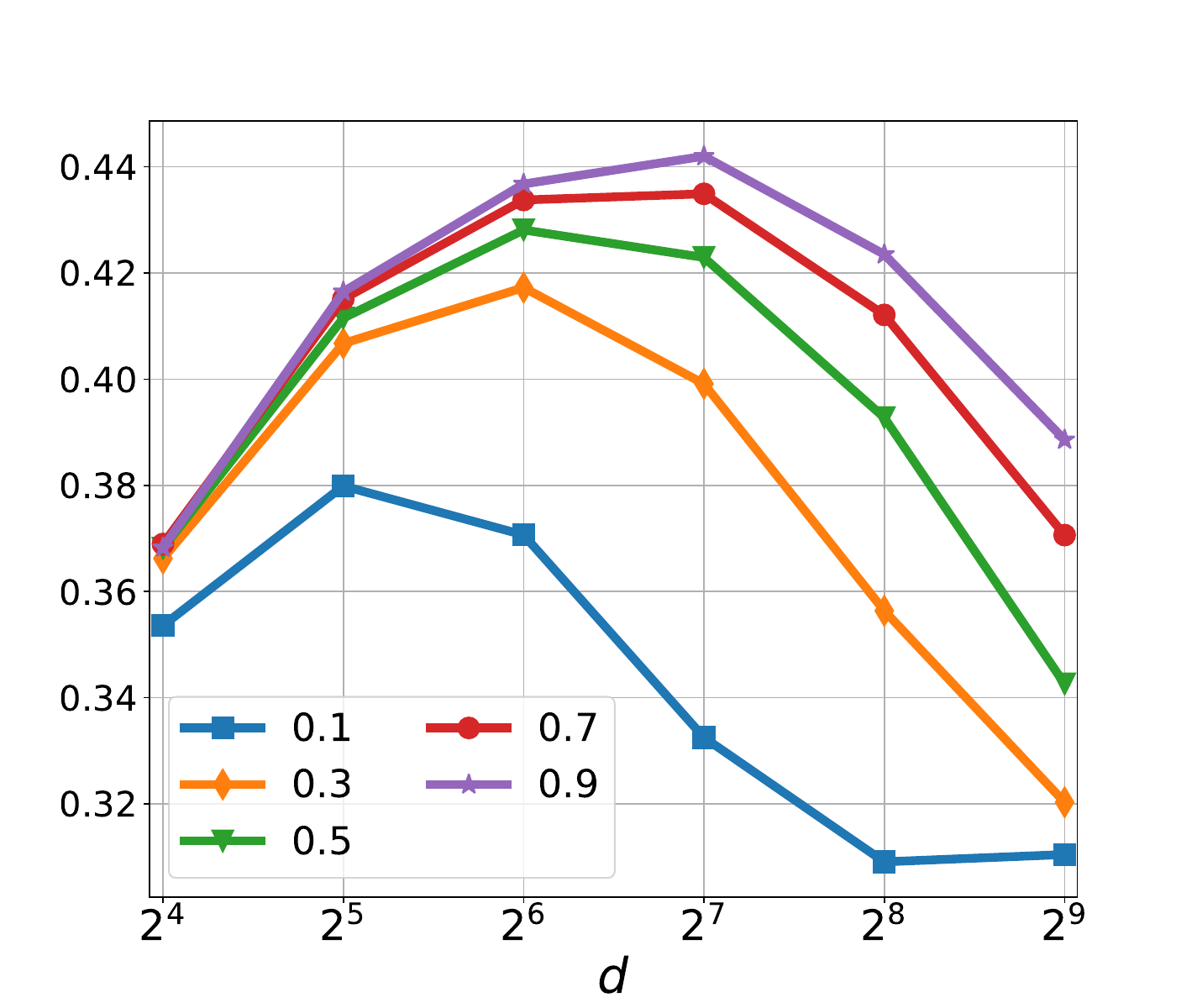}
\caption{}
\end{subfigure}
\begin{subfigure}[t]{0.24\textwidth}
\centering
\includegraphics[scale=0.2]{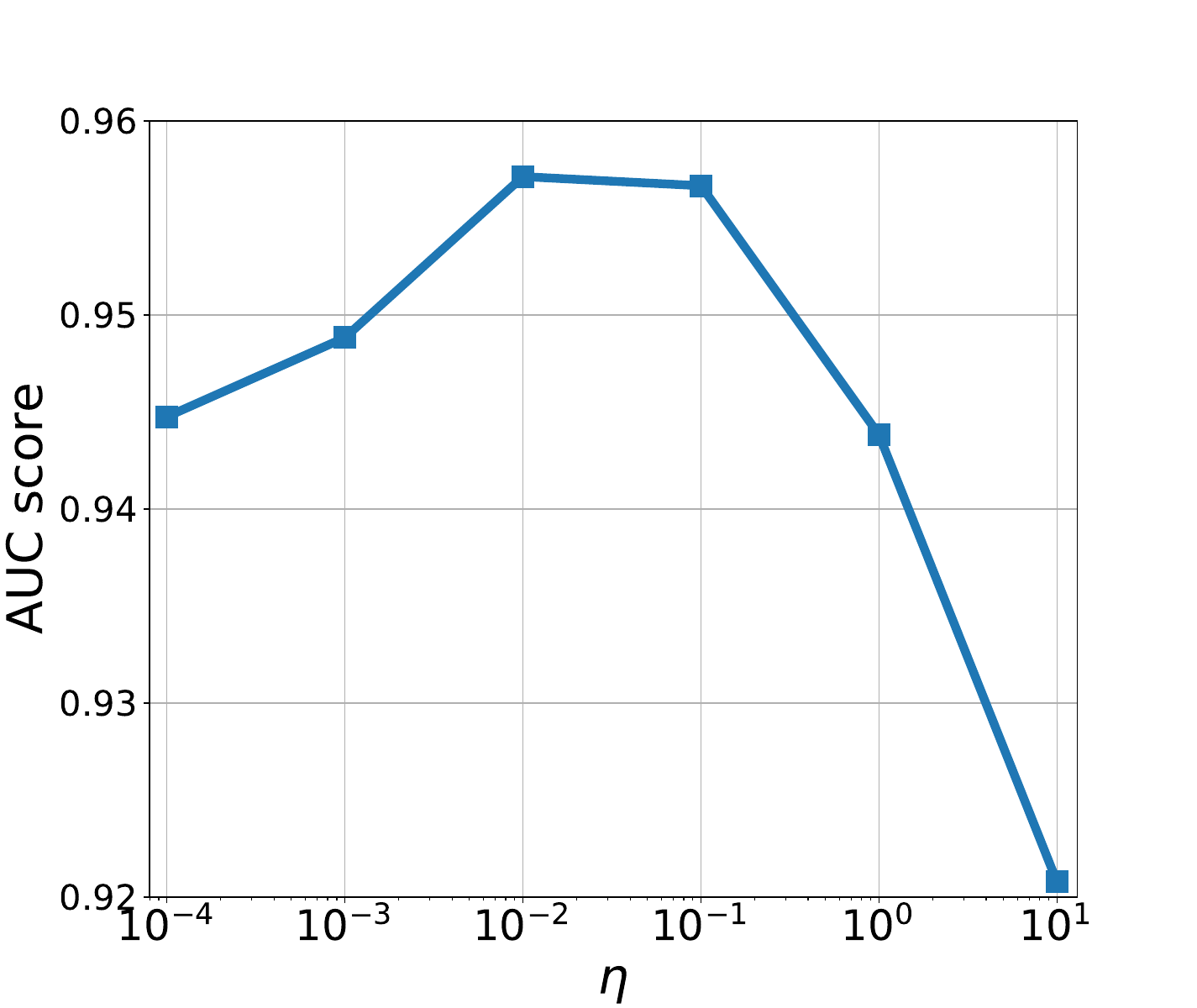}
\caption{}
\end{subfigure}
\begin{subfigure}[t]{0.24\textwidth}
\centering
\includegraphics[scale=0.2]{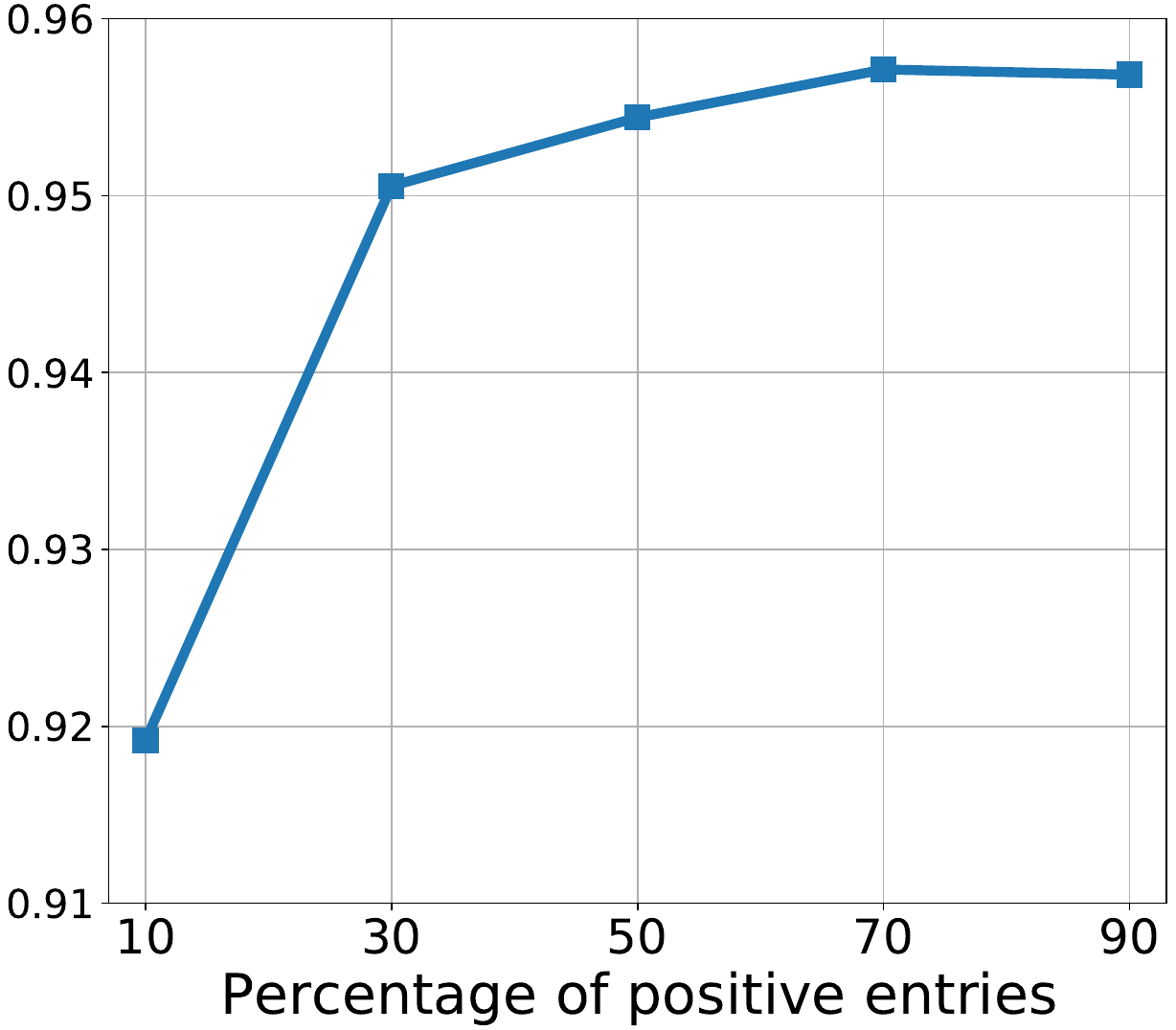}
\caption{}
\end{subfigure}
\begin{subfigure}[t]{0.24\textwidth}
\centering
\includegraphics[scale=0.2]{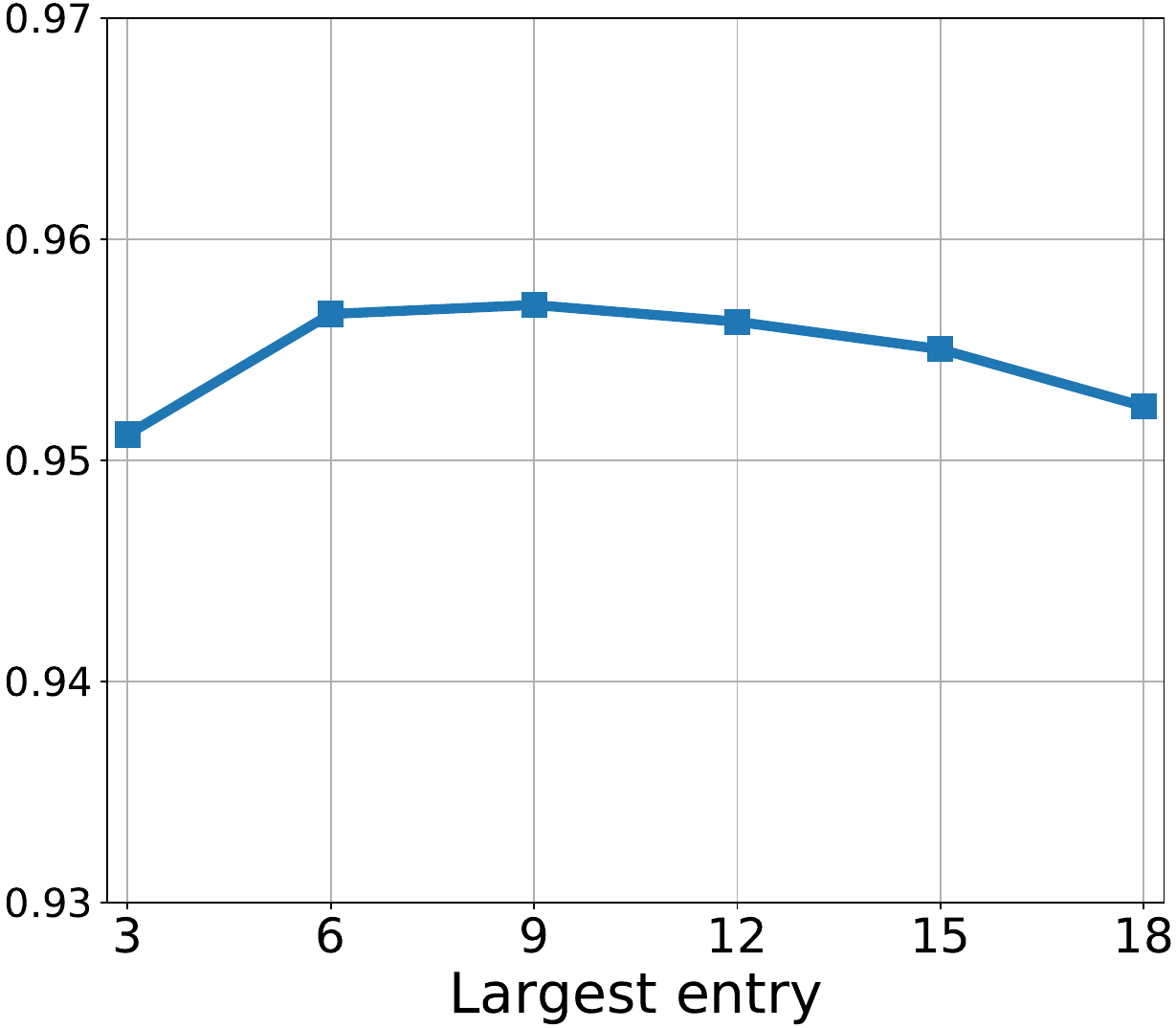}
\caption{}
\end{subfigure}
\begin{subfigure}[t]{0.24\textwidth}
\centering
\includegraphics[scale=0.2]{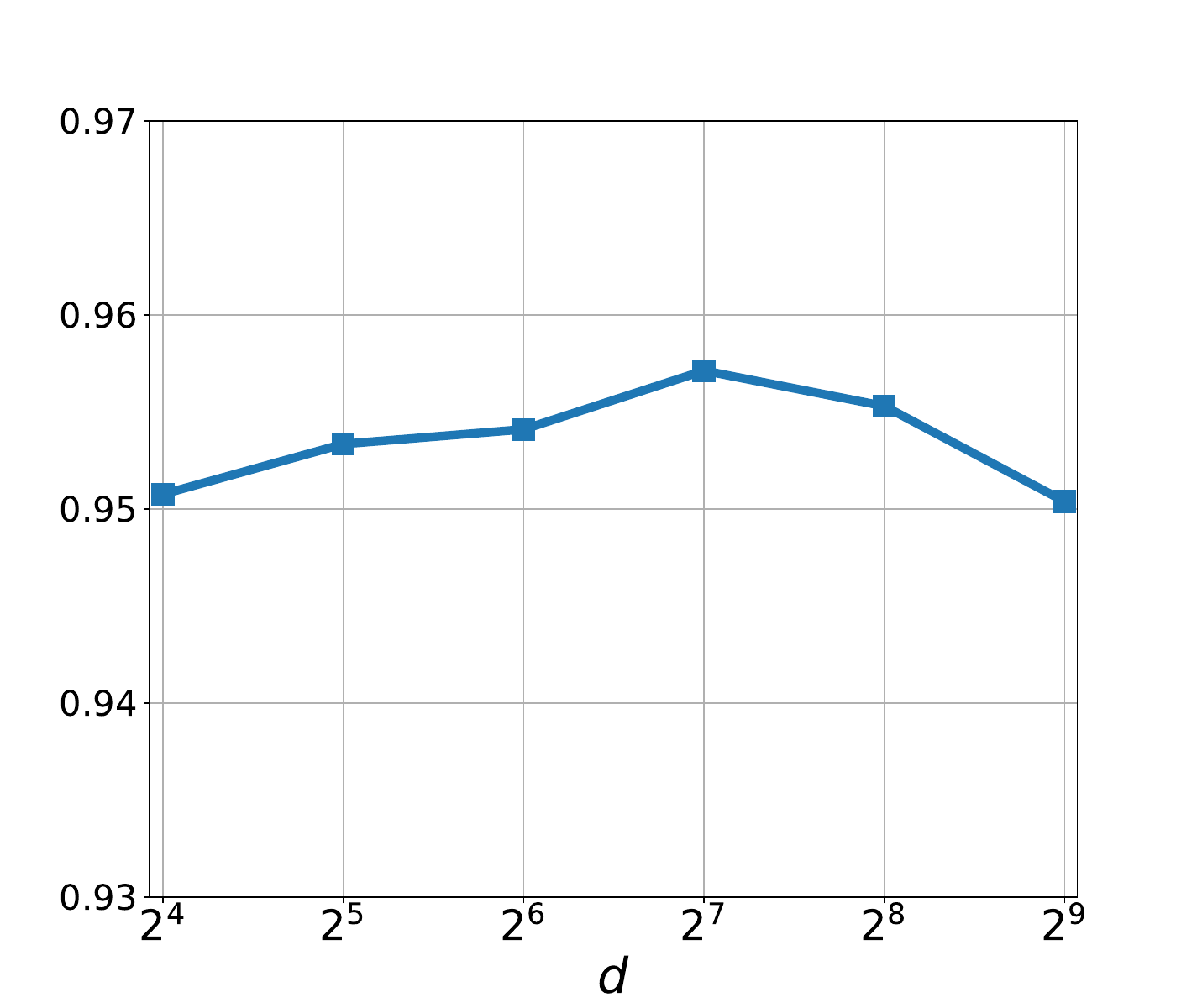}
\caption{}
\end{subfigure}
\caption{\small Parameter sensitivity test on the BlogCatalog network. (a)-(d) are for node classification where the legend denotes the fraction of labeled nodes. (e)-(h) are for link prediction where the Hadamard operator is applied. (i)-(l) are for node clustering.}
\label{fig:blogcatalog_para}
\end{figure*}

\end{titlepage}

\end{document}